\documentclass[11pt]{article}
\usepackage[utf8]{inputenc}
\usepackage[T1]{fontenc}
\usepackage[margin=1in]{geometry}
\usepackage{times}

\usepackage{amsmath,amssymb,amsthm,mathtools}
\usepackage{booktabs,tabularx,float,microtype,xcolor,needspace}
\usepackage{graphicx,rotating,caption,subcaption,enumitem}

\usepackage[authoryear,square]{natbib}
\definecolor{DarkRed}{rgb}{0.368,0.097,0.078}
\definecolor{DarkBlue}{rgb}{0,0,0.5}

\usepackage[
    colorlinks=true,
    linkcolor=DarkBlue,
    citecolor=DarkBlue,
    urlcolor=DarkRed
]{hyperref}

\usepackage[nameinlink,capitalize,noabbrev]{cleveref}

\newtheorem{theorem}{Theorem}
\newtheorem{lemma}[theorem]{Lemma}
\newtheorem{corollary}[theorem]{Corollary}
\newtheorem{proposition}[theorem]{Proposition}
\theoremstyle{definition}
\newtheorem{assumption}[theorem]{Assumption}
\newtheorem{definition}[theorem]{Definition}

\crefname{assumption}{Assumption}{Assumptions}
\Crefname{assumption}{Assumption}{Assumptions}

\newcommand{\A}{\mathcal A}
\newcommand{\C}{\mathcal C}
\newcommand{\D}{\mathcal D}
\newcommand{\F}{\mathcal F}
\newcommand{\E}{\mathbb E}
\newcommand{\Pp}{\mathbb P}
\newcommand{\Reg}{\overline R_T}

\newcommand{\algname}[1]{\texttt{#1}}

\DeclareMathOperator{\Err}{Err}

\newfloat{algorithm}{tbp}{loa}
\floatname{algorithm}{Algorithm}
\floatstyle{ruled}
\restylefloat{algorithm}
\crefname{algorithm}{Algorithm}{Algorithms}
\Crefname{algorithm}{Algorithm}{Algorithms}
\newcounter{AlgoLine}
\newenvironment{algorithmic}{\begin{list}{\arabic{AlgoLine}:}{\usecounter{AlgoLine}
\setlength{\leftmargin}{2em}\setlength{\labelwidth}{1.5em}\setlength{\labelsep}{.5em}
\setlength{\itemsep}{.15em}\setlength{\parsep}{0pt}}}{\end{list}}
\newcommand{\State}{\item}
\newcommand{\AlgIndent}{\hspace*{1em}}

\title{Vanilla Policy Optimization Is Both Optimal and Differentially Private for Stochastic Contextual Bandits}

\author{%
Idan Attias\textsuperscript{*,1,2}\qquad
Orin Levy\textsuperscript{*,3}\qquad
Alexander Ryabchenko\textsuperscript{*,4,5}\\[0.6ex]
Yishay Mansour\textsuperscript{3,6}\qquad
Uri Stemmer\textsuperscript{3,6}
}

\begin{document}
\maketitle

\begingroup
\begin{NoHyper}
\renewcommand{\thefootnote}{}
\footnotetext{%
\textsuperscript{*}Equal contribution; alphabetical order.\\
{
\textsuperscript{1}University of Illinois Chicago;
\textsuperscript{2}Toyota Technological Institute at Chicago;
\textsuperscript{3}Tel Aviv University;
\textsuperscript{4}University of Toronto;
\textsuperscript{5}Vector Institute;
\textsuperscript{6}Google Research.
}
}
\end{NoHyper}
\endgroup

\begin{abstract}
Can vanilla policy optimization explore enough to achieve near-optimal
regret in stochastic contextual bandits?
We show that standard exponential policy updates driven by offline
regression do so under realizability, without exploration bonuses or
importance weighting. For $A$ actions, $T$ rounds, and a finite
prediction class $\mathcal{F}$, vanilla PO achieves
$\widetilde O(\sqrt{AT\log(|\mathcal{F}|)})$ regret with high probability. Our analysis reveals an implicit exploration mechanism of independent
interest: gradual policy updates prevent actions from losing probability
too quickly, allowing the regression oracle to learn their expected
losses.
We further develop a batched version using only $O(\log T)$ regression calls and
policy switches, and show how private regression oracles yield
differentially private contextual bandit algorithms without composition
across batches. For a finite class, this gives pure
$\varepsilon_{\rm priv}$-DP and regret
$\widetilde O\left(
\sqrt{AT
\log(|\mathcal{F}|/\delta)}(1+\varepsilon_{\rm priv}^{-1/2})
\right)$.
Finally, experiments across oracle-based contextual bandit algorithms,
with and without privacy, demonstrate the practical effectiveness of
policy optimization and the value of explicit exploration under stronger
privacy constraints.
\end{abstract}


\section{Introduction }
\label{sec:introduction}

Stochastic contextual multi-armed bandits (CMAB) are a fundamental
and extensively studied model of sequential decision making under
partial feedback \citep{agarwal2012contextual,agarwal2014,foster2018regression,falcon,levy2026}.
At each round, a learner observes a context, selects an action, and
receives feedback only for that action. The goal is to achieve low
cumulative loss while learning from this limited feedback. In the
stochastic setting, contexts and their associated action losses are
drawn independently across rounds from a fixed joint distribution.

Unlike in ordinary multi-armed bandits, the best action can vary with
the context, so the learner must learn a policy mapping contexts to
action distributions. Viewed as a reinforcement-learning problem with
one decision per episode, CMAB retains the challenge of a potentially
enormous or infinite state space. Since contexts may rarely repeat,
learning separately at each context is ineffective and the learner must
generalize across them. This makes CMAB substantially richer than
learning which of a fixed set of arms has the lowest mean loss.

This contextual information is also central to the model's practical
appeal, particularly for personalization. A context can describe a
user's preferences, the available items, or the current task, allowing
the learner to adapt its decision to the situation. Personalized news
recommendation is a prominent example: user and article features guide
which article to display, while feedback is observed only for the
displayed article \citep{li2010contextual}. More broadly, contextual
bandits provide a practical framework for adaptive decision systems
\citep{agarwal2017decisions,bietti2021bakeoff}.

A central goal is to develop computationally efficient algorithms with strong statistical guarantees. Oracle-based reductions address this goal through offline supervised learning \citep{agarwal2014}, particularly regression \citep{foster2018regression,falcon,xu2020uccb}. Related approaches reduce contextual bandit learning to online regression \citep{foster2020squarecb}.

Policy optimization (PO) is a widely used approach to reinforcement
learning, with practical successes exemplified by proximal policy
optimization \citep{schulman2017ppo}. Its basic idea is to improve a
policy directly by increasing the probability of actions predicted
to perform well. Exponential policy updates also have a
well-established theoretical foundation
\citep{abbasi2019politex,agarwal2021theory}.
Recently, \citet{levy2026} showed that PO equipped with
an exploration bonus achieves near-optimal regret for stochastic CMAB
under realizability and offline regression assumptions.
Here, function approximation provides the structure needed to
generalize across contexts: the expected loss belongs to a known
function class, and an offline regression procedure fits a predictor
from the observed feedback \citep{foster2018regression,falcon}.

The exploration bonus encourages actions for which the learner has
insufficient information, but computing it can be costly. In the method
of \citet{levy2026}, evaluating the bonus at a new context requires
reconstructing historical action probabilities from past predictors,
which can become prohibitively expensive over long horizons. This
raises the question of whether explicit bonuses are necessary at all:
perhaps vanilla exponential updates, which simply favor actions with
lower predicted losses, already provide sufficient exploration.

Returning to the personalization motivation, an effective learning algorithm must also protect the people whose data it uses. Contexts and feedback can contain sensitive information whose influence may persist in later predictors and policies.
Differential privacy provides a formal way to limit this influence
\citep{dwork2006calibrating,dwork-roth-2014}, and has been studied in contextual linear
bandits \citep{shariff2018private,chen2025private} and episodic
reinforcement learning \citep{vietri2020private}.
Closely related work studies the sample complexity of private PO,
including methods based on private regression
\citep{he2025privatepo}.
Recent work also establishes regret guarantees for private
reinforcement learning with general function approximation
\citep{he2026general}.
We discuss how our approach differs from these works in its
feedback model, regression requirements, and theoretical
guarantees in \Cref{app:related-work}.

Our goal is to design efficient contextual bandit algorithms that simultaneously achieve strong regret guarantees, support general function approximation, and protect the data on which they learn. We seek algorithms that access the function class through a suitable private regression oracle. Vanilla policy optimization is a particularly appealing candidate for this approach: it is simple, computationally practical, and broadly compatible with modern regression methods.

Together, these considerations lead to our central questions:
\emph{(1) Can vanilla policy optimization achieve near-optimal regret
in stochastic contextual bandits without explicit exploration bonuses?
(2) Can it be made differentially private through private offline
regression while retaining near-optimal regret?}

\paragraph{Our contributions.}
\begin{enumerate}

\item \textbf{Near-optimal regret for vanilla PO (\Cref{sec:algorithm-results}).}
For $A$ actions, $T$ rounds, and a finite realizable function class
$\F$, vanilla PO with least-squares regression achieves
$\widetilde O(\sqrt{AT\log(|\F|/\delta)})$ regret with probability
at least $1-\delta$. Our analysis reveals an implicit exploration
mechanism: gradual updates preserve enough historical action probability
that persistent suboptimal actions require substantial squared prediction
error, which is controlled by the regression guarantee.

\item \textbf{Batched and private PO
(\Cref{sec:batched-implementation,sec:private}).}
We develop batched versions of vanilla PO and the bonus-based
method of \citet{levy2026}, retaining near-optimal regret with
only $O(\log T)$ regression calls.
Replacing regression by a private oracle protects the complete
sequence of released predictors and policy maps, without a
composition factor in the number of epochs.
For finite realizable $\F$, the exponential mechanism yields
pure $\varepsilon_{\rm priv}$-DP and regret
$\widetilde O\!\left(
\sqrt{AT\log(|\F|/\delta)}
(1+\varepsilon_{\rm priv}^{-1/2})
\right)$
with probability at least $1-\delta$.
More generally, our guarantees accommodate realizable function
classes through a suitable private regression oracle, beyond
the linear setting studied under joint differential privacy
by \citet{shariff2018private}.

\item \textbf{Private instantiations of other CMAB algorithms
(\Cref{sec:private}).}
The same privacy argument applies to other oracle-based CMAB
algorithms that collect each batch under a fixed policy and use
it in only one regression fit.
When subsequent policies depend only on fitted predictors and
public parameters, replacing the regression oracle with a private
one yields private counterparts while preserving the policy
construction.
We instantiate and evaluate these additional private algorithms
in our experiments.

\item \textbf{Empirical evaluation (\Cref{sec:experiments}).}
We compare vanilla PO, exploration-enhanced PO, and several private and
non-private oracle-based CMAB algorithms under linear and logistic regression.
The experiments show that the privatization scheme is broadly effective,
that vanilla PO is a simple and competitive practical method, and that
additional exploration can improve the privacy--utility tradeoff.
\end{enumerate}

\section{Contextual multi-armed bandit preliminaries}
\label{sec:setup}

We consider a stochastic contextual bandit problem
\citep{agarwal2012contextual,falcon} with context space $\C$
and a finite action set $\A$ of size $A\ge2$.
A policy $\pi:\C\to\Delta(\A)$ maps each context to a distribution
over actions. Let $\pi(c,a)$ denote the probability that $\pi(c)$
plays action $a$.
The stochastic loss of action $a$ for context $c$ in round $t$ is denoted
by $\ell_t(c,a)\in[0,1]$.
We assume that the context--loss pairs
$(c_t,\ell_t(c_t,\cdot))$ are drawn i.i.d. from a fixed unknown joint
distribution, independently of the learner's randomness.
We write $\D$ for its marginal distribution over contexts.

Let $\mathcal H_{t-1}$ denote the history of previous contexts,
actions, and observed losses, together with any random choices
already used by the learner. At each round $t\le T$, the learner:
(i) chooses a policy $\pi_t$ based on $\mathcal H_{t-1}$;
(ii) observes a fresh context $c_t\sim\D$;
(iii) samples $a_t\sim\pi_t(c_t,\cdot)$; and
(iv) observes only the loss $\ell_t(c_t,a_t)$ of the selected action.

Define the expected loss function by
$f^\star(c,a)=\E[\ell_t(c_t,a)\mid c_t=c]$,
for $c\in\C$ and $a\in\A$.
This function does not depend on $t$, since the joint distribution
is the same in every round.
Let $\F\subseteq[0,1]^{\C\times\A}$ be a known class of predictors.

\begin{assumption}[Realizability]
\label{ass:realizability}
The expected loss function belongs to the function class:
$f^\star\in\F$.
\end{assumption}
Let $a^\star(c)\in\arg\min_{a\in\A}f^\star(c,a)$, with ties broken
by a fixed ordering, and define the action gap
$\Delta(c,a)=f^\star(c,a)-f^\star(c,a^\star(c))$.
The expected one-round regret of a policy $\pi$ is denoted by 
$\mathcal R(\pi)
=\E_{c\sim\D}\bigl[
\sum_{a\in\A}\pi(c,a)\Delta(c,a)
\bigr]$.
We measure performance by the cumulative regret
\begin{equation}
 \Reg
 =\sum_{t=1}^T\sum_{a\in\A}
   \pi_t(c_t,a)\Delta(c_t,a)
 =\sum_{t=1}^T
   \mathop{\E}\limits_{a_t\sim\pi_t(c_t,\cdot)}
   \left[
     \Delta(c_t,a_t)\mid\mathcal H_{t-1},c_t
   \right].
 \label{eq:pseudo-regret}
\end{equation}
The expectation averages over the current action, with the
history and current context held fixed.
The observed contexts and learned policies remain random.
Our analyses first bound $\sum_{t=1}^T\mathcal R(\pi_t)$,
which averages each learned policy's regret over the context
distribution $\D$.
The concentration argument in \cref{app:concentration}
then yields high-probability bounds on $\Reg$, evaluated
at the observed contexts.

\paragraph{Offline regression oracle.}
The learner estimates $f^\star$ using an offline regression oracle.
For a policy $\pi$, define the mean squared prediction error
$
 \|f-f^\star\|_{\pi}^2
 =\E_{c\sim\D}\!\left[
   \sum_{a\in\A}\pi(c,a)
   \bigl(f(c,a)-f^\star(c,a)\bigr)^2
 \right].
$
When the predictor is refitted after each round using all
observations collected so far, we require the following accuracy
guarantee under the policies used to collect those observations.

\begin{assumption}[Regression-oracle accuracy]
\label{ass:oracle}
Let $\widehat f_t:\C\times\A\to[0,1]$ be the predictor returned
by the oracle after round $t$.
For every $\delta\in(0,1)$, with probability at least $1-\delta$,
$\sum_{s=1}^t\|\widehat f_t-f^\star\|_{\pi_s}^2
\le\Err_{\rm orc}(T,\delta)$
simultaneously for all $t\le T$,
where $\Err_{\rm orc}(T,\delta)$ is a known error bound fixed
before the interaction.
\end{assumption}
Equivalently, $\widehat f_t$ has mean squared error at most
$\Err_{\rm orc}(T,\delta)/t$ under the average of the policies
used to collect its training data.
For a finite class $\F$, least-squares empirical risk minimization
(ERM),
$\widehat f_t\in\arg\min_{f\in\F}
\sum_{s=1}^t
\bigl(f(c_s,a_s)-\ell_s(c_s,a_s)\bigr)^2$,
satisfies \cref{ass:oracle} with
$\Err_{\rm orc}(T,\delta)=O(\log(|\F|/\delta))$.
The proof is in \cref{app:regression}.

Differential privacy definitions appear in \cref{sec:private}.
We use $\widetilde O(\cdot)$ to suppress logarithmic factors.

\section{Near-optimal regret for vanilla policy optimization}
\label{sec:algorithm-results}
We consider the standard policy-optimization update based on
exponential weights
\citep{lattimore2019bandit,Slivkins2019,agarwal2019reinforcement}.
Starting from the uniform policy, at each round $t$ the learner
observes $c_t$, samples an action from $\pi_t(c_t,\cdot)$, and
observes its loss. It then fits a predictor $\widehat f_t$ using
all observations collected so far and defines $\pi_{t+1}$ by
exponentially downweighting actions with larger predicted losses.
The algorithm has a simple structure and requires no additional
computations for explicit exploration. Its batched implementation
in \cref{sec:batched-implementation} is also computationally lightweight.
To our knowledge, vanilla PO is the first policy-optimization
method shown to achieve near-optimal regret in stochastic
contextual bandits without an explicit exploration bonus.

\begin{algorithm}[ht]
\caption{Vanilla Policy Optimization (\textsc{VanillaPO})}
\label{alg:plain-po}
\begin{algorithmic}
\State \textbf{Input:} $T$, $\A$, $\eta\in(0,1]$, regression oracle.
\State Initialize $\pi_1(c,a)=1/A$ for all $(c,a)\in\C\times\A$.
\State \textbf{for} $t=1,\ldots,T$ \textbf{do}
\State \AlgIndent Observe $c_t$, sample $a_t\sim\pi_t(c_t,\cdot)$, and observe
$\ell_t(c_t,a_t)$.
\State \AlgIndent Fit
$\widehat f_t\gets
\operatorname{Oracle}(\{(c_s,a_s,\ell_s(c_s,a_s))\}_{s=1}^t)$.
\State \AlgIndent For all $(c,a)\in\C\times\A$:
\(
\pi_{t+1}(c,a)
\propto
\pi_t(c,a)\exp\{-\eta\widehat f_t(c,a)\}.
\)
\State \textbf{end for}
\end{algorithmic}
\end{algorithm}

The algorithm enjoys near-optimal regret as stated next.
\begin{theorem}[Regret of vanilla PO]
\label{thm:general-oracle}
Suppose the offline regression oracle satisfies \cref{ass:oracle}.
Fix $\delta\in(0,1/2]$.
Then vanilla PO with learning rate
$\eta=\sqrt{(1+\log(AT))/
\bigl(AT(1+\Err_{\rm orc}(T,\delta/2))\bigr)}$
satisfies, with probability at least $1-\delta$,
$
 \Reg
 =O\!\left(
   \sqrt{AT\bigl(1+\Err_{\rm orc}(T,\delta/2)\bigr)}
   \bigl(1+\log(AT)\bigr)^{3/2}
   +\sqrt{T\log(2/\delta)}
 \right).
$
In particular, for a finite realizable class $\F$, using
least-squares ERM gives
$\Reg=\widetilde O\!\left(\sqrt{AT\log(|\F|/\delta)}\right)$.
\end{theorem}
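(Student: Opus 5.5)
Since everything factors over contexts, I will work context by context and then average over $\D$. For each context $c$, the policy is exponential weights on the predicted losses, so $\pi_{t}(c,a)\propto\exp\{-\eta\sum_{s<t}\widehat f_s(c,a)\}$. The plan is to decompose the one-round regret
\[
\mathcal R(\pi_t)=\E_{c\sim\D}\Bigl[\sum_a\pi_t(c,a)\bigl(\widehat f_{t}(c,a)-\widehat f_t(c,a^\star(c))\bigr)\Bigr]+\E_{c}\Bigl[\sum_a\pi_t(c,a)\bigl(f^\star-\widehat f_t\bigr)(c,a)\Bigr]+\E_c\bigl[(\widehat f_t-f^\star)(c,a^\star(c))\bigr].
\]
For the first term I will use the standard Hedge/mirror-descent bound run pointwise in $c$ with losses $\widehat f_t(c,\cdot)\in[0,1]$, comparing against $a^\star(c)$. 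Summed over $t$, it is at most $\log A/\eta+\eta T$, up to an index shift between $\pi_t$ and $\widehat f_t$. That shift costs at most $\eta$ per round, because $\pi_{t+1}/\pi_t\in[e^{-\eta},e^{\eta}]$.

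The second term is an on-policy error. I will handle it by relating $\pi_t$ to the average of $\pi_1,\dots,\pi_t$ and using Cauchy--Schwarz with \cref{ass:oracle}. Concretely, $\sum_a\pi_t|\widehat f_t-f^\star|\le\sqrt{\|\widehat f_t-f^\star\|^2_{\pi_t}}$, and $\|\cdot\|^2_{\pi_t}\le\sum_{s\le t}\|\cdot\|^2_{\pi_s}\le\Err_{\rm orc}$. Summing this naively over $t$ gives $T\sqrt{\Err}$, which is far too large. Instead I will use a doubling or elliptical-potential style argument: the cumulative weights $\sum_{s\le t}\pi_s(c,a)$ grow like those in the bonus-based analysis of \citet{levy2026}, so $\sum_t \pi_t(c,a)\,|\widehat f_t-f^\star|(c,a)$ is bounded by
\[
\sqrt{\sum_t \frac{\pi_t(c,a)}{\sum_{s\le t}\pi_s(c,a)}}\cdot\sqrt{\sum_t \Err/1}
\]
in the appropriate form. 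The log-potential lemma $\sum_t x_t/\sum_{s\le t}x_s\le 1+\log(\cdot)$ then gives $\sqrt{AT\Err\log(AT)}$ after summing over $a$. To make this legitimate, the denominator must be at least $1/(AT)$ or so. I will enforce this by restricting to actions whose cumulative probability exceeds $1/T$, and the remaining actions contribute $O(A)$.

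The third term, which evaluates the error at $a^\star(c)$, is where the implicit exploration enters, and I expect it to be the main obstacle. If $a^\star$ has small probability, its error is not controlled by the oracle. The key claim I will aim for is the following. Because $\pi_t(c,a^\star)$ can only shrink by a factor $e^{-\eta}$ per round, it shrinks only when $\widehat f_s(c,a^\star)$ is overestimated relative to the $\pi_s$-average. Persistent overestimation of $a^\star$ while its probability stays above $e^{-\eta k}\pi_{t_0}$ forces squared error $\gtrsim$ gap$^2$ weighted by historical mass $\sum_{s}\pi_s(c,a^\star)$. That mass is large, since $\pi$ moved slowly from its earlier value. This contradicts \cref{ass:oracle}. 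I would formalize it by noting that only overestimation of $a^\star$ hurts, so the term is bounded by $\E_c[(\widehat f_t-f^\star)_+(c,a^\star)]$. I would then write $\log(\pi_t(c,a^\star)A)\ge-\eta\sum_{s<t}\bigl[(\widehat f_s-f^\star)(c,a^\star)-\sum_a\pi_s(\widehat f_s-f^\star)(c,a)\bigr]-\eta\cdot(\text{regret so far})$, which gives a self-bounding inequality relating the third term to the first two. This is also where the choice $\eta\approx\sqrt{\log(AT)/(AT\,\Err)}$ balances things and produces the extra $\log^{3/2}$ factor.

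Finally I will combine the pieces on the $1-\delta/2$ oracle event to get $\sum_t\mathcal R(\pi_t)=O(\sqrt{AT(1+\Err)}(1+\log AT)^{3/2})$. Then I will apply the concentration argument from \cref{app:concentration} (Azuma on the martingale differences $\sum_a\pi_t(c_t,a)\Delta(c_t,a)-\mathcal R(\pi_t)$, which are bounded by 1) to add $\sqrt{T\log(2/\delta)}$. A union bound yields probability $1-\delta$. The finite-class corollary follows by plugging in $\Err_{\rm orc}=O(\log(|\F|/\delta))$ for least-squares ERM.
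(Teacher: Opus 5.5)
Your decomposition is valid, and the first two terms are fine. The first is exactly the Hedge regret of $\pi_t(c,\cdot)$ against $a^\star(c)$ on the losses $\widehat f_t(c,\cdot)$, and no index shift is needed. For the second, write $e_{t,a}=\widehat f_t-f^\star$ and $N_{t,a}(c)=\sum_{r\le t}\pi_r(c,a)$, which is at least $1/A$, so no truncation is needed. Cauchy--Schwarz gives $\sum_t\pi_t(c,a)|e_{t,a}(c)|\le(\sum_t\pi_t(c,a)/N_{t,a}(c))^{1/2}(\sum_tN_{t,a}(c)e_{t,a}(c)^2)^{1/2}$. Together with $\E_c\sum_aN_{t,a}(c)e_{t,a}(c)^2\le\Err_{\rm orc}$, this yields $O(\sqrt{AT\Err_{\rm orc}\log(AT)})$.

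The genuine gap is the third term, $\sum_te_{t,a^\star}(c)$ with $a^\star=a^\star(c)$, which carries the entire difficulty of the theorem. The oracle controls $e_{t,a^\star}(c)^2$ only with weight $\bar\pi_t(c,a^\star)$. You therefore need a per-context lower bound on the historical mass of $a^\star$ in terms of that context's error, and your formalization does not provide one.

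Start from $\log\pi_{s+1}(c,a^\star)-\log\pi_s(c,a^\star)\ge-\eta[\widehat f_s(c,a^\star)-\sum_a\pi_s(c,a)\widehat f_s(c,a)]-\eta^2/8$ and substitute $\widehat f_s=f^\star+e_s$. The true instantaneous regret then enters with a plus sign, not a minus sign. Combined with $\pi_t(c,a^\star)\le1$, the inequality returns exactly ``regret $\le\log A/\eta+\eta t/8+$ (second term) $+$ (third term)'', which is your Hedge bound again. It is circular. Read instead as a lower bound on $\pi_t(c,a^\star)$, it contains the very error at $a^\star$ you are trying to bound. It also compares $a^\star$ with the $\pi_s$-average, which discards the true gaps that must outgrow the accumulated errors.

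A contradiction with \cref{ass:oracle} is not available pointwise, because that guarantee holds only on average over contexts. At some contexts $a^\star$ genuinely loses its probability, and the cost there must be charged to the squared error at those contexts. Moreover, $\pi_t(c,a^\star)$ falls whenever any competitor is underestimated, so you must control the common normalizer, not only overestimation of $a^\star$. The third term also cannot be small in absolute terms. If $a^\star$ is overestimated by $\epsilon$ and a competitor is essentially tied with it, the third term equals $\epsilon T$, while the regret is near zero and the first term is about $-\epsilon T$. Once you replace the first term by its worst-case bound, that compensation is lost.

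The missing ingredient is a per-context bound $R_T(c)\lesssim A\eta TS_T(c)+\Lambda^2/\eta$, where $S_T(c)=\sum_{r\le T}\sum_a\bar\pi_r(c,a)e_{r,a}(c)^2$ has mean at most $\Err_{\rm orc}H_T$. The paper proves this without ever lower-bounding $\pi_t(c,a^\star)$ globally; that bound can legitimately fail when a near-optimal action is favored. The argument runs as follows.

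\begin{enumerate}
\item For each suboptimal $a$, it tracks the unrolled log-ratio $(t-1)\Delta(c,a)+D_{t-1,a}(c)$.
\item It bounds the accumulated gap error by $D_{t,a}(c)^2\le S_t(c)B_{t,a}(c)$, with weights $1/\bar\pi_r(c,a)+1/\bar\pi_r(c,a^\star)$.
\item An induction over all actions, with thresholds $\max\{\Delta(c,a),\Delta(c,b)\}$ that control the normalizer, shows the following: if $S_t(c)<\Delta(c,a)/(C^2A\eta)$, then both $a$ and $a^\star$ keep probability at least $e^{-2}/A$ for the first $1/(\eta\Delta(c,a))$ rounds.
\item Hence $B_{t,a}(c)\lesssim A(t+\eta\Delta(c,a)t^2)$, and $|D_{t-1,a}(c)|$ stays below half the linear true gap once $t\gtrsim\Lambda/(\eta\Delta(c,a))$. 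This is the late-round dichotomy.
\item Splitting action--round pairs into small gaps, early rounds, and late rounds finishes the proof.
\end{enumerate}

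What makes the argument close is pairing the error at $a^\star$ with the true gap of a specific suboptimal action. Your third term isolates that error from any gap.
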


\subsection{Proof overview}
\label{sec:proof-overview}
The proof of \cref{thm:general-oracle} hinges on understanding when a
suboptimal action can continue to receive nonnegligible probability.
The key insight is that, beyond a gap-dependent initial period, this
can happen only if the cumulative squared prediction error is large
relative to the action's gap. We use this connection to bound regret
deterministically at each context, then average over contexts and
apply the regression-oracle guarantee. The complete proof is given
in \cref{app:regret-proof}.

Fix a context $c$ and a realized sequence of fitted predictors.
We first analyze the resulting policies deterministically.
Let $\bar\pi_t(c,a)=\frac{1}{t}\sum_{r=1}^t\pi_r(c,a)$ be the average
of the first $t$ policies, and define the cumulative squared
prediction error by $ S_t(c)
 =\sum_{r=1}^t\sum_{a\in\A}
 \bar\pi_r(c,a)
 \bigl(\widehat f_r(c,a)-f^\star(c,a)\bigr)^2.$
After averaging over contexts, these weighted squared errors
are controlled by Assumption~\ref{ass:oracle}.

\paragraph{Key lemma.}
The central ingredient is the
\emph{late-round probability--error dichotomy}
in Lemma~\ref{lem:activation}.
The lemma states that, for every suboptimal action $a$ and round $t$,
\begin{equation}
 t-1\gtrsim\frac{\log(AT)}{\eta\Delta(c,a)}
 \quad\Longrightarrow\quad
 \left[
   \pi_t(c,a)<\frac1{AT}
   \quad\text{or}\quad
   \Delta(c,a)\lesssim A\eta S_{t-1}(c)
 \right].
 \label{eq:key-lemma}
\end{equation}
Thus, after an initial period that is shorter for larger gaps,
an action either has negligible probability or its gap is controlled
by the cumulative prediction error.

\paragraph{Why the dichotomy in \eqref{eq:key-lemma} holds.}
For a suboptimal action $a$ to retain probability at least $1/(AT)$
in a late round, the exponential update requires the predictors
to substantially underestimate its loss gap in total.
This can result from underestimating the loss of $a$ or
overestimating the loss of the optimal action $a^\star(c)$,
so we need historical coverage of both actions.
Suppose that $S_{t-1}(c)$ is smaller than a sufficiently small
constant times $\Delta(c,a)/(A\eta)$.
An induction shows that both actions then retain probability
$\Omega(1/A)$ during the first $1/(\eta\Delta(c,a))$ updates.
Earlier probability bounds control accumulated errors in the
predicted loss gaps, which in turn control the next policy.
The argument bounds all action weights together, since they
share the same normalizing denominator.

This initial exploration provides lasting historical coverage:
the early probabilities remain in the average policies even
after the current probabilities decrease.
Prediction errors on both actions therefore receive enough
weight in $S_{t-1}(c)$ for weighted Cauchy--Schwarz to control
the accumulated error in their predicted loss gap.
Under the assumed small-error condition, this accumulated error
is too small to offset the true cumulative loss gap.
This contradicts the underestimation required for action $a$
to persist, establishing the large-error alternative in
\eqref{eq:key-lemma}.

\paragraph{Regret decomposition.}
The dichotomy suggests separating action--round pairs into three
regimes: small gaps, large gaps in early rounds, and large gaps in
late rounds. Let $\tau(c)$ be a sufficiently large universal constant times
$A\eta S_T(c)$, and call a gap small when
$\Delta(c,a)\le\tau(c)$.
For a suboptimal action, call a round late when the time condition
in \eqref{eq:key-lemma} holds, and early otherwise.
Small gaps contribute at most $\tau(c)$ regret per round.
For large gaps in early rounds, rearranging the definition of
an early round gives
$\Delta(c,a)\lesssim \frac{\log(AT)}{\eta(t-1)}$ when $t\ge2$.
Both contributions are bounded by summing action probabilities
at each round, rather than paying a separate exploration cost for
each action. For large gaps in late rounds, the choice of $\tau(c)$
and $S_{t-1}(c)\le S_T(c)$ rule out the large-error alternative
in the dichotomy. Each such action therefore has probability
below $1/(AT)$, and their combined regret is at most $1/T$.
Consequently, for $t\ge2$,
\begin{equation*}
 \sum_{a\in\A}\pi_t(c,a)\Delta(c,a)
 \lesssim
 \underbrace{A\eta S_T(c)}_{\text{small gap}}
 +
 \underbrace{\frac{\log(AT)}{\eta(t-1)}}_{
   \substack{\text{large gap}\\\text{early round}}}
 +
 \underbrace{\frac1T}_{
   \substack{\text{large gap}\\\text{late round}}}.
\end{equation*}
Let $R_T(c)$ denote the cumulative regret obtained by evaluating
every policy in the learned sequence at the same fixed context $c$.
Summing over rounds and bounding the first-round regret by one yields
\begin{equation}
 R_T(c)
 :=\sum_{t=1}^T\sum_{a\in\A}\pi_t(c,a)\Delta(c,a)
 \lesssim A\eta T S_T(c)+\frac{\log^2(AT)}{\eta}.
 \label{eq:overview-pointwise}
\end{equation}
The oracle guarantee, applied with failure probability $\delta/2$,
gives $ \E_{c\sim\D}[S_T(c)]
 \le \Err_{\rm orc}(T,\delta/2)(1+\log T).$
Averaging \eqref{eq:overview-pointwise} therefore bounds
$\sum_{t=1}^T\mathcal R(\pi_t)$.
A concentration argument
transfers this bound to $\Reg$, adding
$\sqrt{(T/2)\log(2/\delta)}$.
Substituting the learning rate chosen using
$\Err_{\rm orc}(T,\delta/2)$ and taking a union bound
gives the regret guarantee with probability at least $1-\delta$.

\section{Batched implementation of vanilla policy optimization}
\label{sec:batched-implementation}

The vanilla PO implementation in the previous section updates the policy and calls the
regression oracle after every round. While conceptually simple, this can
be undesirable in practice: fitting the regression model may itself be
computationally expensive, and changing a deployed policy may incur an
additional switching or deployment cost. Differential privacy provides
an important example, since repeatedly releasing updated predictors or
policies may incur additional privacy loss.
These considerations motivate a batched implementation of PO, in which
the predictor and policy are updated only at a small number of epochs.
To obtain that, we use geometrically growing batches. Within each epoch, the policy is
fixed, and the data collected in that epoch are used once to fit the
predictor for the next epoch. Consequently, the algorithm makes only
$O(\log T)$ regression-oracle calls and $O(\log T)$ policy switches, while preserving the optimal regret guarantee.

\paragraph{Batched vanilla PO.}
Let $B_m=2^{m-1}$ denote the length of epoch $m$. At the beginning of
epoch $m\ge2$, we fit a predictor $\widetilde f_m$ using only the data
collected in epoch $m-1$. We then construct a single policy that remains
fixed throughout epoch $m$.
To define this policy, consider starting from the uniform distribution
and repeatedly applying the exponential update using the same predictor
$\widetilde f_m$. After $k$ updates, the resulting distribution is
proportional to
$\exp\{-\eta_m k\widetilde f_m(c,a)\}$.
We take the average of the distributions corresponding to
$k=0,\ldots,B_m-1$:
\begin{equation}
\pi_m(c,a)
=\frac1{B_m}\sum_{k=0}^{B_m-1}
\frac{\exp\{-\eta_m k\widetilde f_m(c,a)\}}
{\sum_{b\in\A}\exp\{-\eta_m k\widetilde f_m(c,b)\}}.
\label{eq:vanilla-batch-policy}
\end{equation}
The early terms in this average spread probability relatively evenly
across actions, whereas the later terms place increasingly more mass on
actions with smaller predicted loss. Their average therefore preserves
the exploration induced by the early exponential iterates while
exploiting the predictions in the later ones.
Only the final epoch may be truncated by the horizon. Its policy is
still defined using all $B_M$ mixture components in
\eqref{eq:vanilla-batch-policy}; the data collected in this final epoch
are simply never used for another regression fit.

\begin{algorithm}[ht]
\caption{Batched Vanilla Policy Optimization}
\label{alg:batched-vanilla-po}
\begin{algorithmic}
\State \textbf{Input:} horizon $T$, action set $\A$, learning rates
$\eta_m$, and an offline regression oracle.
\State Set $M\gets\lceil\log_2(T+1)\rceil$ and
$B_m\gets2^{m-1}$ for $m=1,\ldots,M$.
\State Initialize $\pi_1(c,a)\gets1/A$ for every
$c\in\C$ and $a\in\A$.
\State \textbf{for} $m=1,\ldots,M$ \textbf{do}
\State \AlgIndent \textbf{if} $m\ge2$ \textbf{then}
\State \AlgIndent\AlgIndent Fit
$\widetilde f_m\gets\operatorname{Oracle}(\mathcal S_{m-1})$
and discard $\mathcal S_{m-1}$.
\State \AlgIndent\AlgIndent Define $\pi_m$ by
\eqref{eq:vanilla-batch-policy}.
\State \AlgIndent \textbf{end if}
\State \AlgIndent Initialize $\mathcal S_m\gets()$.
\State \AlgIndent \textbf{for}
$t=B_m,\ldots,\min\{2B_m-1,T\}$ \textbf{do}
\State \AlgIndent\AlgIndent Observe $c_t$, sample
$a_t\sim\pi_m(c_t,\cdot)$, and observe $\ell_t(c_t,a_t)$.
\State \AlgIndent\AlgIndent Append
$(c_t,a_t,\ell_t(c_t,a_t))$ to $\mathcal S_m$.
\State \AlgIndent \textbf{end for}
\State \textbf{end for}
\end{algorithmic}
\end{algorithm}

\paragraph{Efficient implementation.}
Although \eqref{eq:vanilla-batch-policy} defines $\pi_m$ for every
context--action pair, it never needs to be constructed explicitly over
the context space. At round $t$ in epoch $m$, the learner evaluates
$\widetilde f_m(c_t,a)$ only for the observed context $c_t$ and the
$A$ actions.
Moreover, sampling from the mixture in
\eqref{eq:vanilla-batch-policy} does not require explicitly computing
all $B_m$ component distributions. Instead, independently sample
$K_t\sim\operatorname{Unif}\{0,\ldots,B_m-1\}$ and then draw
$a_t$ according to
\begin{equation}
\Pp(a_t=a\mid c_t,K_t,\widetilde f_m)
=
\frac{\exp\{{-\eta_mK_t\widetilde f_m(c_t,a)}\}}
{\sum_{b\in\A}\exp\{{-\eta_mK_t\widetilde f_m(c_t,b)}\}}.
\label{eq:vanilla-batch-sampler}
\end{equation}
Averaging over $K_t$ recovers exactly $\pi_m(c_t,\cdot)$.
Importantly, drawing $K_t$ only samples a component of the fixed mixture;
it does not constitute a policy update. Thus, each round requires only
$A$ predictor evaluations and $O(A)$ additional arithmetic, while the
regression oracle and the deployed policy are updated only
$O(\log T)$ times.

\paragraph{Regret guarantee.}
Because the policy is fixed within each epoch, conditional on the history
at its start, the observations in that epoch are i.i.d. under a fixed
policy. We can therefore apply a standard offline regression guarantee
to each batch, as in FALCON+ \citep{falcon}.
Fix $\delta\in(0,1/2]$ and suppose that, with probability at least
$1-\delta/2$,
$\|\widetilde f_m-f^\star\|_{\pi_{m-1}}^2
\le\Err_{\rm orc}^{\rm batch}(T,\delta)/B_{m-1}$
simultaneously for all $m=2,\ldots,M$, where
$\Err_{\rm orc}^{\rm batch}(T,\delta)\ge1$ is a known error bound
fixed before the interaction.

\begin{theorem}[Regret of batched vanilla PO]
\label{thm:batched-vanilla}
Under the regression oracle guarantee above, batched vanilla PO with
learning rates
$
\eta_m
=
\Theta\!\left(
\min\left\{
1/A,
1/{\sqrt{AB_m\Err_{\rm orc}^{\rm batch}(T,\delta)}}
\right\}
\right)
$
uses $O(\log T)$ regression calls and $O(\log T)$ policy switches, and
satisfies
$
\Reg
=
\widetilde O\!\left(
\sqrt{AT\,\Err_{\rm orc}^{\rm batch}(T,\delta)}
\right)
$
with probability at least $1-\delta$.
In particular, for a finite realizable class $\F$, least-squares ERM
with the corresponding error bound gives
$
\Reg
=
\widetilde O\!\left(
\sqrt{AT\log(|\F|/\delta)}
\right).
$
\end{theorem}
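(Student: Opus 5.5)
We prove the bound epoch by epoch, treating each epoch $m\ge2$ as a single-predictor instance of the deterministic per-context analysis behind \cref{thm:general-oracle}. Fix a context $c$ and epoch $m\ge2$. By \eqref{eq:vanilla-batch-policy}, $\pi_m(c,\cdot)$ is the average of the $B_m$ iterates $p_k(c,\cdot)\propto\exp\{-\eta_mk\widetilde f_m(c,\cdot)\}$, $k=0,\dots,B_m-1$. These are exactly the policies vanilla PO would produce if the oracle returned $\widetilde f_m$ in every round. Hence
\[
B_m\sum_a\pi_m(c,a)\Delta(c,a)=\sum_{k}\sum_a p_k(c,a)\Delta(c,a),
\]
which is the pointwise regret $R_{B_m}(c)$ of that run. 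Applying the dichotomy of Lemma~\ref{lem:activation} and the three-regime decomposition behind \eqref{eq:overview-pointwise} to this run gives
\[
B_m\sum_a\pi_m(c,a)\Delta(c,a)\lesssim A\eta_mB_mS^{(m)}(c)+\frac{\log^2(AT)}{\eta_m},
\]
with
\[
S^{(m)}(c)=\sum_{r\le B_m}\sum_a\bar p_r(c,a)\bigl(\widetilde f_m(c,a)-f^\star(c,a)\bigr)^2\le B_m\,\|\widetilde f_m(c,\cdot)-f^\star(c,\cdot)\|^2_{\pi_m(c,\cdot)}.
\]
The last inequality uses $\bar p_r\le (B_m/r)\pi_m$, i.e.\ $\sum_r\bar p_r\le B_m(1+\log B_m)\pi_m$ pointwise. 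Up to a log factor, this is $B_m$ times the error of $\widetilde f_m$ under $\pi_m$.

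The main obstacle is that the oracle controls the error of $\widetilde f_m$ under $\pi_{m-1}$, not under $\pi_m$. I would bridge this with a coverage step: show $\pi_m(c,a)\lesssim\pi_{m-1}(c,a)$, up to an additive negligible term, for every action that matters. The constraint $\eta_m\le 1/A$ keeps the early components near uniform. More concretely, I would redo the dichotomy with a weighted error. The component $p_k$ has $k\le B_m$, and $\pi_{m-1}$ averages over $k'\le B_{m-1}=B_m/2$. Since $\eta_{m-1}\approx\sqrt2\,\eta_m$, the first $\approx B_m/(2\sqrt2)$ components of $\pi_m$ correspond to effective temperatures $\eta_mk$ covered by $\pi_{m-1}$'s components. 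Those components, however, use $\widetilde f_{m-1}$, so this does not give exact domination.

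If this fails, I would fall back on the induction inside Lemma~\ref{lem:activation}. It only needs that, while $S$ is small relative to $\Delta(c,a)/(A\eta_m)$, both $a$ and $a^\star(c)$ keep probability $\Omega(1/A)$ for the first $1/(\eta_m\Delta)$ steps. By induction over epochs, the same statement holds for $\pi_{m-1}$ on those actions. This yields mass $\Omega(\min\{1,1/(\eta_m\Delta B_m)\})/A$ under both policies, which is what weighted Cauchy--Schwarz requires. The error quantity can then be measured under $\pi_{m-1}$, giving
\[
\E_c[S^{(m)}(c)]\lesssim B_m(1+\log T)\,\Err^{\rm batch}_{\rm orc}/B_{m-1}.
\]

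Averaging over $c\sim\D$, on the oracle event the regret of epoch $m$ is
\[
\lesssim A\eta_mB_m\,\Err^{\rm batch}_{\rm orc}\log T+\frac{\log^2(AT)}{\eta_m}.
\]
With $\eta_m=\Theta(\min\{1/A,1/\sqrt{AB_m\Err^{\rm batch}_{\rm orc}}\})$, this is
\[
\widetilde O\bigl(\sqrt{AB_m\Err^{\rm batch}_{\rm orc}}+A\bigr).
\]
Epoch $1$ contributes $O(1)$. Summing the geometric series over $M=O(\log T)$ epochs gives $\sum_t\mathcal R(\pi_t)=\widetilde O(\sqrt{AT\Err^{\rm batch}_{\rm orc}})$, where the $A\log T$ term is absorbed since $\Err\ge1$ (or is dominated when $T\ge A$). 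The concentration argument of \cref{app:concentration} then transfers this to $\Reg$ at cost $\sqrt{T\log(2/\delta)}$, and a union bound with the oracle event gives probability $1-\delta$. The counts of regression calls and switches are immediate since $M=\lceil\log_2(T+1)\rceil$. Finally, for finite $\F$, ERM on each i.i.d.\ batch with a union bound over $M$ epochs gives $\Err^{\rm batch}_{\rm orc}=O(\log(|\F|\log T/\delta))$, as in \cref{app:regression}.
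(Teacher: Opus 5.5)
Your proposal has a genuine gap at exactly the step you flag as the main obstacle, and neither of your two routes closes it. The per-context dichotomy of \cref{lem:activation} works in \cref{thm:general-oracle} only because the averaged policies $\bar\pi_r$ weighting $S_t(c)$ are precisely the policies that collected the data for $\widehat f_r$, so $\E_c[s_t(c)]$ is what the oracle bounds. In the batched algorithm, the data for $\widetilde f_m$ were collected by $\pi_{m-1}$, which is built from a different predictor $\widetilde f_{m-1}$ and restarts from uniform.

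Pointwise domination $\pi_m\lesssim\pi_{m-1}$ is false in general. At a context where $\widetilde f_{m-1}$ predicted $a$ to be bad but $\widetilde f_m$ correctly predicts it to be best, $\pi_m(c,a)$ can be close to one while $\pi_{m-1}(c,a)$ is only of order $1/(AB_{m-1}\eta_{m-1})$.

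Your fallback (``by induction over epochs the same statement holds for $\pi_{m-1}$'') is not a valid per-context induction either. The coverage that $\pi_{m-1}$ gives to $a$ and $a^\star(c)$ at $c$ is governed by the accuracy of $\widetilde f_{m-1}$ at $c$. The oracle controls that accuracy only on average over contexts, and only under $\pi_{m-2}$; it is not implied by smallness of $S^{(m)}(c)$. At contexts where $\widetilde f_{m-1}$ overestimated $f^\star(c,a^\star(c))$, the weight $\pi_{m-1}(c,a^\star(c))$ is small. Then $\|\widetilde f_m-f^\star\|_{\pi_{m-1}}^2$ barely constrains $\widetilde f_m(c,a^\star(c))$, and $\pi_m$ can keep underweighting $a^\star(c)$. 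Falling back on the unconditional floor $\pi_{m-1}(c,a)\ge e^{-1}\min\{1,1/(B_{m-1}\eta_{m-1})\}/A$ gives a transfer error of order $(A\Err_{\rm orc}^{\rm batch}(T,\delta)/B_m)^{1/4}$ per round at your learning rate. That yields only $T^{3/4}$ regret.

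What is missing is the population-level induction across epochs used in the paper, following FALCON+. The averaged policy satisfies the explicit coverage bound $1/\pi_{m-1}(c,a)\le eA(1+B_{m-1}\eta_{m-1}d_{m-1}(c,a))$, where $d_{m-1}$ is the predicted gap under $\widetilde f_{m-1}$. Let $E_m(\pi)$ denote the error of $\widetilde f_m$ in predicting the population loss of $\pi$. Cauchy--Schwarz against the oracle guarantee gives, for every policy $\pi$,
\[
 E_m(\pi)\le\sqrt{\frac{eA\Err_{\rm orc}^{\rm batch}(T,\delta)}{B_{m-1}}\bigl(1+B_{m-1}\eta_{m-1}\widehat{\mathcal R}_{m-1}(\pi)\bigr)}.
\]
So the transfer cost is the predicted regret of $\pi$ under the previous predictor. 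An induction starting from the uniform $\pi_1$, with $\gamma_m=AB_m\eta_m$, then shows that, simultaneously for all $\pi$,
\[
 \mathcal R(\pi)\le2\widehat{\mathcal R}_m(\pi)+\frac{30A}{\gamma_m},
 \qquad
 \widehat{\mathcal R}_m(\pi)\le2\mathcal R(\pi)+\frac{30A}{\gamma_m}.
\]
This controls, on average over contexts, how badly $\widetilde f_{m-1}$ misjudged the true optimal action and the actions of $\pi_m$. That is exactly the information a per-context argument cannot access.

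Combined with the plain Hedge bound $\widehat{\mathcal R}_m(\pi_m)\le2\log A/(B_m\eta_m)$, this gives $\mathcal R(\pi_m)\le(30+4\log A)/(B_m\eta_m)$ without any use of the dichotomy lemma. Your epoch summation, concentration step, and finite-class instantiation would go through once such a per-epoch bound is available.
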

The proof appears in \cref{app:batched}.
Using each batch in at most one fit also enables the privacy
guarantees developed next.

\section{Differential privacy and policy optimization }
\label{sec:private}

We now privatize batched PO by replacing each offline regression call
with a differentially private regression oracle. Since the batches are
disjoint and each batch is used in only one fit, the privacy cost does
not accumulate across epochs. This allows us to protect the released
predictors and policy maps while retaining the near-optimal regret
guarantee of the batched algorithm.

\paragraph{Privacy model.}
Each round represents one user with record
$z_t=(c_t,\ell_t(c_t,\cdot))$, consisting of a context and its
potential-loss vector. Each user participates once.
Two streams are neighboring if they differ in one record.
Our privacy guarantee holds for arbitrary fixed streams, and
the stochastic assumptions in \cref{sec:setup} are needed
only for regret.

\begin{definition}[Differential privacy]
\label{def:dp}
Let $\mathcal M$ denote the learning algorithm, whose public output
on input stream $z$ is
$\mathcal M(z)=((\widetilde f_m,\pi_m))_{m=2}^M$.
Contexts, selected actions, observed losses, and policy evaluations
at protected contexts remain internal.
The algorithm $\mathcal M$ is
$(\varepsilon_{\rm priv},\delta_{\rm priv})$-differentially
private if, for every pair of neighboring streams $z,z'$
and every event $E$ in the output space,
\begin{equation}
 \Pp(\mathcal M(z)\in E)
 \le
 e^{\varepsilon_{\rm priv}}
 \Pp(\mathcal M(z')\in E)
 +\delta_{\rm priv}.
 \label{eq:dp}
\end{equation}
The probabilities are over the learner's randomness.
\end{definition}

\begin{assumption}[Private regression oracle]
\label{ass:private-oracle}
Each oracle call uses fresh randomness and is
$(\varepsilon_{\rm priv},\delta_{\rm priv})$-DP under replacement
of one row, for every pair of datasets of the same size.
For every $\delta\in(0,1/2]$, with probability at least
$1-\delta/2$,
$\|\widetilde f_m-f^\star\|_{\pi_{m-1}}^2
\le \Err_{\rm orc}^{\rm batch}(T,\delta)/B_{m-1}$
simultaneously for all $m=2,\ldots,M$,
where $\Err_{\rm orc}^{\rm batch}(T,\delta)\ge1$
is a known error bound.
The function class, the error bound, and all tuning parameters
are public and fixed before the interaction.
\end{assumption}

\subsection{Differentially private batched vanilla PO}
We instantiate \cref{alg:batched-vanilla-po} with the private
regression oracle of \cref{ass:private-oracle} and release each
predictor and its policy map before the corresponding epoch.
Each batch enters only one private fit, and subsequent releases
depend on that batch only through the private output.
Consequently, the complete transcript inherits the privacy
parameters of a single oracle call.
\begin{theorem}[Private batched vanilla PO]
\label{thm:private-batched}
Under \cref{ass:private-oracle}, batched vanilla PO releases an
$(\varepsilon_{\rm priv},\delta_{\rm priv})$-DP transcript using
$O(\log T)$ private regression calls.
With the learning rates of \cref{thm:batched-vanilla}
chosen using the private oracle's error bound, it satisfies
$\Reg=\widetilde O\!\left(
\sqrt{AT\,\Err_{\rm orc}^{\rm batch}(T,\delta)}
\right)$
with probability at least $1-\delta$.
\end{theorem}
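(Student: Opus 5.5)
The argument has two independent parts: a privacy argument based on parallel composition and post-processing, and a regret argument that reduces directly to \cref{thm:batched-vanilla}.

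\textbf{Privacy.} Fix neighboring streams $z,z'$ that differ only in record $t_0$. This round lies in exactly one epoch $m_0$, so it enters exactly one oracle call, the fit of $\widetilde f_{m_0+1}$ on $\mathcal S_{m_0}$. If $t_0$ falls in the final epoch, it enters no fit, and the output distributions are identical.

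\emph{Step 1: epochs before $m_0$.} The outputs $(\widetilde f_m,\pi_m)_{m\le m_0}$ depend only on records in epochs $m_0-1$ and earlier and on internal randomness. They therefore have the same joint distribution under $z$ and $z'$.

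\emph{Step 2: the epoch containing $t_0$.} Condition on the released prefix $(\widetilde f_m,\pi_m)_{m\le m_0}$. The policy $\pi_{m_0}$ is then fixed. Couple the action-sampling randomness in epoch $m_0$ identically under $z$ and $z'$, i.e.\ use the same uniforms and the same $K_t$. The resulting datasets $\mathcal S_{m_0}$ and $\mathcal S'_{m_0}$ have the same size and differ in at most row $t_0$, because that row is built from $c_{t_0}$, $\ell_{t_0}$ and the coupled sampling randomness. Since the oracle uses fresh randomness, \cref{ass:private-oracle} gives, for every event $E'$,
\[
\Pp(\widetilde f_{m_0+1}\in E'\mid\cdot)\le e^{\varepsilon_{\rm priv}}\Pp(\widetilde f'_{m_0+1}\in E'\mid\cdot)+\delta_{\rm priv}
\]
for each fixed value of the coupled sampling randomness. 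Averaging over that randomness preserves the inequality, since it is linear in the probabilities.

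\emph{Step 3: later epochs.} The policy $\pi_{m_0+1}$ is a deterministic function of $\widetilde f_{m_0+1}$ and public $\eta_{m_0+1}$ via \eqref{eq:vanilla-batch-policy}. Every later release depends on $z$ only through epochs $m>m_0$, where the records coincide, together with $\pi_{m_0+1}$ and fresh randomness. So the remainder of the transcript is a randomized post-processing, by an identical kernel, of $\widetilde f_{m_0+1}$ and the prefix.

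\emph{Step 4: assembling the bound.} Combine the three steps by integrating over the prefix and over the post-processing kernel. The standard argument that post-processing preserves $(\varepsilon,\delta)$-DP, together with the fact that mixtures of $(\varepsilon,\delta)$-indistinguishable pairs with identical mixing weights stay indistinguishable, gives \eqref{eq:dp}. Because only one oracle call ever sees the differing row, no composition factor in $M$ appears.

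\textbf{Main obstacle.} The delicate point is Step 2. The dataset fed to the oracle contains the actions $a_t$, and these depend on internal sampling. One must argue that coupling this sampling keeps the two datasets neighboring under replacement of a single row of equal size, and that the conditioning on internal randomness is handled correctly. The difficulty is that contexts, actions and policy evaluations are internal rather than released. Resolving this is exactly why the proof conditions on the public prefix and on the coupled sampling randomness before invoking the oracle's DP guarantee.

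\textbf{Regret.} The private oracle satisfies, by \cref{ass:private-oracle}, the same accuracy guarantee assumed in \cref{thm:batched-vanilla}, with the stated $\Err_{\rm orc}^{\rm batch}(T,\delta)$ and probability $1-\delta/2$. The algorithm is otherwise unchanged, and the number of calls is $M-1=O(\log T)$. Applying \cref{thm:batched-vanilla} verbatim with these learning rates therefore yields $\Reg=\widetilde O(\sqrt{AT\,\Err_{\rm orc}^{\rm batch}(T,\delta)})$ with probability at least $1-\delta$.
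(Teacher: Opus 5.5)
Your proposal is correct and follows essentially the same route as the paper. Like the paper, you couple the within-epoch action seeds so the affected batch changes in one row, invoke the oracle's DP guarantee once, treat all later releases as post-processing by a common continuation procedure, and obtain regret by applying \cref{thm:batched-vanilla} with the private oracle's error bound. The only cosmetic difference is that you condition on the released prefix, whereas the paper conditions on all learner randomness before the affected epoch; these are equivalent here because earlier batches are discarded.
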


For a finite function class, the exponential mechanism gives
the following pure-DP instantiation.

\begin{corollary}[Finite-class private regression]
\label{cor:private-exp}
Let $\F$ be finite and realizable, fix
$\delta\in(0,1/2]$, and let $\varepsilon_{\rm priv}>0$.
Batched vanilla PO, instantiated with the exponential mechanism
and the parameter choices specified in
\cref{app:batch-oracle-parameters},
releases a pure $\varepsilon_{\rm priv}$-DP transcript
and satisfies
$\Reg
=
\widetilde O\left(
\sqrt{AT
\log(|\F|/\delta)}(1+\varepsilon_{\rm priv}^{-1/2})
\right)$
with probability at least $1-\delta$.
\end{corollary}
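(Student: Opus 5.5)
The plan is to reduce the corollary to \cref{thm:private-batched}. The exponential mechanism over the finite class $\F$ is an $\varepsilon_{\rm priv}$-DP regression oracle. It satisfies the per-batch accuracy requirement of \cref{ass:private-oracle} with
$\Err_{\rm orc}^{\rm batch}(T,\delta)=O\bigl(\log(|\F|M/\delta)(1+\varepsilon_{\rm priv}^{-1})\bigr)$.
Substituting this into the regret bound of \cref{thm:private-batched} gives
$\widetilde O\bigl(\sqrt{AT\log(|\F|/\delta)(1+\varepsilon_{\rm priv}^{-1})}\bigr)$.
Since $\sqrt{1+x}\le1+\sqrt{x}$, this is the claimed rate. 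The $\log M=O(\log\log T)$ factor is absorbed into $\widetilde O$.

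\textbf{Oracle.} On batch $\mathcal S_{m-1}$ of size $n=B_{m-1}$, the oracle samples $f\in\F$ with probability
\[
\propto\exp\bigl\{-\varepsilon_{\rm priv}\,\widehat L(f)/2\bigr\},
\qquad
\widehat L(f)=\sum_{(c,a,\ell)\in\mathcal S_{m-1}}(f(c,a)-\ell)^2 .
\]
Losses and predictions lie in $[0,1]$, so replacing one row changes $\widehat L(f)$ by at most $1$. The standard exponential-mechanism argument then gives pure $\varepsilon_{\rm priv}$-DP for each call, using fresh randomness. This verifies the privacy half of \cref{ass:private-oracle}. The composition-free transcript guarantee is already supplied by \cref{thm:private-batched}.

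\textbf{Accuracy.} This is the main step. I would work conditionally on the history before epoch $m-1$, so that the rows are i.i.d.\ under the fixed policy $\pi_{m-1}$. Define the excess loss $X_f=(f(c,a)-\ell)^2-(f^\star(c,a)-\ell)^2$ for a row $(c,a,\ell)$. Realizability gives $\E X_f=\|f-f^\star\|_{\pi_{m-1}}^2$. The key property is that $\mathrm{Var}(X_f)\le 4\,\E X_f$. This is the same Bernstein fact used for least-squares ERM in \cref{app:regression}, and I would reuse that lemma. Taking a union bound over $\F$, with probability $1-\delta/(4M)$ every $f$ satisfies
\[
n\|f-f^\star\|^2_{\pi_{m-1}}\le 2\bigl(\widehat L(f)-\widehat L(f^\star)\bigr)+O\bigl(\log(|\F|M/\delta)\bigr).
\]
The exponential mechanism's utility guarantee then gives, with probability $1-\delta/(4M)$,
\[
\widehat L(\widetilde f_m)\le \min_f\widehat L(f)+\frac{2}{\varepsilon_{\rm priv}}\log\frac{4M|\F|}{\delta}\le \widehat L(f^\star)+\frac{2}{\varepsilon_{\rm priv}}\log\frac{4M|\F|}{\delta}.
\]
Combining the two bounds gives
\[
\|\widetilde f_m-f^\star\|^2_{\pi_{m-1}}\le O\bigl(\log(|\F|M/\delta)(1+\varepsilon_{\rm priv}^{-1})\bigr)/B_{m-1}.
\]
A union bound over the $M-1$ epochs yields the simultaneous statement with total failure probability $\delta/2$.

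\textbf{Conclusion.} With this $\Err_{\rm orc}^{\rm batch}$, which is at least $1$ and is public, I would set $\eta_m$ as in \cref{thm:batched-vanilla}; these are the parameter choices deferred to \cref{app:batch-oracle-parameters}. Then \cref{thm:private-batched} applies directly and gives the stated regret.

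\textbf{Main obstacle.} The delicate point is obtaining a fast $1/n$ rate in the accuracy step rather than $1/\sqrt n$. This is exactly where the variance--mean (Bernstein) inequality for squared loss under realizability is needed. One must also check that the additive $O(\varepsilon_{\rm priv}^{-1}\log(|\F|M/\delta))$ slack from the mechanism enters only additively in the numerator, so that it contributes the $\varepsilon_{\rm priv}^{-1/2}$ factor after taking the square root in the regret.
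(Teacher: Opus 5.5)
Your proposal is correct and follows essentially the same route as the paper. The paper proves privacy of each exponential-mechanism call from sensitivity one, and accuracy from the mechanism's utility bound combined with the Bernstein-based approximate-ERM fast rate (\cref{lem:private-exp-oracle,lem:private-fast-rate}) at per-epoch failure probability $\delta/(2M)$; it then takes a union bound over epochs (\cref{lem:batch-accuracy-event}) and invokes the regret and composition-free privacy guarantees behind \cref{thm:private-batched}, and your failure-probability split reproduces its choice $\Err_{\rm orc}^{\rm batch}(T,\delta)=(10+4/\varepsilon_{\rm priv})\log(4|\F|M/\delta)$ up to constants.
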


\paragraph{Relation to joint differential privacy.}
Joint differential privacy (JDP) allows a user's recommendation to depend on their own context while still ``hiding'' their data from the other users. This is achievable in our construction via the standard billboard model \citep{kearns2014mechanism,hsu2014private}, in which users personalize their recommendation as a function of a public output (the ``billboard'') and their own private data. In particular, our finite-class exponential-mechanism instantiation gives pure JDP. A similar connection underlies \citet{shariff2018private}, who obtain approximate JDP for contextual linear bandits by maintaining differentially private regression statistics. Our main guarantee is stated directly for the released predictors and policy maps.

\paragraph{Proof overview.}
The regret and privacy arguments follow from two complementary properties
of the batched construction. For regret, we simply invoke
\cref{thm:batched-vanilla} with the prediction-error guarantee of the
private regression oracle in \cref{ass:private-oracle}; for a finite
class, the exponential mechanism provides a concrete pure-DP
instantiation whose excess empirical loss yields the required batch
prediction-error bound, as shown in
\cref{lem:private-exp-oracle}. For privacy, each record belongs to a
single batch and enters the training data of at most one private oracle call.
After that batch is discarded, all subsequent predictors, policies, and
released outputs depend on the record only through this private oracle
output and are therefore postprocessing. Thus, the full transcript
inherits the privacy guarantee of a single oracle call, with no
composition cost across epochs. The full argument appears in
\cref{app:privacy}.

\paragraph{Extension to other batched algorithms.}
The same privacy argument applies to other oracle-based CMAB
algorithms that collect each batch under a fixed policy and use
it in only one regression fit.
If subsequent policies depend only on the fitted predictors and
public parameters, replacing the regression oracle with a private
one protects the complete sequence of released predictors and
policy maps with the privacy parameters of a single oracle call.
We use this observation to construct private versions of several
standard algorithms for the experiments in
\Cref{sec:experiments}.

\paragraph{Batched bonus PO.}
We also develop a batched private adaptation of the bonus-based
PO method of \citet{levy2026}, whose exploration bonus favors
actions with little accumulated probability.
The batched construction satisfies the conditions above and
therefore inherits the same privacy guarantee.
With the prescribed learning rates and bonus parameters, it
satisfies
$\Reg=\widetilde O\!\left(
\sqrt{AT\,\Err_{\rm orc}^{\rm batch}(T,\delta)}
\right)$
with probability at least $1-\delta$.
The construction and regret analysis appear in
\cref{app:bonus-po}.
For the experiments, we develop a
faster approximation of this algorithm.

\section{Experiments}\label{sec:experiments}

Following standard practice in the CMAB literature~\citep{bietti2021bakeoff},
we evaluate the algorithms on randomly permuted multiclass
classification datasets from \textsc{OpenML}\footnote{\url{https://www.openml.org/}},
treating class labels as actions, feature vectors as contexts, and the $0$--$1$ classification
error as the loss. We compare \algname{VanillaPO} and
\algname{BonusPO} against the regression-based baselines
\algname{SquareCB}~\citep{foster2020squarecb},
\algname{FastCB}~\citep{foster2021efficient},
\algname{RegCB}~\citep{foster2018regression}, and
\algname{AdaCB}~\citep{foster2021instance},
as well as \algname{LinUCB}~\citep{li2010contextual,shariff2018private},
together with the fully supervised baseline \algname{Supervised}.
We outline the experimental setup here and defer full implementation details and privacy accounting to \cref{app:experiments}.\footnote{
Our code is publicly available at \url{https://github.com/alex-rbch/private-contextual-bandits}.}

\paragraph{Implementation details.}
All algorithms follow the same doubling batch schedule, updating
their policies at the end of each batch. The oracles fit regularized
logistic or linear regression models to produce the action-loss
predictors used in these updates. To ensure differential privacy
of the predictors and, by post-processing, the resulting policies,
we employ objective perturbation~\citep{chaudhuri2008privacy,chaudhuri2011private}
and sufficient-statistic perturbation~\citep{wang2018revisiting}.
We note that a direct implementation of \algname{BonusPO} incurs
$O(T^2)$ computational cost. To make this tractable in our experiments, we use
a historical-replay modification inspired by \citet{levy2026}:
at each context, we replay bonus-based updates using the privatized
predictors from previous batches, weighting each update by the
corresponding batch size. Since the doubling schedule produces only $O(\log T)$ predictors,
replay requires $O(T\log T)$ computation across all $T$ rounds.

\paragraph{Experimental setup and evaluation.}
We evaluate the algorithms on 100 datasets with horizons $T \in [10^3,10^5]$, allowing $9$ to $16$ policy updates per run. For brevity, throughout this section we write $\varepsilon$ and $\delta$ for $\varepsilon_{\mathrm{priv}}$ and $\delta_{\mathrm{priv}}$, respectively. We consider four private regimes with $\varepsilon\in\{8,4,2,1\}$ and $\delta=10^{-5}$, together with a non-private regime. Performance is measured by progressive-validation (PV) loss, the average incurred loss up to round $t$: $L_{\mathrm{PV}}(t)=\frac1t\sum_{s=1}^{t}\ell_s(a_s)$.

Following previous work, we tune each algorithm independently for each dataset and privacy regime, selecting the hyperparameter configuration that minimizes the mean final PV loss. Table~\ref{tab:exp-hyperparameters} lists the search grids. The reported results average ten runs with independently sampled dataset permutations and privacy perturbations, both shared across algorithms and privacy regimes within each run.

\begin{figure*}[t]
\centering
\includegraphics[width=\textwidth]{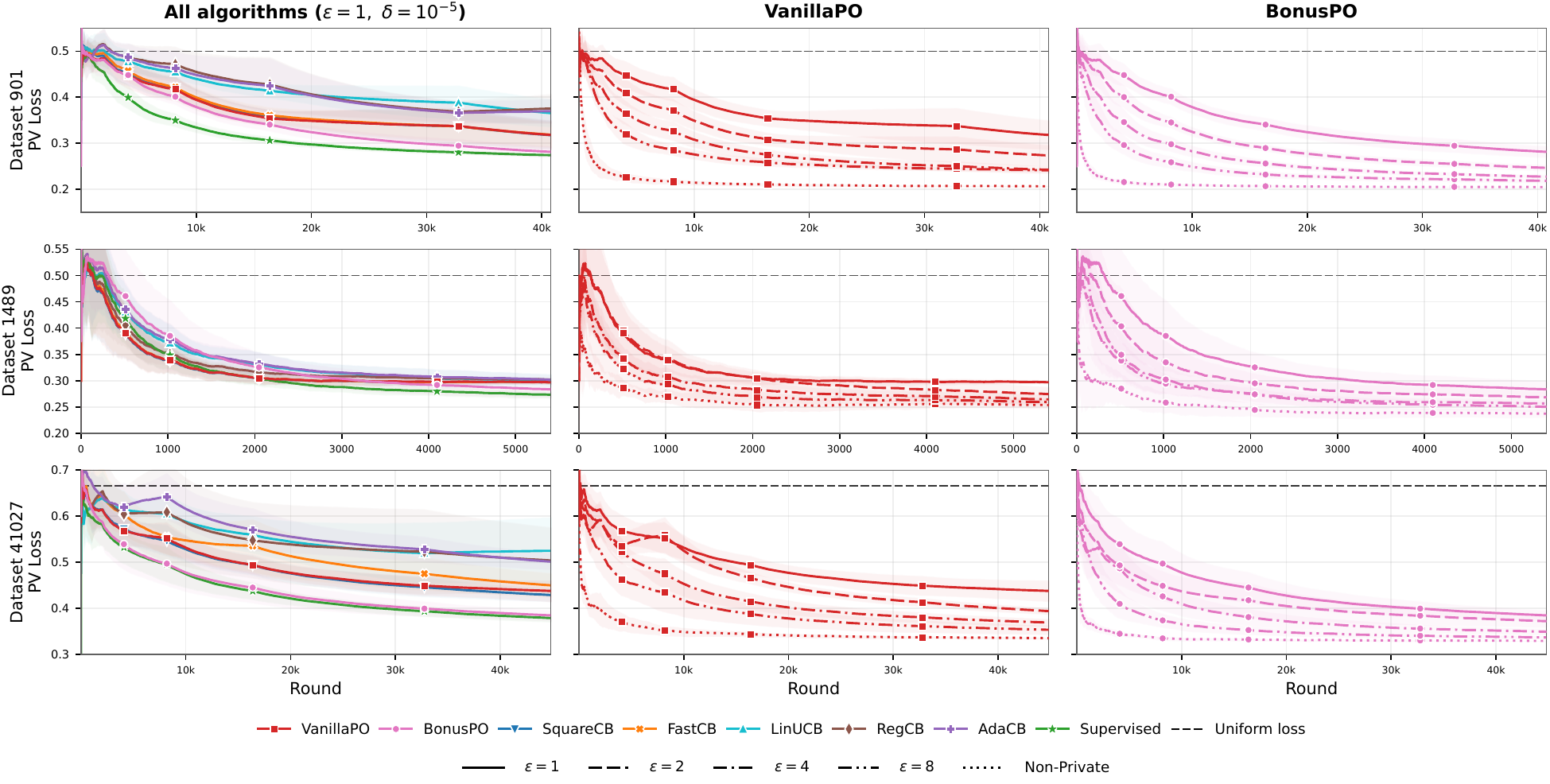}
\caption{Progressive-validation loss on \textsc{OpenML} datasets 901, 1489, and 41027 (rows). The left column compares all algorithms at $(\varepsilon,\delta)=(1,10^{-5})$, while the middle and right columns show \algname{VanillaPO} and \algname{BonusPO}, respectively, across privacy levels.}
\label{fig:experiments-privacy}
\end{figure*}

\paragraph{Discussion of results in Figure~\ref{fig:experiments-privacy}.}
At the strongest tested private regime, $(\varepsilon,\delta)=(1,10^{-5})$, datasets 901 and 41027 exhibit a clear separation into three groups: \algname{BonusPO} is the strongest bandit method and approaches the supervised reference, followed by \algname{VanillaPO}, \algname{SquareCB}, and \algname{FastCB}, while \algname{LinUCB}, \algname{RegCB}, and \algname{AdaCB} perform worse. On dataset 1489, the algorithms are closer together, although the same separation begins to emerge near the end. \Cref{app:experiments} shows that these patterns persist more broadly across 100 \textsc{OpenML} datasets.

A plausible explanation for this behavior is that stronger privacy introduces more noise into the fitted action-loss predictors. The exponential updates of \algname{VanillaPO} and \algname{BonusPO} change the policy gradually in response to these predictors, whereas the baseline methods translate each updated fit more directly into their action probabilities and can therefore react more strongly to this noise. \algname{BonusPO} additionally benefits from its exploration bonus, which favors actions that have received little past probability and provides additional robustness when the fitted losses are inaccurate.

The remaining panels show that \algname{VanillaPO} and \algname{BonusPO} degrade gracefully as privacy becomes stronger. In \cref{app:experiments}, we provide the full privacy-regime and algorithm-wise plots for these three datasets, allowing the same trends to be compared across all methods and privacy regimes.

\section{Discussion}
Our results establish theoretical guarantees for one of the simplest forms of policy optimization, an approach widely used in practice. Vanilla PO achieves near-optimal regret in stochastic contextual bandits without explicit exploration bonuses under realizability and using an offline regression oracle. Gradual updates preserve exploration during the initial rounds, helping the regression oracle correct predictions that make suboptimal actions appear better than they are.
Batching makes this approach attractive for both computation and privacy. The batched algorithms use $O(\log T)$ regression calls and policy switches. Fitting each batch once with a private oracle protects the released predictors and policy maps without additional privacy cost across epochs. Regret and computational efficiency depend on the oracle's accuracy and runtime, respectively.

The experiments suggest that exploration bonuses can remain useful even when they are unnecessary for the regret guarantee. The historical-replay variant of bonus PO performs strongly in our comparisons, including under privacy constraints. These findings motivate understanding when additional exploration improves robustness to private prediction errors.

Our guarantees rely on stochastic contexts, realizability, and exact policy updates. A natural direction is to investigate whether the principles underlying our exploration and privacy analyses can guide algorithm design in more complex interactive settings, such as reinforcement learning. Such extensions may require modified policy updates or additional exploration. Another direction is to improve the privacy--utility tradeoff within our regression-oracle framework.

\subsection*{Acknowledgment}
This work is supported by the European Research Council (ERC) under the European Union’s Horizon 2020 research and innovation program (grant agreement No. 882396); the Israel Science Foundation (grants 1357/24 and 1419/24); the Blavatnik Foundation at Tel Aviv University; the Tel Aviv University Center for AI and Data Science (TAD); and the National Science Foundation (grant ECCS-2217023). Orin Levy is also supported by a Google PhD Fellowship (2025). This research was enabled in part by computational resources provided by the Vector Institute and the Digital Research Alliance of Canada.

\subsection*{AI use statement}
Generative AI assistance was used to check and develop mathematical arguments, verify references, assist with code development and debugging, and draft and revise the exposition. The authors are responsible for the final content.

\bibliography{references}

@inproceedings{agarwal2014,
  title = {Taming the monster: A fast and simple algorithm for contextual bandits},
  author = {Agarwal, Alekh and Hsu, Daniel and Kale, Satyen and Langford, John and Li, Lihong and Schapire, Robert},
  booktitle = {Proceedings of the 31st International Conference on Machine Learning},
  volume = {32},
  pages = {1638--1646},
  year = {2014},
  series = {Proceedings of Machine Learning Research},
  publisher = {PMLR}
}

@article{dwork-roth-2014,
  title = {The Algorithmic Foundations of Differential Privacy},
  author = {Dwork, Cynthia and Roth, Aaron},
  journal = {Foundations and Trends in Theoretical Computer Science},
  volume = {9},
  number = {3--4},
  pages = {211--407},
  year = {2014},
  publisher = {Now Publishers Inc.}
}

@article{howard-2020,
  title = {Time-uniform {Chernoff} bounds via nonnegative supermartingales},
  author = {Howard, Steven R. and Ramdas, Aaditya and McAuliffe, Jon and Sekhon, Jasjeet},
  journal = {Probability Surveys},
  volume = {17},
  pages = {257--317},
  year = {2020}
}

@inproceedings{levy2026,
  title = {Optimal Regret for Policy Optimization in Contextual Bandits},
  author = {Levy, Orin and Mansour, Yishay},
  booktitle = {Proceedings of the 43rd International Conference on Machine Learning},
  year = {2026},
  note = {arXiv:2602.13700}
}

@article{falcon,
  title = {Bypassing the monster: A faster and simpler optimal algorithm for contextual bandits under realizability},
  author = {Simchi-Levi, David and Xu, Yunzong},
  journal = {Mathematics of Operations Research},
  volume = {47},
  number = {3},
  pages = {1904--1931},
  year = {2022},
  publisher = {INFORMS}
}

@article{bietti2021bakeoff,
  title = {A Contextual Bandit Bake-off},
  author = {Bietti, Alberto and Agarwal, Alekh and Langford, John},
  journal = {Journal of Machine Learning Research},
  volume = {22},
  number = {133},
  pages = {1--49},
  year = {2021}
}

@inproceedings{foster2021efficient,
  title = {Efficient First-Order Contextual Bandits: Prediction, Allocation, and Triangular Discrimination},
  author = {Foster, Dylan J. and Krishnamurthy, Akshay},
  booktitle = {Advances in Neural Information Processing Systems},
  volume = {34},
  pages = {18907--18919},
  year = {2021}
}

@inproceedings{li2010contextual,
  title = {A Contextual-Bandit Approach to Personalized News Article Recommendation},
  author = {Li, Lihong and Chu, Wei and Langford, John and Schapire, Robert E.},
  booktitle = {Proceedings of the 19th International Conference on World Wide Web},
  pages = {661--670},
  year = {2010},
  publisher = {ACM}
}

@inproceedings{bun2016concentrated,
  title = {Concentrated Differential Privacy: Simplifications, Extensions, and Lower Bounds},
  author = {Bun, Mark and Steinke, Thomas},
  booktitle = {Theory of Cryptography},
  volume = {9985},
  pages = {635--658},
  year = {2016},
  series = {Lecture Notes in Computer Science},
  publisher = {Springer}
}

@inproceedings{shariff2018private,
  title = {Differentially Private Contextual Linear Bandits},
  author = {Shariff, Roshan and Sheffet, Or},
  booktitle = {Advances in Neural Information Processing Systems},
  volume = {31},
  year = {2018},
  publisher = {Curran Associates, Inc.}
}

@article{agarwal2017decisions,
  title = {Making Contextual Decisions with Low Technical Debt},
  author = {Agarwal, Alekh and Bird, Sarah and Cozowicz, Markus and Hoang, Luong and Langford, John and Lee, Stephen and Li, Jiaji and Melamed, Dan and Oshri, Gal and Ribas, Oswaldo and Sen, Siddhartha and Slivkins, Alex},
  journal = {arXiv preprint arXiv:1606.03966},
  year = {2017}
}

@article{xu2020uccb,
  title = {Upper Counterfactual Confidence Bounds: A New Optimism Principle for Contextual Bandits},
  author = {Xu, Yunbei and Zeevi, Assaf},
  journal = {arXiv preprint arXiv:2007.07876},
  year = {2020}
}

@article{schulman2017ppo,
  title = {Proximal Policy Optimization Algorithms},
  author = {Schulman, John and Wolski, Filip and Dhariwal, Prafulla and Radford, Alec and Klimov, Oleg},
  journal = {arXiv preprint arXiv:1707.06347},
  year = {2017}
}

@article{chen2025private,
  title = {Beyond Covariance Matrix: The Statistical Complexity of Private Linear Regression},
  author = {Chen, Fan and Li, Jiachun and Rakhlin, Alexander and Simchi-Levi, David},
  journal = {arXiv preprint arXiv:2502.13115},
  year = {2025}
}

@inproceedings{he2025privatepo,
  title = {On the Sample Complexity of Differentially Private Policy Optimization},
  author = {He, Yi and Zhou, Xingyu},
  booktitle = {Advances in Neural Information Processing Systems},
  volume = {38},
  pages = {11181--11217},
  year = {2025},
  publisher = {Curran Associates, Inc.}
}

@inproceedings{zheng2020local,
  title = {Locally Differentially Private (Contextual) Bandits Learning},
  author = {Zheng, Kai and Cai, Tianle and Huang, Weiran and Li, Zhenguo and Wang, Liwei},
  booktitle = {Advances in Neural Information Processing Systems},
  volume = {33},
  pages = {12300--12310},
  year = {2020},
  publisher = {Curran Associates, Inc.}
}

@article{zhou2022privaterl,
  title = {Differentially Private Reinforcement Learning with Linear Function Approximation},
  author = {Zhou, Xingyu},
  journal = {Proceedings of the ACM on Measurement and Analysis of Computing Systems},
  volume = {6},
  number = {1},
  pages = {1--27},
  year = {2022}
}

@inproceedings{foster2018regression,
  title = {Practical Contextual Bandits with Regression Oracles},
  author = {Foster, Dylan and Agarwal, Alekh and Dudik, Miroslav and Luo, Haipeng and Schapire, Robert},
  booktitle = {Proceedings of the 35th International Conference on Machine Learning},
  volume = {80},
  pages = {1539--1548},
  year = {2018},
  series = {Proceedings of Machine Learning Research},
  publisher = {PMLR}
}

@inproceedings{shani2020optimistic,
  title = {Optimistic Policy Optimization with Bandit Feedback},
  author = {Shani, Lior and Efroni, Yonathan and Rosenberg, Aviv and Mannor, Shie},
  booktitle = {Proceedings of the 37th International Conference on Machine Learning},
  volume = {119},
  pages = {8604--8613},
  year = {2020},
  series = {Proceedings of Machine Learning Research},
  publisher = {PMLR}
}

@inproceedings{foster2020squarecb,
  title = {Beyond {UCB}: Optimal and Efficient Contextual Bandits with Regression Oracles},
  author = {Foster, Dylan and Rakhlin, Alexander},
  booktitle = {Proceedings of the 37th International Conference on Machine Learning},
  volume = {119},
  pages = {3199--3210},
  year = {2020},
  series = {Proceedings of Machine Learning Research},
  publisher = {PMLR}
}

@inproceedings{vietri2020private,
  title = {Private Reinforcement Learning with {PAC} and Regret Guarantees},
  author = {Vietri, Giuseppe and Balle, Borja and Krishnamurthy, Akshay and Wu, Steven},
  booktitle = {Proceedings of the 37th International Conference on Machine Learning},
  volume = {119},
  pages = {9754--9764},
  year = {2020},
  series = {Proceedings of Machine Learning Research},
  publisher = {PMLR}
}

@inproceedings{sajed2019private,
  title = {An Optimal Private Stochastic-{MAB} Algorithm based on Optimal Private Stopping Rule},
  author = {Sajed, Touqir and Sheffet, Or},
  booktitle = {Proceedings of the 36th International Conference on Machine Learning},
  volume = {97},
  pages = {5579--5588},
  year = {2019},
  series = {Proceedings of Machine Learning Research},
  publisher = {PMLR}
}

@inproceedings{cai2020exploration,
  title = {Provably Efficient Exploration in Policy Optimization},
  author = {Cai, Qi and Yang, Zhuoran and Jin, Chi and Wang, Zhaoran},
  booktitle = {Proceedings of the 37th International Conference on Machine Learning},
  volume = {119},
  pages = {1283--1294},
  year = {2020},
  series = {Proceedings of Machine Learning Research},
  publisher = {PMLR}
}

@inproceedings{pavlovic2025private,
  title = {Differentially private kernelized contextual bandits},
  author = {Pavlovic, Nikola and Salgia, Sudeep and Zhao, Qing},
  booktitle = {Proceedings of the 28th International Conference on Artificial Intelligence and Statistics},
  volume = {258},
  pages = {4618--4626},
  year = {2025},
  series = {Proceedings of Machine Learning Research},
  publisher = {PMLR}
}

@inproceedings{chenrakhlin2025decision,
  title = {Decision making in changing environments: Robustness, query-based learning, and differential privacy},
  author = {Chen, Fan and Rakhlin, Alexander},
  booktitle = {Proceedings of Thirty Eighth Conference on Learning Theory},
  volume = {291},
  pages = {983--985},
  year = {2025},
  series = {Proceedings of Machine Learning Research},
  publisher = {PMLR}
}

@inproceedings{he2026general,
  title = {Towards Differentially Private Reinforcement Learning with General Function Approximation},
  author = {He, Yi and Zhou, Xingyu},
  booktitle = {Advances in Neural Information Processing Systems},
  year = {2026},
  note = {To appear. arXiv:2605.07049}
}

@article{Slivkins2019,
  title = {Introduction to Multi-Armed Bandits},
  author = {Aleksandrs Slivkins},
  journal = {Foundations and Trends in Machine Learning},
  volume = {12},
  number = {1--2},
  pages = {1--286},
  year = {2019},
  publisher = {Now Publishers}
}

@book{lattimore2019bandit,
  title = {Bandit Algorithms},
  author = {Lattimore, Tor and Szepesv{\'a}ri, Csaba},
  year = {2020},
  publisher = {Cambridge University Press}
}

@article{agarwal2019reinforcement,
  title={Reinforcement learning: Theory and algorithms},
  author={Agarwal, Alekh and Jiang, Nan and Kakade, Sham M and Sun, Wen},
  journal={CS Dept., UW Seattle, Seattle, WA, USA, Tech. Rep},
  volume={32},
  pages={96},
  year={2019}
}

@inproceedings{agarwal2012contextual,
  title = {Contextual bandit learning with predictable rewards},
  author = {Agarwal, Alekh and Dud{\'\i}k, Miroslav and Kale, Satyen and Langford, John and Schapire, Robert},
  booktitle = {Proceedings of the Fifteenth International Conference on Artificial Intelligence and Statistics},
  volume = {22},
  pages = {19--26},
  year = {2012},
  series = {Proceedings of Machine Learning Research},
  publisher = {PMLR}
}

@inproceedings{chaudhuri2008privacy,
  title = {Privacy-preserving logistic regression},
  author = {Chaudhuri, Kamalika and Monteleoni, Claire},
  booktitle = {Advances in Neural Information Processing Systems},
  volume = {21},
  pages = {289--296},
  year = {2008}
}

@article{chaudhuri2011private,
  title = {Differentially Private Empirical Risk Minimization},
  author = {Chaudhuri, Kamalika and Monteleoni, Claire and Sarwate, Anand D.},
  journal = {Journal of Machine Learning Research},
  volume = {12},
  number = {29},
  pages = {1069--1109},
  year = {2011}
}

@inproceedings{abbasi2019politex,
  title = {{POLITEX}: Regret Bounds for Policy Iteration using Expert Prediction},
  author = {Abbasi-Yadkori, Yasin and Bartlett, Peter and Bhatia, Kush and Lazic, Nevena and Szepesvari, Csaba and Weisz, Gell{\'e}rt},
  booktitle = {Proceedings of the 36th International Conference on Machine Learning},
  volume = {97},
  pages = {3692--3702},
  year = {2019},
  series = {Proceedings of Machine Learning Research},
  publisher = {PMLR}
}

@article{agarwal2021theory,
  title = {On the theory of policy gradient methods: Optimality, approximation, and distribution shift},
  author = {Agarwal, Alekh and Kakade, Sham M and Lee, Jason D and Mahajan, Gaurav},
  journal = {Journal of Machine Learning Research},
  volume = {22},
  number = {98},
  pages = {1--76},
  year = {2021}
}

@inproceedings{han2021generalized,
  title = {Generalized linear bandits with local differential privacy},
  author = {Han, Yuxuan and Liang, Zhipeng and Wang, Yang and Zhang, Jiheng},
  booktitle = {Advances in Neural Information Processing Systems},
  volume = {34},
  pages = {26511--26522},
  year = {2021}
}

@article{chowdhury2022differentially,
  title = {Differentially Private Regret Minimization in Episodic {Markov} Decision Processes},
  author = {Chowdhury, Sayak Ray and Zhou, Xingyu},
  journal = {Proceedings of the AAAI Conference on Artificial Intelligence},
  volume = {36},
  number = {6},
  pages = {6375--6383},
  year = {2022}
}

@inproceedings{bassily2014privateerm,
  title = {Private Empirical Risk Minimization: Efficient Algorithms and Tight Error Bounds},
  author = {Bassily, Raef and Smith, Adam and Thakurta, Abhradeep},
  booktitle = {2014 IEEE 55th Annual Symposium on Foundations of Computer Science},
  pages = {464--473},
  year = {2014},
  publisher = {IEEE}
}

@article{bastani2021mostly,
  title = {Mostly Exploration-Free Algorithms for Contextual Bandits},
  author = {Bastani, Hamsa and Bayati, Mohsen and Khosravi, Khashayar},
  journal = {Management Science},
  volume = {67},
  number = {3},
  pages = {1329--1349},
  year = {2021}
}

@inproceedings{cesabianchi2017boltzmann,
  title = {{Boltzmann} Exploration Done Right},
  author = {Cesa-Bianchi, Nicol\`{o} and Gentile, Claudio and Lugosi, Gabor and Neu, Gergely},
  booktitle = {Advances in Neural Information Processing Systems},
  volume = {30},
  year = {2017}
}

@inproceedings{chowdhury2022shuffle,
  title = {Shuffle Private Linear Contextual Bandits},
  author = {Chowdhury, Sayak Ray and Zhou, Xingyu},
  booktitle = {Proceedings of the 39th International Conference on Machine Learning},
  volume = {162},
  pages = {3984--4009},
  year = {2022},
  series = {Proceedings of Machine Learning Research},
  publisher = {PMLR}
}

@inproceedings{garcelon2021localrl,
  title = {Local Differential Privacy for Regret Minimization in Reinforcement Learning},
  author = {Garcelon, Evrard and Perchet, Vianney and Pike-Burke, Ciara and Pirotta, Matteo},
  booktitle = {Advances in Neural Information Processing Systems},
  volume = {34},
  pages = {10561--10573},
  year = {2021}
}

@inproceedings{garcelon2022shuffle,
  title = {Privacy Amplification via Shuffling for Linear Contextual Bandits},
  author = {Garcelon, Evrard and Chaudhuri, Kamalika and Perchet, Vianney and Pirotta, Matteo},
  booktitle = {Proceedings of the 33rd International Conference on Algorithmic Learning Theory},
  volume = {167},
  pages = {381--407},
  year = {2022},
  series = {Proceedings of Machine Learning Research},
  publisher = {PMLR}
}

@article{hanna2024private,
  title = {Differentially Private Stochastic Linear Bandits: (Almost) for Free},
  author = {Hanna, Osama and Girgis, Antonious M. and Fragouli, Christina and Diggavi, Suhas},
  journal = {IEEE Journal on Selected Areas in Information Theory},
  volume = {5},
  pages = {135--147},
  year = {2024}
}

@inproceedings{kannan2018smoothed,
  title = {A Smoothed Analysis of the Greedy Algorithm for the Linear Contextual Bandit Problem},
  author = {Kannan, Sampath and Morgenstern, Jamie H. and Roth, Aaron and Waggoner, Bo and Wu, Zhiwei Steven},
  booktitle = {Advances in Neural Information Processing Systems},
  volume = {31},
  year = {2018}
}

@inproceedings{krishnamurthy2021misspecification,
  title = {Adapting to Misspecification in Contextual Bandits with Offline Regression Oracles},
  author = {Krishnamurthy, Sanath Kumar and Hadad, Vitor and Athey, Susan},
  booktitle = {Proceedings of the 38th International Conference on Machine Learning},
  volume = {139},
  pages = {5805--5814},
  year = {2021},
  series = {Proceedings of Machine Learning Research},
  publisher = {PMLR}
}

@inproceedings{luo2021dilated,
  title = {Policy Optimization in Adversarial {MDP}s: Improved Exploration via Dilated Bonuses},
  author = {Luo, Haipeng and Wei, Chen-Yu and Lee, Chung-Wei},
  booktitle = {Advances in Neural Information Processing Systems},
  volume = {34},
  pages = {22931--22942},
  year = {2021}
}

@inproceedings{pavlovic2025projections,
  title = {Differential Privacy in Kernelized Contextual Bandits via Random Projections},
  author = {Pavlovic, Nikola and Salgia, Sudeep and Zhao, Qing},
  booktitle = {Proceedings of the 29th International Conference on Artificial Intelligence and Statistics},
  volume = {300},
  pages = {1135--1143},
  year = {2026},
  series = {Proceedings of Machine Learning Research},
  publisher = {PMLR}
}

@inproceedings{qiao2023private,
  title = {Near-Optimal Differentially Private Reinforcement Learning},
  author = {Qiao, Dan and Wang, Yu-Xiang},
  booktitle = {Proceedings of the 26th International Conference on Artificial Intelligence and Statistics},
  volume = {206},
  pages = {9914--9940},
  year = {2023},
  series = {Proceedings of Machine Learning Research},
  publisher = {PMLR}
}

@inproceedings{qin2026monster,
  title = {Taming the Monster Every Context: Complexity Measure and Unified Framework for Offline-Oracle Efficient Contextual Bandits},
  author = {Qin, Hao and Zhang, Chicheng},
  booktitle = {Proceedings of Thirty Ninth Conference on Learning Theory},
  volume = {336},
  pages = {5399--5464},
  year = {2026},
  series = {Proceedings of Machine Learning Research},
  publisher = {PMLR}
}

@article{sarmasarkar2026shuffle,
  title = {Shuffle and Joint Differential Privacy for Generalized Linear Contextual Bandits},
  author = {Sarmasarkar, Sahasrajit},
  journal = {arXiv preprint arXiv:2602.00417},
  year = {2026}
}

@inproceedings{tenenbaum2023concurrent,
  title = {Concurrent Shuffle Differential Privacy Under Continual Observation},
  author = {Tenenbaum, Jay and Kaplan, Haim and Mansour, Yishay and Stemmer, Uri},
  booktitle = {Proceedings of the 40th International Conference on Machine Learning},
  volume = {202},
  pages = {33961--33982},
  year = {2023},
  series = {Proceedings of Machine Learning Research},
  publisher = {PMLR}
}

@inproceedings{dwork2006calibrating,
  title = {Calibrating noise to sensitivity in private data analysis},
  author = {Dwork, Cynthia and McSherry, Frank and Nissim, Kobbi and Smith, Adam},
  booktitle = {Theory of Cryptography},
  volume = {3876},
  pages = {265--284},
  year = {2006},
  series = {Lecture Notes in Computer Science},
  publisher = {Springer}
}

@inproceedings{kearns2014mechanism,
  title = {Mechanism design in large games: Incentives and privacy},
  author = {Kearns, Michael and Pai, Mallesh M. and Roth, Aaron and Ullman, Jonathan},
  booktitle = {Proceedings of the 5th Conference on Innovations in Theoretical Computer Science},
  pages = {403--410},
  year = {2014}
}

@inproceedings{hsu2014private,
  title = {Private matchings and allocations},
  author = {Hsu, Justin and Huang, Zhiyi and Roth, Aaron and Roughgarden, Tim and Wu, Zhiwei Steven},
  booktitle = {Proceedings of the 46th Annual ACM Symposium on Theory of Computing},
  pages = {21--30},
  year = {2014}
}

@inproceedings{wang2018revisiting,
  title = {Revisiting Differentially Private Linear Regression: Optimal and Adaptive Prediction \& Estimation in Unbounded Domain},
  author = {Wang, Yu-Xiang},
  booktitle = {Proceedings of the 34th Conference on Uncertainty in Artificial Intelligence},
  pages = {93--103},
  year = {2018},
  publisher = {AUAI Press}
}

@article{liu1989limited,
  title = {On the Limited Memory {BFGS} Method for Large Scale Optimization},
  author = {Liu, Dong C. and Nocedal, Jorge},
  journal = {Mathematical Programming},
  volume = {45},
  pages = {503--528},
  year = {1989}
}

@inproceedings{foster2021instance,
  title     = {Instance-Dependent Complexity of Contextual Bandits and Reinforcement Learning: A Disagreement-Based Perspective},
  author    = {Foster, Dylan and Rakhlin, Alexander and Simchi-Levi, David and Xu, Yunzong},
  booktitle = {Proceedings of the 34th Annual Conference on Learning Theory},
  pages     = {2059--2059},
  year      = {2021},
  volume    = {134},
  series    = {Proceedings of Machine Learning Research},
  publisher = {PMLR}
}
\clearpage
\appendix

\section{Additional related work}
\label{app:related-work}

\paragraph{Contextual bandits with optimization oracles.}
Oracle reductions separate exploration from the computational task
of learning a predictor or policy. \citet{agarwal2014} obtain
near-optimal regret using a cost-sensitive classification oracle
and optimization over policy distributions. Under realizability,
\citet{foster2018regression} develop regression-based exploration
methods, while \citet{falcon} obtain optimal rates through inverse-gap
sampling. Their FALCON algorithm uses finite-class least-squares
regression; FALCON+ accommodates a general offline regression oracle
trained on the preceding epoch. Our batched methods share this
fixed-policy data-collection structure, but construct the next policy
through averaged exponential updates. \citet{xu2020uccb} develop
counterfactual confidence bounds. Further work studies robustness
to misspecification \citep{krishnamurthy2021misspecification} and
offline-oracle reductions for large action spaces
\citep{qin2026monster}. The empirical study of
\citet{bietti2021bakeoff} motivates comparisons across oracle-based
methods beyond their worst-case regret guarantees.

Online regression provides a complementary interface. SquareCB
\citep{foster2020squarecb} reduces contextual bandits to online
square-loss regression and handles adversarial contexts under
realizability, with a guarantee expressed through the online oracle's
regret. Our vanilla result instead uses an offline regression
guarantee averaged over past data-collection policies. The common
stochastic context distribution connects this guarantee to the
prediction errors affecting future policy updates. The distinction
therefore concerns both the oracle interface and the context model.

\paragraph{Exploration in policy optimization.}
The empirical success of PO, exemplified by PPO
\citep{schulman2017ppo}, is complemented by theoretical analyses of
exponential policy updates and natural policy-gradient methods
\citep{abbasi2019politex,agarwal2021theory}.
Optimistic PO algorithms obtain regret guarantees in tabular and
structured Markov decision processes
\citep{shani2020optimistic,cai2020exploration}, including adversarial
settings through exploration bonuses \citep{luo2021dilated}.
For stochastic contextual bandits with general offline function
approximation, \citet{levy2026} prove near-optimal regret for PO
with a bonus based on accumulated action probabilities.
Our first result establishes that the vanilla exponential update
already suffices under stochastic contexts, realizability, and the
stated regression-oracle guarantee. Our bonus-based result is a
batched private adaptation of their method. These results concern
the contextual-bandit setting; bonuses can still improve empirical
performance, as our experiments illustrate.

\paragraph{Exploration supplied by the context distribution.}
Prior work shows that greedy contextual-bandit algorithms can succeed
under distributional conditions such as covariate diversity or
smoothed contexts \citep{bastani2021mostly,kannan2018smoothed}.
Our algorithm uses gradual exponential updates rather than greedy
action selection, and its guarantee does not require these additional
conditions on the context distribution. It also differs from
Boltzmann exploration based on current empirical arm means, whose
limitations are analyzed by \citet{cesabianchi2017boltzmann}:
unrolling our updates gives weights based on the sum of successive
fitted predictors.

\paragraph{Private bandits and privacy models.}
\citet{sajed2019private} study private stochastic multi-armed bandits
using private stopping rules and successive elimination.
\citet{hanna2024private} study stochastic linear bandits under
central, local, and shuffle privacy.
For contextual bandits, \citet{shariff2018private} establish regret
guarantees for linear models under joint differential privacy,
and \citet{chen2025private} connect private linear regression to
private contextual-bandit learning. Local privacy for contextual
bandits, including generalized linear models, is studied by
\citet{zheng2020local,han2021generalized}.
Shuffle-private linear contextual bandits are studied by
\citet{garcelon2022shuffle,chowdhury2022shuffle}.
\citet{tenenbaum2023concurrent} introduce the concurrent shuffle
model, obtain improved guarantees for continual summation, and
apply them to contextual linear bandits.
\citet{sarmasarkar2026shuffle} studies shuffle and joint privacy
for generalized linear contextual bandits.
Kernel methods provide another extension beyond linear models:
\citet{pavlovic2025private} study private policy learning in
kernelized contextual bandits, and
\citet{pavlovic2025projections} establish cumulative-regret
guarantees using private kernel regression.

\paragraph{Private reinforcement learning.}
\citet{vietri2020private} establish PAC and regret guarantees for
private tabular reinforcement learning, in the fundamental model of tabular Markov Decision Process (MDP).
Later, \citet{garcelon2021localrl} study regret minimization under local
privacy in MDPs, protecting trajectories on the user's side.
\citet{chowdhury2022differentially} develop private policy-optimization
and value-iteration methods for episodic tabular MDPs under joint
and local privacy, while \citet{qiao2023private} obtain improved
regret guarantees in these models.
\citet{zhou2022privaterl} studies private exploration with linear
function approximation in linear-mixture MDPs.
The policy-optimization for MDPs methods of
\citet{chowdhury2022differentially,zhou2022privaterl} use optimistic
value estimates, including exploration bonuses.
Beyond particular parametric models,
\citet{chenrakhlin2025decision} develop a general framework for
decision making that includes local differential privacy.
\citet{he2026general} study online reinforcement learning in
finite-horizon episodic MDPs with general value-function approximation
under joint differential privacy. Their approach combines batched
policy updates with the exponential mechanism and achieves
$\widetilde{O}(T^{3/5})$ dependence on the number of episodes $T$,
with additional dependence on coverability, function-class complexity,
the horizon, and privacy parameters. In contrast, we study realizable
stochastic contextual bandits and analyze vanilla policy optimization,
whose policy updates rely solely on exponential weighting of predicted
losses. We obtain near-optimal $\widetilde{O}(\sqrt{T})$ dependence on
the number of rounds $T$, without coverability assumptions or additional
structural complexity measures: the dependence on the loss-prediction
class is captured entirely by its regression-learning complexity.
We also consider a different privacy model, requiring differential
privacy of the complete released transcript of predictors and policy
maps rather than joint differential privacy.

\paragraph{Private PO through regression.}
Especially close to our approach, \citet{he2025privatepo} analyze
private PG, NPG, and REBEL, including reductions to private
regression using fresh batches. We share their principle of
using each batch in one private update.
Their guarantees include sample-complexity and average
policy-suboptimality bounds.
For NPG, regression estimates advantages using policy-gradient
features, and the guarantee depends on coverage by a base policy;
their sampling procedure obtains feedback for two actions at
each context.
Our algorithms instead fit expected losses from one observed
action loss per arriving context.
We analyze how the resulting batched policies provide sufficient
exploration to control cumulative regret over all interactions,
under realizability and the stated regression-oracle guarantee.

\paragraph{Scope of our privacy and computational guarantees.}
Our guarantee is central differential privacy for the complete
sequence of released predictors and policy maps under replacement
of one user's context and potential-loss vector. Actions and
evaluations at protected contexts remain internal.
This release convention differs from joint privacy for user-specific
actions \citep{shariff2018private,vietri2020private}, local privacy
for individual messages, and privacy for shuffled messages.
Comparisons must therefore account for the protected records,
observable outputs, and trust assumptions.
Our dependence on the prediction class enters through the private
regression oracle's accuracy. Private empirical risk minimization
provides relevant computational tools under suitable conditions on
the loss and parameter space
\citep{chaudhuri2011private,bassily2014privateerm}.
Computational efficiency additionally requires an efficient
implementation of the oracle; in particular,
our finite-class exponential-mechanism instantiation does not assume
efficient sampling over an arbitrary function class.

\section{Concentration of pseudo-regret}
\label{app:concentration}

Our regret analyses control
$\sum_{t=1}^T\mathcal R(\pi_t)$, which evaluates each learned
policy under the context distribution $\D$.
The regret $\Reg$ instead evaluates each policy at the context
observed in that round, while still averaging over the current
action. The policies remain random in both quantities.
The following proposition bounds their difference.

\begin{proposition}
\label{cor:mean-regret}
Suppose that each policy $\pi_t$ is chosen using
$\mathcal H_{t-1}$, as in \cref{sec:setup}.
For every $\delta\in(0,1)$, with probability at least $1-\delta$,
\begin{equation*}
 \Reg
 \le\sum_{t=1}^T\mathcal R(\pi_t)
   +\sqrt{\frac T2\log\frac1\delta}.
\end{equation*}
\end{proposition}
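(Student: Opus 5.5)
We will apply the Azuma--Hoeffding inequality to a bounded martingale difference sequence.

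For each round $t$, define
\[
 X_t=\sum_{a\in\A}\pi_t(c_t,a)\Delta(c_t,a),
 \qquad
 Y_t=X_t-\mathcal R(\pi_t).
\]
By definition, $\Reg=\sum_{t=1}^T X_t$. It therefore suffices to show that $\sum_{t=1}^T Y_t\le\sqrt{(T/2)\log(1/\delta)}$ with probability at least $1-\delta$.

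Let $\mathcal G_t$ be the $\sigma$-algebra generated by $\mathcal H_t$, that is, by everything revealed through the end of round $t$. We claim that $(Y_t)$ is a martingale difference sequence with respect to $(\mathcal G_t)$.

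First, $\pi_t$ is $\mathcal G_{t-1}$-measurable, because it is chosen from $\mathcal H_{t-1}$. Second, $c_t$ is drawn from $\D$ independently of $\mathcal H_{t-1}$, by the i.i.d. assumption and the independence of contexts from the learner's randomness. Together these give
\[
 \E[X_t\mid\mathcal G_{t-1}]
 =\E_{c\sim\D}\Bigl[\sum_{a\in\A}\pi_t(c,a)\Delta(c,a)\Bigr]
 =\mathcal R(\pi_t).
\]
The freezing step needs a little care: the map $(h,c)\mapsto\sum_a\pi_t^{h}(c,a)\Delta(c,a)$ should be jointly measurable, which is a standard technicality. Third, $X_t$ is $\mathcal G_t$-measurable, since it depends only on $\pi_t$ and $c_t$. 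So $Y_t$ is adapted and $\E[Y_t\mid\mathcal G_{t-1}]=0$.

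For boundedness, note that $\Delta(c,a)\in[0,1]$ because $f^\star$ takes values in $[0,1]$. Hence $X_t\in[0,1]$, being a convex combination of such gaps. Conditionally on $\mathcal G_{t-1}$, the variable $Y_t$ therefore lies in the interval $[-\mathcal R(\pi_t),\,1-\mathcal R(\pi_t)]$. This interval has length $1$, and its endpoints are $\mathcal G_{t-1}$-measurable.

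Apply the Azuma--Hoeffding inequality in its form with predictable ranges. This is Hoeffding's lemma applied conditionally, giving $\E[e^{\lambda Y_t}\mid\mathcal G_{t-1}]\le e^{\lambda^2/8}$, and then iterated. The result is
\[
 \Pp\Bigl(\sum_{t=1}^T Y_t\ge u\Bigr)\le\exp(-2u^2/T).
\]
Setting $u=\sqrt{(T/2)\log(1/\delta)}$ makes the right-hand side equal to $\delta$. On the complementary event, $\Reg=\sum_t\mathcal R(\pi_t)+\sum_t Y_t\le\sum_t\mathcal R(\pi_t)+u$, which is the claim.

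The only delicate point is the conditional-expectation identity. It requires that $c_t$ be independent of $\mathcal H_{t-1}$ and of the choice of $\pi_t$. This holds by the protocol in \cref{sec:setup}, since $\pi_t$ is fixed before $c_t$ is observed. For the batched algorithms, the auxiliary sampler variables $K_t$ are absorbed into the history after round $t$, so the same argument applies. Once this identity is in place, the concentration step is routine.
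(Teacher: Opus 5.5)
Your proposal is correct and follows essentially the same route as the paper's proof. Conditional on $\mathcal H_{t-1}$, the per-round regret lies in $[0,1]$ and has mean $\mathcal R(\pi_t)$. The paper then applies conditional Hoeffding's lemma, iterates it over rounds, and uses exponential Markov with $\lambda=4u/T$ to obtain the tail bound $e^{-2u^2/T}$, exactly as you do; your added remarks on measurability and the sampler variables $K_t$ are harmless refinements.
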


\begin{proof}
Conditional on $\mathcal H_{t-1}$, the policy $\pi_t$ is fixed
and the fresh context $c_t$ has distribution $\D$.
Therefore,
\begin{equation*}
 \E\!\left[
   \sum_{a\in\A}\pi_t(c_t,a)\Delta(c_t,a)
   \,\middle|\,\mathcal H_{t-1}
 \right]
 =\mathcal R(\pi_t).
\end{equation*}
Since every gap lies in $[0,1]$ and the action probabilities
sum to one,
\begin{equation*}
 0\le\sum_{a\in\A}\pi_t(c_t,a)\Delta(c_t,a)\le1.
\end{equation*}
Conditional Hoeffding's lemma thus gives, for every $\lambda>0$,
\begin{equation*}
 \E\!\left[
   \exp\!\left\{
     \lambda\left(
       \sum_{a\in\A}\pi_t(c_t,a)\Delta(c_t,a)
       -\mathcal R(\pi_t)
     \right)
   \right\}
   \,\middle|\,\mathcal H_{t-1}
 \right]
 \le e^{\lambda^2/8}.
\end{equation*}
Iterating this inequality over rounds yields
\begin{equation*}
 \E\!\left[
   \exp\!\left\{
     \lambda\left(
       \Reg-\sum_{t=1}^T\mathcal R(\pi_t)
     \right)
   \right\}
 \right]
 \le e^{T\lambda^2/8}.
\end{equation*}
By exponential Markov's inequality, for every $u>0$,
\begin{equation*}
 \Pp\!\left(
   \Reg-\sum_{t=1}^T\mathcal R(\pi_t)\ge u
 \right)
 \le
 \inf_{\lambda>0}
 \exp\!\left(-\lambda u+\frac{T\lambda^2}{8}\right)
 =e^{-2u^2/T},
\end{equation*}
where the minimum is attained at $\lambda=4u/T$.
Taking $u=\sqrt{(T/2)\log(1/\delta)}$ proves the claim.
\end{proof}

To combine this proposition with a high-probability bound on
$\sum_{t=1}^T\mathcal R(\pi_t)$, apply each result with failure
probability $\delta/2$ and take a union bound.
The resulting bound on $\Reg$ holds with probability at least
$1-\delta$ and adds
$\sqrt{(T/2)\log(2/\delta)}$
to the bound obtained at confidence level $1-\delta/2$.
This additional term does not change the stated finite-class
regret rates in $\widetilde O$ notation.

These regret quantities are internal performance measures and
are not part of the private algorithms' released transcript.

\section{Finite-class regression guarantees}
\label{app:regression}

We establish the regression guarantees used by vanilla PO and its
batched variants. The first result handles adaptive action selection
and holds simultaneously over all rounds. The remaining results
concern a single batch collected under a fixed policy.

\subsection{Least-squares ERM under adaptive data collection}

\begin{lemma}[Least-squares ERM under adaptive data collection]
\label{lem:erm-oracle}
In the model of \cref{sec:setup}, suppose that $\F$ is finite
and $f^\star\in\F$.
For every round $t$, let $\widehat f_t$ minimize
\begin{equation*}
 \sum_{r=1}^t
 \bigl(f(c_r,a_r)-\ell_r(c_r,a_r)\bigr)^2
\end{equation*}
over $f\in\F$, with ties broken by a fixed ordering.
Then, for every $\delta\in(0,1)$, \cref{ass:oracle} holds with
\begin{equation*}
 \Err_{\rm orc}(T,\delta)
 =24\log\frac{|\F|}{\delta}.
\end{equation*}
\end{lemma}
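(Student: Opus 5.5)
We follow the standard martingale argument for least squares under adaptively collected data. Fix $f\in\F$ and, for each round $r$, define the excess-loss increment
\begin{equation*}
 Z_r(f)=\bigl(f(c_r,a_r)-\ell_r\bigr)^2-\bigl(f^\star(c_r,a_r)-\ell_r\bigr)^2,
\end{equation*}
where $\ell_r=\ell_r(c_r,a_r)$. Write $g=f-f^\star$ evaluated at $(c_r,a_r)$. Then $Z_r(f)=g^2+2g\,(f^\star(c_r,a_r)-\ell_r)$.

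Conditional on $\mathcal H_{r-1}$, the policy $\pi_r$ is fixed, $c_r\sim\D$, $a_r\sim\pi_r(c_r,\cdot)$, and the noise $f^\star(c_r,a_r)-\ell_r$ has conditional mean zero given $(c_r,a_r)$. Hence $\E[Z_r(f)\mid\mathcal H_{r-1}]=\|f-f^\star\|_{\pi_r}^2$. Since all values lie in $[0,1]$, we have $|Z_r(f)|\le 1$ and the variance bound $\E[Z_r(f)^2\mid\mathcal H_{r-1}]\le 4\,\E[g^2\mid\mathcal H_{r-1}]=4\|f-f^\star\|_{\pi_r}^2$. This follows from $Z_r=g\,(f+f^\star-2\ell_r)$ with $|f+f^\star-2\ell_r|\le2$.

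The key step is a Freedman-type (Bernstein) supermartingale bound that is uniform in time. We use $e^x\le 1+x+x^2$ for $x\le1$ with $\lambda=1/4$ applied to $-Z_r$. Writing $m_r=\|f-f^\star\|_{\pi_r}^2$, this gives
\begin{equation*}
 \E\bigl[\exp\{-\lambda Z_r+\lambda m_r-\lambda^2\cdot4m_r\}\mid\mathcal H_{r-1}\bigr]\le1.
\end{equation*}
So $M_t=\exp\{\sum_{r\le t}(\lambda(m_r-Z_r)-4\lambda^2 m_r)\}$ is a nonnegative supermartingale. With $\lambda=1/8$, the exponent is at least $\tfrac1{16}\sum_r m_r-\tfrac18\sum_r Z_r$. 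Ville's maximal inequality then gives, with probability at least $1-\delta/|\F|$, simultaneously for all $t\le T$,
\begin{equation*}
 \tfrac12\sum_{r\le t}m_r\le\sum_{r\le t}Z_r(f)+8\log(|\F|/\delta).
\end{equation*}
A union bound over $f\in\F$ extends this to all predictors. The time-uniform (Ville/Howard et al.) step is what allows "simultaneously for all $t$" without an extra $\log T$ factor.

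Now apply the bound to $f=\widehat f_t$. By ERM optimality and realizability, $\sum_{r\le t}Z_r(\widehat f_t)\le0$, so
\begin{equation*}
 \sum_{r\le t}\|\widehat f_t-f^\star\|_{\pi_r}^2\le16\log(|\F|/\delta)\le24\log(|\F|/\delta).
\end{equation*}
The subtlety is that $\widehat f_t$ is data-dependent. This is handled because the inequality holds for every fixed $f$ and every $t$ simultaneously, and $\widehat f_t$ is one of finitely many such $f$.

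The main obstacle is verifying the exponential-moment bound with correct constants, in particular confirming that $|\lambda Z_r|\le1$ and that the conditional second moment is controlled by $m_r$. Any slack in the constants is absorbed by the stated value $24$.
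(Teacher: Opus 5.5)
Your proof is correct and follows essentially the same route as the paper: the same excess-loss increments $Z_r(f)$ with conditional mean $\|f-f^\star\|_{\pi_r}^2$ and second moment at most four times that, a time-uniform exponential supermartingale with Ville's inequality, a union bound over $\F$, and ERM optimality applied to the data-dependent $\widehat f_t$. The only difference is in the moment-generating-function step. You use $e^x\le 1+x+x^2$ with $\lambda=1/8$, which gives the sharper constant $16$; the paper uses a Bernstein moment bound with $\lambda=1/12$, which gives $24$. Drop the stray mention of $\lambda=1/4$, since that choice makes the $m_r$ terms cancel.
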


\begin{proof}
Fix $f\in\F$ and define its excess squared loss at round $r$ by
\begin{equation*}
 Z_r(f)
 =\bigl(f(c_r,a_r)-\ell_r(c_r,a_r)\bigr)^2
  -\bigl(f^\star(c_r,a_r)-\ell_r(c_r,a_r)\bigr)^2.
\end{equation*}

Conditional on $\mathcal H_{r-1}$, the policy $\pi_r$ is fixed.
The current context--loss pair is independent of this history
and follows the distribution specified in \cref{sec:setup}.
Moreover, the action is sampled from $\pi_r(c_r,\cdot)$ using
fresh randomness, without observing the current losses.
For each fixed action, realizability implies that the expected
excess squared loss at context $c$ is
$\bigl(f(c,a)-f^\star(c,a)\bigr)^2$:
expanding the squares cancels the term involving the centered loss.
Averaging over the action and context therefore gives
\begin{align*}
 \nu_r(f)
 &:=\E[Z_r(f)\mid\mathcal H_{r-1}]\\
 &=\E_{c\sim\D}\!\left[
   \sum_{a\in\A}\pi_r(c,a)
   \bigl(f(c,a)-f^\star(c,a)\bigr)^2
 \right]
 =\|f-f^\star\|_{\pi_r}^2.
\end{align*}
Let $V_t(f)=\sum_{r=1}^t\nu_r(f)$.

Since losses and predictions lie in $[0,1]$, we have
$|Z_r(f)|\le1$. Factoring the difference of squares also gives
\begin{align*}
 Z_r(f)^2
 &=
 \bigl(f(c_r,a_r)-f^\star(c_r,a_r)\bigr)^2
 \bigl(f(c_r,a_r)+f^\star(c_r,a_r)
       -2\ell_r(c_r,a_r)\bigr)^2\\
 &\le4\bigl(f(c_r,a_r)-f^\star(c_r,a_r)\bigr)^2.
\end{align*}
Thus $X_r(f)=\nu_r(f)-Z_r(f)$ satisfies
\begin{equation*}
 \E[X_r(f)\mid\mathcal H_{r-1}]=0,
 \qquad
 |X_r(f)|\le2,
 \qquad
 \E[X_r(f)^2\mid\mathcal H_{r-1}]
 \le4\nu_r(f).
\end{equation*}

Take $\lambda=1/12$. The conditional Bernstein moment bound yields
\begin{equation*}
 \log\E[e^{\lambda X_r(f)}\mid\mathcal H_{r-1}]
 \le
 \frac{\lambda^2
       \E[X_r(f)^2\mid\mathcal H_{r-1}]}
      {2(1-2\lambda/3)}
 \le\frac{\lambda}{2}\nu_r(f).
\end{equation*}
Consequently,
\begin{equation*}
 M_t(f)
 =\exp\left\{
   \lambda\left(
     \frac12V_t(f)-\sum_{r=1}^tZ_r(f)
   \right)
 \right\}
\end{equation*}
is a nonnegative supermartingale with $M_0(f)=1$.
By Ville's inequality, for every $x>0$, with probability at
least $1-e^{-x}$, simultaneously for all $t\le T$,
\begin{equation}
 \sum_{r=1}^tZ_r(f)
 \ge\frac12V_t(f)-12x.
 \label{eq:excess-lower}
\end{equation}
Indeed, a violation would imply that $M_t(f)>e^x$ at some round,
whose probability is at most $e^{-x}$; see \citet{howard-2020}.

Set $x=\log(|\F|/\delta)$ and take a union bound over $f\in\F$.
With probability at least $1-\delta$, \eqref{eq:excess-lower}
holds for every $f\in\F$ and every $t\le T$.
On this event, it applies to the data-dependent choice
$f=\widehat f_t$. The ERM property and $f^\star\in\F$ imply
\begin{equation*}
 0
 \ge\sum_{r=1}^tZ_r(\widehat f_t)
 \ge\frac12V_t(\widehat f_t)-12x.
\end{equation*}
Therefore,
\begin{equation*}
 \sum_{r=1}^t
 \|\widehat f_t-f^\star\|_{\pi_r}^2
 =V_t(\widehat f_t)
 \le24\log\frac{|\F|}{\delta}
\end{equation*}
simultaneously for all $t\le T$.
This proves \cref{ass:oracle} without a union bound over rounds.
\end{proof}

\subsection{Regression on a fixed batch and approximate ERM}

Fix a policy $\pi$. Each observation is generated by drawing a fresh
context--loss pair in the model of \cref{sec:setup}, sampling
an action from $\pi$ using independent randomness, and observing
its loss.
For an ordered batch $\mathcal S=((c_i,a_i,y_i))_{i=1}^n$ of
such observations, define
\begin{equation*}
 L_{\mathcal S}(f)
 =\sum_{i=1}^n\bigl(f(c_i,a_i)-y_i\bigr)^2.
\end{equation*}

The following result allows the fitted predictor to approximately
minimize the empirical squared loss. This covers both exact ERM
and the private oracle analyzed below.

\begin{lemma}[Accuracy of approximate ERM on a batch]
\label{lem:private-fast-rate}
Let $\mathcal S$ contain $n\ge1$ i.i.d. observations generated
under a fixed policy $\pi$, and suppose that $\F$ is finite
and $f^\star\in\F$.
Fix $\delta\in(0,1)$ and $\alpha\ge0$.
Suppose that a possibly randomized $\widetilde f\in\F$ satisfies
\begin{equation*}
 L_{\mathcal S}(\widetilde f)
 \le\min_{f\in\F}L_{\mathcal S}(f)+\alpha
\end{equation*}
with probability at least $1-\delta/2$ over its randomness,
conditional on each dataset $\mathcal S$.
Then, jointly over the observations and oracle randomness,
with probability at least $1-\delta$,
\begin{equation*}
 n\|\widetilde f-f^\star\|_\pi^2
 \le2\alpha+10\log\frac{2|\F|}{\delta}.
\end{equation*}
\end{lemma}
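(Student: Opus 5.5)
The plan is to mirror the proof of \cref{lem:erm-oracle}, specialized to a single batch of i.i.d. observations under the fixed policy $\pi$. Because the batch is fixed, the argument needs no time-uniform control. For each fixed $f\in\F$, I will define the excess squared loss
\[
Z_i(f)=\bigl(f(c_i,a_i)-y_i\bigr)^2-\bigl(f^\star(c_i,a_i)-y_i\bigr)^2 .
\]
As before, realizability gives $\E[Z_i(f)]=\|f-f^\star\|_\pi^2=:\nu(f)$. We also have $|Z_i(f)|\le 1$ and $\E[Z_i(f)^2]\le 4\nu(f)$, because factoring the difference of squares bounds $Z_i^2$ by $4(f-f^\star)^2$.

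The first step is to apply Bernstein's inequality to $X_i=\nu(f)-Z_i(f)$, which has $|X_i|\le 2$ and variance at most $4\nu(f)$. Using the same moment-generating-function bound with a constant $\lambda$ (for instance $\lambda=1/12$ or a slightly better choice), this gives
\[
\sum_{i=1}^n Z_i(f)\ge \tfrac12 n\nu(f)-C\,x
\]
with probability at least $1-e^{-x}$, for a universal constant $C$. I will take $x=\log(2|\F|/\delta)$ and union bound over $f\in\F$. The resulting event $G$ then has probability at least $1-\delta/2$ over the data, and on $G$ the inequality holds for every $f\in\F$ simultaneously.

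The second step handles the randomized output $\widetilde f$. Conditional on each dataset, the approximate-ERM event $H$ has probability at least $1-\delta/2$, so $\Pp(G\cap H)\ge 1-\delta$ jointly. On $G\cap H$, the uniform bound applies to the data-dependent $\widetilde f$, because $G$ covers every element of $\F$. Moreover, $f^\star\in\F$ gives
\[
\sum_i Z_i(\widetilde f)=L_{\mathcal S}(\widetilde f)-L_{\mathcal S}(f^\star)\le L_{\mathcal S}(\widetilde f)-\min_{f\in\F} L_{\mathcal S}(f)\le\alpha .
\]
Combining the two bounds gives $\tfrac12 n\nu(\widetilde f)\le \alpha+Cx$, that is, $n\|\widetilde f-f^\star\|_\pi^2\le 2\alpha+2Cx$.

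The main obstacle is the constant. To obtain the stated $10\log(2|\F|/\delta)$, I need $2C\le 10$, i.e.\ $C\le 5$. The choice $\lambda=1/12$ from \cref{lem:erm-oracle} only gives $C=12$. I will therefore optimize $\lambda$ and the split fraction instead of insisting on $\tfrac12$. The target is the Bernstein bound $\log\E e^{\lambda X}\le \lambda^2\cdot 4\nu/(2(1-2\lambda/3))\le \lambda\nu/2$, which forces roughly $\lambda\le 3/14$ and hence $C=1/\lambda\approx 4.67\le 5$. If that does not close, I will refine the range bound: $X_i\le \nu(f)+Z_i^-$ with $Z_i^-\le(f-f^\star)^2\le 1$ suggests a one-sided range near $1$ rather than $2$, which improves the admissible $\lambda$.
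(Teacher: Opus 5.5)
Your proposal is correct and follows essentially the same route as the paper: a per-predictor Bernstein lower bound on $\sum_i Z_i(f)$, a union bound over $\F$ at $u=\log(2|\F|/\delta)$, and then a second union bound with the oracle's approximate-ERM event, using $\sum_i Z_i(\widetilde f)\le\alpha$. Your optimized choice $\lambda=3/14$ gives $C=14/3$, which is exactly the paper's constant (the paper uses the tail form $n\mu_f-\sqrt{8n\mu_f u}-2u/3\ge n\mu_f/2-14u/3$), so $2C=28/3\le10$ and your fallback range refinement is unnecessary.
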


\begin{proof}
For a fixed $f\in\F$, define
\begin{equation*}
 Z_i(f)
 =\bigl(f(c_i,a_i)-y_i\bigr)^2
  -\bigl(f^\star(c_i,a_i)-y_i\bigr)^2,
 \qquad
 \mu_f=\E[Z_i(f)]=\|f-f^\star\|_\pi^2.
\end{equation*}
The equality for $\mu_f$ follows by expanding the squares
and applying realizability, as in \cref{lem:erm-oracle}.
The same boundedness calculation gives
\begin{equation*}
 |Z_i(f)|\le1,
 \qquad
 \operatorname{Var}(Z_i(f))\le4\mu_f,
 \qquad
 |Z_i(f)-\mu_f|\le2.
\end{equation*}

Bernstein's inequality implies that, for every $u>0$,
with probability at least $1-e^{-u}$,
\begin{equation*}
 \sum_{i=1}^nZ_i(f)
 \ge n\mu_f-\sqrt{8n\mu_f u}-\frac{2u}{3}
 \ge\frac{n\mu_f}{2}-\frac{14u}{3}.
\end{equation*}
The second inequality uses
$\sqrt{8xu}\le x/2+4u$.
Rearranging gives
\begin{equation*}
 n\mu_f
 \le2\sum_{i=1}^nZ_i(f)+\frac{28u}{3}
 \le2\sum_{i=1}^nZ_i(f)+10u.
\end{equation*}

Set $u=\log(2|\F|/\delta)$ and take a union bound over $f\in\F$.
With probability at least $1-\delta/2$, this inequality holds
for every candidate, including the randomly selected
$\widetilde f$.
On the oracle's success event,
\begin{equation*}
 \sum_{i=1}^nZ_i(\widetilde f)
 =L_{\mathcal S}(\widetilde f)-L_{\mathcal S}(f^\star)
 \le\alpha,
\end{equation*}
because $f^\star\in\F$.
The oracle's failure probability is at most $\delta/2$
conditional on every dataset, and hence also unconditionally.
A union bound over the two failure events proves the claim.
\end{proof}

For ordinary least-squares ERM, the empirical optimality
condition holds deterministically with $\alpha=0$.
The lemma therefore gives $n\|\widetilde f-f^\star\|_\pi^2
\le10\log(2|\F|/\delta)$ with probability at least $1-\delta$.

If $\pi$ is chosen using earlier observations, the same result
applies conditional on the history before collecting the batch.
This fixes $\pi$; the new context--loss pairs and fresh action
randomness then generate i.i.d. observations under that policy.
The application across epochs is given in \cref{app:batched}.

\subsection{Exponential-mechanism instantiation}

\begin{lemma}[Privacy and accuracy of finite-class regression]
\label{lem:private-exp-oracle}
Let $\F\subseteq[0,1]^{\C\times\A}$ be a finite prediction
class and fix $\varepsilon_{\rm priv}>0$.
On an ordered batch $\mathcal S$ of $n\ge1$ observations in
$\C\times\A\times[0,1]$,
consider the exponential mechanism
\begin{equation}
 \Pp(\widetilde f=f\mid\mathcal S)
 =\frac{\exp\{-\varepsilon_{\rm priv}L_{\mathcal S}(f)/2\}}
 {\sum_{g\in\F}\exp\{-\varepsilon_{\rm priv}L_{\mathcal S}(g)/2\}},
 \qquad f\in\F.
 \label{eq:em}
\end{equation}

This oracle is pure $\varepsilon_{\rm priv}$-DP under
replacement of one observation.
If $\mathcal S$ consists of i.i.d. observations generated
under a fixed policy $\pi$ and $f^\star\in\F$, then,
for every $\delta\in(0,1)$, with probability at least
$1-\delta$,
\begin{equation*}
 n\|\widetilde f-f^\star\|_\pi^2
 \le
 \left(10+\frac4{\varepsilon_{\rm priv}}\right)
 \log\frac{2|\F|}{\delta}.
\end{equation*}
The probability is over the observations and the oracle's
randomness.
\end{lemma}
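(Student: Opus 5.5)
The proof has two independent parts: privacy via the standard exponential-mechanism argument, and accuracy via an excess-empirical-loss bound fed into \cref{lem:private-fast-rate}.

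\emph{Privacy.} The score is $-L_{\mathcal S}(f)$. Because predictions and labels lie in $[0,1]$, each summand $(f(c_i,a_i)-y_i)^2$ lies in $[0,1]$. Replacing one row of $\mathcal S$ therefore changes $L_{\mathcal S}(f)$ by at most $1$, for every $f$. Hence the unnormalized weight $\exp\{-\varepsilon_{\rm priv}L_{\mathcal S}(f)/2\}$ changes by a factor of at most $e^{\varepsilon_{\rm priv}/2}$. The normalizing sum changes by at most the same factor in the opposite direction. The probability ratio in \eqref{eq:em} is thus bounded by $e^{\varepsilon_{\rm priv}}$ for every $f$, and summing over $f\in E$ gives pure $\varepsilon_{\rm priv}$-DP.

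\emph{Excess empirical loss.} Fix any dataset $\mathcal S$ and write $L^\star=\min_{g}L_{\mathcal S}(g)$. The normalizer is at least $\exp\{-\varepsilon_{\rm priv}L^\star/2\}$. So for each $f$ with $L_{\mathcal S}(f)>L^\star+\alpha$, its probability is at most $\exp\{-\varepsilon_{\rm priv}\alpha/2\}$. A union bound over $\F$ gives
\[
\Pp\bigl(L_{\mathcal S}(\widetilde f)>L^\star+\alpha\mid\mathcal S\bigr)\le|\F|e^{-\varepsilon_{\rm priv}\alpha/2}.
\]
Choosing $\alpha=\frac{2}{\varepsilon_{\rm priv}}\log\frac{2|\F|}{\delta}$ makes this at most $\delta/2$, conditional on every dataset. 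This is exactly the hypothesis of \cref{lem:private-fast-rate}.

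\emph{Accuracy.} Invoking \cref{lem:private-fast-rate} yields, with probability at least $1-\delta$ jointly over the observations and the oracle randomness,
\[
n\|\widetilde f-f^\star\|_\pi^2\le 2\alpha+10\log\frac{2|\F|}{\delta}=\Bigl(10+\frac4{\varepsilon_{\rm priv}}\Bigr)\log\frac{2|\F|}{\delta},
\]
which is the claimed bound.

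No step is a real obstacle. The only points needing care are that the sensitivity is $1$ rather than $2$, which holds because squared losses lie in $[0,1]$, and that the excess-loss guarantee holds conditionally on every dataset, as \cref{lem:private-fast-rate} requires. Both are immediate.
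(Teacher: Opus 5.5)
Your proposal is correct and follows the paper's proof essentially step for step. Both use sensitivity one of $L_{\mathcal S}$ (squared losses lie in $[0,1]$) for the standard exponential-mechanism privacy bound, then the conditional tail bound $|\F|e^{-\varepsilon_{\rm priv}\alpha/2}$ with $\alpha=(2/\varepsilon_{\rm priv})\log(2|\F|/\delta)$ to meet the hypothesis of \cref{lem:private-fast-rate}, which yields the stated accuracy.
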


\begin{proof}
We first prove privacy, then show that the sampled predictor
has sufficiently small empirical loss to apply
\cref{lem:private-fast-rate}.

\paragraph{Privacy.}
Fix neighboring batches $\mathcal S,\mathcal S'$ of the
same size.
Each squared loss lies in $[0,1]$, so
$|L_{\mathcal S}(f)-L_{\mathcal S'}(f)|\le1$
for every $f\in\F$.
Let
\begin{equation*}
 \mathcal Z_{\mathcal S}
 =\sum_{f\in\F}
   \exp\{-\varepsilon_{\rm priv}L_{\mathcal S}(f)/2\}
\end{equation*}
denote the normalizing constant.
For every $f\in\F$,
\begin{equation*}
 \frac{\Pp(\widetilde f=f\mid\mathcal S)}
      {\Pp(\widetilde f=f\mid\mathcal S')}
 =
 \exp\left\{
   -\frac{\varepsilon_{\rm priv}}2
   \bigl(L_{\mathcal S}(f)-L_{\mathcal S'}(f)\bigr)
 \right\}
 \frac{\mathcal Z_{\mathcal S'}}{\mathcal Z_{\mathcal S}}
 \le e^{\varepsilon_{\rm priv}}.
\end{equation*}
The sensitivity bound makes each factor at most
$e^{\varepsilon_{\rm priv}/2}$.
For the normalizing constants, this follows by comparing
the corresponding summands.
Summing over any set of outputs proves pure differential
privacy.
This argument applies to arbitrary batches and requires
no distributional assumption \citep{dwork-roth-2014}.

\paragraph{Regression accuracy.}
The exponential mechanism favors predictors with smaller
empirical squared loss.
Write
$L_{\mathcal S}^{\min}=\min_{f\in\F}L_{\mathcal S}(f)$.
Conditional on any batch $\mathcal S$, for every $\alpha\ge0$,
\begin{equation*}
 \Pp\!\left(
   L_{\mathcal S}(\widetilde f)>
   L_{\mathcal S}^{\min}+\alpha
   \,\middle|\,\mathcal S
 \right)
 \le|\F|e^{-\varepsilon_{\rm priv}\alpha/2}.
\end{equation*}
Indeed, the total weight of predictors exceeding this
threshold is at most
$|\F|\exp\{-\varepsilon_{\rm priv}
(L_{\mathcal S}^{\min}+\alpha)/2\}$,
whereas the normalizing constant is at least
$\exp\{-\varepsilon_{\rm priv}L_{\mathcal S}^{\min}/2\}$.

Taking
$\alpha=(2/\varepsilon_{\rm priv})
\log(2|\F|/\delta)$
makes this conditional failure probability at most
$\delta/2$.
Thus the oracle satisfies the approximate-ERM requirement
of \cref{lem:private-fast-rate}.
That lemma gives, with probability at least $1-\delta$,
\begin{equation*}
 n\|\widetilde f-f^\star\|_\pi^2
 \le
 2\alpha+10\log\frac{2|\F|}{\delta}
 =
 \left(10+\frac4{\varepsilon_{\rm priv}}\right)
 \log\frac{2|\F|}{\delta}.
\end{equation*}
\end{proof}

The application to all fitted predictors, including the choice of
$\Err_{\rm orc}^{\rm batch}(T,\delta)$, is given in
\cref{app:batch-oracle-parameters}.

\section{Complete proof for vanilla PO}\label{app:regret-proof}

We begin by restating the algorithm and recording the identities used throughout the proof.
\\

\noindent
\fbox{\begin{minipage}{\dimexpr\linewidth-2\fboxsep-2\fboxrule\relax}
\textbf{Vanilla Policy Optimization (\textsc{VanillaPO})}

\begin{algorithmic}
\State \textbf{Input:} horizon $T$, action set $\A$, learning rate
$\eta\in(0,1]$, and an offline regression oracle satisfying
\cref{ass:oracle}.
\State Initialize $\pi_1(c,a)\gets 1/A$ for every $c\in\C$ and
$a\in\A$, and let $\mathcal S_0\gets()$.
\State \textbf{for} $t=1,\ldots,T$ \textbf{do}
\State \AlgIndent Observe the context $c_t$.
\State \AlgIndent Sample $a_t\sim\pi_t(c_t,\cdot)$ and observe
$\ell_t(c_t,a_t)$.
\State \AlgIndent Append the observation:
$
 \mathcal S_t\gets
 \operatorname{append}\!\left(
   \mathcal S_{t-1},
   (c_t,a_t,\ell_t(c_t,a_t))
 \right).
$
\State \AlgIndent Fit the predictor:
$\widehat f_t\gets\operatorname{Oracle}(\mathcal S_t)$.
\State \AlgIndent Define the next policy, for every $c\in\C$ and
$a\in\A$, by
\begin{equation}
 \pi_{t+1}(c,a)=
 \frac{\pi_t(c,a)\exp\{-\eta\widehat f_t(c,a)\}}
 {\sum_{b\in\A}\pi_t(c,b)\exp\{-\eta\widehat f_t(c,b)\}}.
 \label{eq:vanilla-update}
\end{equation}
\State \textbf{end for}
\end{algorithmic}
\end{minipage}}
\\

Recall that $\Delta(c,a)=f^\star(c,a)-f^\star(c,a^\star(c))$ is the gap of action $a$. For each action $a\in\A$, define
\begin{equation*}
    e_{t,a}(c)=\widehat f_t(c,a)-f^\star(c,a),
\end{equation*}
and, for $0\le t\le T$, let
\begin{equation*}
    D_{t,a}(c)=\sum_{r=1}^t
    \bigl(e_{r,a}(c)-e_{r,a^\star(c)}(c)\bigr),\qquad D_{0,a}(c)=0.
\end{equation*}
Here, $D_{t,a}(c)$ measures the accumulated prediction error through round $t$ in estimating the loss gap between action $a$ and the optimal action $a^{\star}(c)$. Starting from the uniform policy and unrolling the exponential update in \eqref{eq:vanilla-update} gives, for $1\le t\le T+1$,
\begin{equation*}
 \frac{\pi_t(c,a)}{\pi_t(c,a^\star(c))}
 =\exp\{-\eta((t-1)\Delta(c,a)+D_{t-1,a}(c))\}.
\end{equation*}
Define the relative action weights for round $t$ by
\begin{equation*}
 W_{t,a}(c)
 =\exp\{-\eta((t-1)\Delta(c,a)+D_{t-1,a}(c))\}.
\end{equation*}
In particular, $W_{t,a^\star(c)}(c)=1$, and
\begin{equation}
 \pi_t(c,a)
 =\frac{W_{t,a}(c)}{\sum_{b\in\A}W_{t,b}(c)} \le W_{t,a}(c).
 \label{eq:weight-policy}
\end{equation}
To measure the errors under past policies, define, for $c \in \C$, $a\in\A$, and $1\le t\le T$,
\begin{align*}
    \bar\pi_t(c,a)=\frac{N_{t,a}(c)}{t},\qquad \text{where} \qquad N_{t,a}(c)=\sum_{r=1}^t\pi_r(c,a).
\end{align*}
We denote the mean squared prediction error of $\widehat f_t$ under $\bar\pi_t(c,.)$ and the corresponding cumulative mean squared prediction error by
\begin{align*}
    s_t(c) =\sum_{a\in\A}\bar\pi_t(c,a)e_{t,a}(c)^2, \qquad S_t(c) =\sum_{r=1}^t s_r(c).
\end{align*}
Let $R_T(c)$ denote the cumulative regret obtained by evaluating
every policy in the learned sequence at the same fixed context $c$:
\begin{equation*}
 R_T(c)=\sum_{t=1}^T\sum_{a\in\A}\pi_t(c,a)\Delta(c,a).
\end{equation*}
Holding the learned policies fixed and averaging over a fresh
context gives
\begin{equation*}
 \E_{c\sim\D}[R_T(c)]
 =\sum_{t=1}^T\mathcal R(\pi_t).
\end{equation*}
Throughout this appendix, let
\begin{align*}
 \Lambda=1+\log(AT),
 \qquad H_T=\sum_{t=1}^T\frac1t,
 \qquad C=16.
\end{align*}

\subsection{Proof of \cref{thm:general-oracle}}

We prove \cref{thm:general-oracle} by first bounding $R_T(c)$
at each context $c$ in terms of $S_T(c)$.
Averaging over contexts and applying \cref{ass:oracle}
then bounds $\sum_{t=1}^T\mathcal R(\pi_t)$.
Finally, the concentration argument in \cref{app:concentration}
transfers this bound to $\Reg$.
Each summand $\pi_t(c,a)\Delta(c,a)$ in $R_T(c)$
can be bounded through either the loss gap $\Delta(c,a)$ or the
action probability $\pi_t(c,a)$. The following dichotomy is the
central tool for doing so.

\begin{lemma}[Late-round probability--error dichotomy]
\label{lem:activation}
Fix a context $c \in \mathcal{C}$, a suboptimal action $a \in \mathcal{A}$ with $\Delta(c,a)>0$, and a round $t\in [T]$ satisfying $t-1 \ge \frac{4\Lambda}{\eta\Delta(c,a)}$. Then
\begin{equation*}
 \pi_t(c,a) < \frac1{AT} \qquad \text{or} \qquad  \Delta(c,a) \le C^2A\eta\, S_{t-1}(c).
\end{equation*}
\end{lemma}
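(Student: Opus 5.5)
We argue by contradiction: assume $\pi_t(c,a)\ge 1/(AT)$ and $\Delta:=\Delta(c,a)> C^2A\eta S_{t-1}(c)$, and derive a contradiction. Since $\pi_t(c,a)\le W_{t,a}(c)$ by \eqref{eq:weight-policy}, the first assumption gives $\eta\bigl((t-1)\Delta+D_{t-1,a}(c)\bigr)\le\log(AT)\le\Lambda$. Because $t-1\ge 4\Lambda/(\eta\Delta)$, this forces
\[
-D_{t-1,a}(c)\ \ge\ (t-1)\Delta-\Lambda/\eta\ \ge\ \tfrac34(t-1)\Delta .
\]
So the accumulated gap-error must be large. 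The rest of the proof shows that small $S_{t-1}(c)$ forbids this.

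\textbf{Step 1: early coverage by induction.} Set $k=\min\{t-1,\lfloor 1/(\eta\Delta)\rfloor\}$. We prove by induction on $r\le k+1$ that $\pi_r(c,b)\ge 1/(eA)$ (or some constant over $A$) for both $b\in\{a,a^\star(c)\}$. This extends to all actions, since the normalizer is shared and all weights must be controlled together.

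The inductive step uses the unrolled form $W_{r,b}=\exp\{-\eta((r-1)\Delta(c,b)+D_{r-1,b})\}$. We need $\eta|D_{r-1,b}|$ to be $O(1)$ for every $b$; since $\eta(r-1)\Delta(c,b)$ may be large for large-gap actions, it suffices to bound weights of $a$ and $a^\star$ from below and the normalizer from above. For the normalizer, we bound $\sum_b W_{r,b}$ using $\Delta(c,b)\ge0$ together with the error bound. By Cauchy--Schwarz over $j<r$,
\[
\eta\sum_{j<r}|e_{j,b}|\le \eta\sqrt{(r-1)\textstyle\sum_j e_{j,b}^2}.
\]
The induction hypothesis gives $\bar\pi_j(c,b)\gtrsim 1/A$ for $j<r$, so $\sum_j e_{j,b}^2\lesssim A S_{r-1}(c)\le A S_{t-1}(c)<\Delta/(C^2\eta)$. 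Combined with $r-1\le 1/(\eta\Delta)$, this yields $\eta|D_{r-1,b}|\lesssim \sqrt{(1/\eta\Delta)\cdot A\cdot\Delta/(C^2A\eta)}\cdot\eta\lesssim 1/C$, which is small for $C=16$.

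The point I expect to be delicate is the normalizer. For an arbitrary action $b$ we only have $\bar\pi_j(c,b)$ small, so we cannot control $e_{j,b}$ that way. Instead we will use only the one-sided fact $\sum_b W_{r,b}\ge W_{r,a^\star}=1$ for lower-bounding nothing. We upper bound $\pi_r(c,a^\star)$ trivially and lower bound it via $\pi_r(c,a^\star)=1/\sum_b W_{r,b}$. Here I would try to show $W_{r,b}\le e^{O(1)}$ for every $b$ by induction on that same coverage statement applied to all actions simultaneously, which is why all weights are bounded together.

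\textbf{Step 2: late-round contradiction.} For $j\ge k$, $N_{j,b}(c)\ge \sum_{r\le k}\pi_r(c,b)\gtrsim k/A$, so $\bar\pi_j(c,b)\gtrsim k/(Aj)$ for $b\in\{a,a^\star\}$. Weighted Cauchy--Schwarz then gives
\[
|D_{t-1,a}|\le\sum_{j<t}\bigl(|e_{j,a}|+|e_{j,a^\star}|\bigr)\le\Bigl(\sum_j \tfrac{1}{\bar\pi_j}\Bigr)^{1/2}\Bigl(\sum_j\bar\pi_j e_j^2\Bigr)^{1/2}\lesssim\sqrt{\tfrac{A t^2}{k}\,S_{t-1}(c)} .
\]
If $k=t-1$, the initial rounds instead give $\bar\pi_j\gtrsim1/A$ directly, and the bound becomes $\sqrt{A t S_{t-1}}$. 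Now insert $k\approx1/(\eta\Delta)$ and $AS_{t-1}<\Delta/(C^2\eta)$ to get
\[
|D_{t-1,a}|\lesssim t\sqrt{\eta\Delta\cdot\Delta/(C^2\eta)}=t\Delta/C .
\]
This is below $\tfrac34(t-1)\Delta$ once $C$ absorbs constants (using $t-1\ge t/2$), contradicting the display above.

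The main obstacle is Step 1's induction: the normalizer involves all actions, including poorly covered ones. I would handle it by proving the joint statement that every action's weight is at most a constant and both $a,a^\star$ keep probability $\Omega(1/A)$. For poorly covered $b$, I would bound $W_{r,b}$ using that an overestimate of $a^\star$'s loss is already controlled by $a^\star$'s coverage, and that underestimation of $b$ enters only through $e_{j,b}$ weighted by $\pi_j(c,b)$, which is itself proportional to $W_{j,b}$. A self-bounding argument would then close the induction.
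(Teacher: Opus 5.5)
Your contradiction setup and Step 2 are sound and essentially match the paper. Once $a$ and $a^\star(c)$ are known to keep probability $\Omega(1/A)$ for the first $\approx 1/(\eta\Delta(c,a))$ rounds, weighted Cauchy--Schwarz over the average policies gives $|D_{t-1,a}(c)|\lesssim t\Delta(c,a)/C$. The paper packages this as $|D_{t-1,a}(c)|\le\sqrt{S_{t-1}(c)B_{t-1,a}(c)}$ with $B_{t-1,a}(c)\le 2CA\eta\Delta(c,a)(t-1)^2$. Your case $k=t-1$ is vacuous, since $t-1\ge4\Lambda/(\eta\Delta(c,a))$.

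The gap is in Step 1, exactly where you flagged it. The lower bounds on $\pi_r(c,a)$ and $\pi_r(c,a^\star(c))$ require an upper bound on the shared normalizer $\sum_b W_{r,b}(c)$, and your proposal does not establish one.
\begin{itemize}
\item Your Cauchy--Schwarz control of $\eta|D_{r-1,b}(c)|$ needs $\bar\pi_j(c,b)\gtrsim1/A$. The claim that this ``extends to all actions'' is false. An action with $\Delta(c,b)\gg\Delta(c,a)$ legitimately loses its probability after about $1/(\eta\Delta(c,b))$ rounds, long before the window $r\le k+1$ ends.
\item Even with the best coverage one can hope for, $\bar\pi_j(c,b)\approx 1/(A\eta\Delta(c,b)j)$, the bound on $\eta|D_{r-1,b}(c)|$ extracted from $S$ is of order $\eta(r-1)\sqrt{\Delta(c,a)\Delta(c,b)}/C$. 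Near the end of the window this is of order $\sqrt{\Delta(c,b)/\Delta(c,a)}/C$, not $O(1)$.
\item The ``self-bounding'' sketch does not repair this. The errors $e_{j,b}(c)$ enter $W_{r,b}(c)$ unweighted through $D_{r-1,b}(c)$. Only their squares in $S$ are weighted, and by $\bar\pi_j(c,b)$, not by $\pi_j(c,b)$.
\end{itemize}
So nothing in the proposal rules out a poorly covered, large-gap action inflating the normalizer and starving $a$ and $a^\star(c)$.

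The missing idea is to make the induction gap-dependent and run it over all actions at once.
\begin{itemize}
\item With $\Gamma_b=\max\{\Delta(c,a),\Delta(c,b)\}$, the paper proves jointly that $\pi_r(c,b)\ge e^{-2}/A$ whenever $r-1\le1/(\eta\Gamma_b)$. Hence $B_{r,b}(c)\le CA(r+\eta\Gamma_b r^2)$: coverage of $b$ only decays like $1/(A\eta\Gamma_b u)$ after its own window.
\item Feed this into $D_{r-1,b}(c)^2\le S_t(c)B_{r-1,b}(c)$ with $S_t(c)<\Delta(c,a)/(C^2A\eta)$. For $x_b=\eta\Gamma_b(r-1)$ this gives $\eta^2D_{r-1,b}(c)^2\le\frac{\Delta(c,a)}{C\Gamma_b}x_b(1+x_b)\le\frac14\max\{1,x_b^2\}$.
\item If $x_b\le1$, this yields $W_{r,b}(c)\le e^{1/2}$.
\item If $x_b>1$, then necessarily $\Gamma_b=\Delta(c,b)$, because $\eta\Delta(c,a)(r-1)\le1$. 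The true drift $\eta(r-1)\Delta(c,b)=x_b$ then beats the error $x_b/2$, so $W_{r,b}(c)<e^{-1/2}$.
\item Thus $\sum_bW_{r,b}(c)\le Ae^{1/2}$, and the lower bounds for $a$ and $a^\star(c)$ close with constant $e^{-2}/A$.
\end{itemize}
The essential point your sketch lacks is this: a large-gap action's accumulated error grows at rate at most of order $\sqrt{\Delta(c,a)\Delta(c,b)}\le\Delta(c,b)$, so it never overtakes its own true gap drift. That, not uniform coverage or self-bounding, is what controls the normalizer.
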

The lemma identifies the only two ways a suboptimal action can retain non-negligible probability: its gap is small relative to the accumulated prediction error, or the round is too early in the game. Carefully choosing the learning rate keeps the cumulative cost of both possibilities under control.

We now deduce the theorem from \cref{lem:activation}, with the rest of this appendix devoted to proving it.

\begin{proof}[Proof of \cref{thm:general-oracle}]
Fix a sequence of predictors $\widehat f_1, \ldots, \widehat f_T$ and consequently policies $\pi_1, \ldots, \pi_T$. Fix also a context $c\in\C$; we first derive a deterministic bound on $R_T(c)$ in terms of $S_T(c)$.

Define the error-dependent gap threshold $\tau = C^2 A \eta S_T(c)$, and, for each action $a\in\A$, define its action-dependent time scale by $T_a = \frac{4\Lambda}{\eta\Delta(c,a)} + 1$ with $T_a = \infty$ when $\Delta(c,a) = 0$. We partition the action--round pairs $\mathcal{A} \times [T]$ as
\begin{align*}
    \underbrace{\left\{(a,t):\Delta(c,a)\le\tau\right\}}_{\mathcal S_1}
    \,\cup\, \underbrace{\left\{(a,t): \Delta(c,a)>\tau,\, t\le T_a \right\}}_{\mathcal S_2}
    \,\cup\, \underbrace{\left\{(a,t): \Delta(c,a)>\tau,\, t> T_a\right\}}_{\mathcal S_3}.
\end{align*}
The three sets are pairwise disjoint. We bound their regret contributions separately.

For every $(a,t)\in\mathcal{S}_1$, $\Delta(c,a)\le\tau$. Thus
\begin{equation}
    \sum_{(a,t)\in\mathcal{S}_1} \pi_t(c,a)\Delta(c,a) \le T\tau = C^2 A\eta T S_T(c).
    \label{eq:S1-regret}
\end{equation}
For every $(a,t)\in\mathcal{S}_2$ with $t \ge 2$, $\Delta(c,a) \le \frac{4\Lambda}{\eta(t-1)}$. Therefore
\begin{align}
    \sum_{(a,t)\in\mathcal{S}_2} \pi_t(c,a)\Delta(c,a)
    &\le 1 + \sum_{t=2}^T \frac{4\Lambda}{\eta(t-1)}
    \le 1+\frac{4\Lambda H_T}{\eta}. \label{eq:S2-regret}
\end{align}
For every $(a,t)\in\mathcal{S}_3$, $S_{t-1}(c) \le S_T(c) < \frac{\Delta(c,a)}{C^2 A\eta}$, and therefore, by \cref{lem:activation}, $\pi_t(c,a)\le\frac{1}{AT}$. Hence
\begin{equation}
    \sum_{(a,t)\in\mathcal{S}_3} \pi_t(c,a)\Delta(c,a)\le 1.\label{eq:S3-regret}
\end{equation}

Combining \eqref{eq:S1-regret}--\eqref{eq:S3-regret} and using $\Lambda \ge H_T \ge 1$, $\eta \le 1$, and $C^2\ge6$ yields
\begin{align}
    R_T(c)
    &\le C^2 A\eta T S_T(c) +2 +\frac{4\Lambda H_T}{\eta} \le C^2\left\{A\eta T S_T(c) +\frac{\Lambda^2}{\eta}\right\}.\label{eq:pathwise-regret}
\end{align}

We now apply \cref{ass:oracle} with failure probability
$\delta/2$. Fix any realization of the interaction on which
this oracle event holds. Holding the fitted predictors and
policies fixed, we average over a fresh context $c\sim\D$.

By the definition of $s_t(c)$ and the oracle guarantee,
\begin{equation*}
 \E_{c\sim\D}[s_t(c)]
 =\frac1t\sum_{r=1}^t
   \|\widehat f_t-f^\star\|_{\pi_r}^2
 \le\frac{\Err_{\rm orc}(T,\delta/2)}t.
\end{equation*}
Summing over $t=1,\ldots,T$ gives
\begin{equation*}
 \E_{c\sim\D}[S_T(c)]
 \le\Err_{\rm orc}(T,\delta/2)H_T.
\end{equation*}

Integrating \eqref{eq:pathwise-regret} over $c\sim\D$
gives $\sum_{t=1}^T\mathcal R(\pi_t)$ on the left-hand side.
Consequently,
\begin{equation*}
 \sum_{t=1}^T\mathcal R(\pi_t)
 \le C^2\left\{
   \frac{\Lambda^2}{\eta}
   +A\eta T\Err_{\rm orc}(T,\delta/2)H_T
 \right\}.
\end{equation*}
This inequality holds with probability at least $1-\delta/2$.

We next transfer this bound to $\Reg$, which evaluates the
policies at the observed contexts.
By \cref{cor:mean-regret}, with probability at least
$1-\delta/2$,
\begin{equation*}
 \Reg
 \le\sum_{t=1}^T\mathcal R(\pi_t)
   +\sqrt{\frac T2\log\frac2\delta}.
\end{equation*}
A union bound therefore gives, with probability at least
$1-\delta$,
\begin{equation*}
 \Reg
 \le C^2\left\{
   \frac{\Lambda^2}{\eta}
   +A\eta T\Err_{\rm orc}(T,\delta/2)H_T
 \right\}
 +\sqrt{\frac T2\log\frac2\delta}.
\end{equation*}

Substituting
$\eta=\sqrt{\Lambda/
[AT(1+\Err_{\rm orc}(T,\delta/2))]}$
and using $H_T\le\Lambda$ yields
\begin{equation}
 \begin{aligned}
 \Reg
 &\le
 2C^2\sqrt{AT(1+\Err_{\rm orc}(T,\delta/2))}
       \,(1+\log(AT))^{3/2}\\
 &\qquad+\sqrt{\frac T2\log\frac2\delta}.
 \end{aligned}
 \label{eq:tuned-bound}
\end{equation}
This proves the general-oracle bound with probability at least
$1-\delta$.

For a finite realizable class $\F$, the ERM guarantee in
\cref{app:regression} and $\delta\le1/2$ give
$1+\Err_{\rm orc}(T,\delta/2)
=O(\log(|\F|/\delta))$.
Substituting into \eqref{eq:tuned-bound}, including the
concentration term, yields
$\Reg=\widetilde O(\sqrt{AT\log(|\F|/\delta)})$.
\end{proof}

\subsection{Proof of Lemma~\ref{lem:activation}}

For every action $a \in \mathcal A$, define
\begin{equation}
 \begin{aligned}
 \kappa_{t,a}(c)
 &=\frac{1}{\bar\pi_t(c,a)}
   +\frac{1}{\bar\pi_t(c,a^\star(c))},
 \qquad
 B_{t,a}(c)=\sum_{r=1}^t\kappa_{r,a}(c).
 \end{aligned}
 \label{eq:inverse-average-probabilities}
\end{equation}
The term $\kappa_{t,a}(c)$ becomes large when the average policy
assigns little probability to one of the actions: $a$ or $a^{\star}(c)$. Prediction errors on
that action then receive little weight in $s_t(c)$, so small average
squared error alone gives weak control over the estimated loss gap.
The sum $B_{t,a}(c)$ accumulates these factors across updates.
The following supporting lemmas are used to prove Lemma~\ref{lem:activation}.

\begin{lemma}[Bounding the accumulated prediction error]
\label{lem:accumulated-prediction-error}
For every $a \in \A$ and $1\le r\le t\le T$,
\begin{equation*}
 D_{r,a}(c)^2\le S_r(c)B_{r,a}(c)\le S_t(c)B_{r,a}(c).
\end{equation*}
\end{lemma}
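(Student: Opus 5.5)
The approach is a single weighted Cauchy--Schwarz over rounds, carried out at the fixed context $c$. The second inequality is immediate because $S_t(c)$ is a sum of nonnegative terms $s_r(c)$ and is therefore nondecreasing in $t$; the work is in the first inequality. If $a=a^\star(c)$, then $D_{r,a}(c)=0$ and there is nothing to prove, so I will assume $a\neq a^\star(c)$. Every $\pi_q(c,\cdot)$ is a softmax with strictly positive weights, so $\bar\pi_q(c,a)>0$ and $\bar\pi_q(c,a^\star(c))>0$. This makes every $\kappa_{q,a}(c)$ finite.

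First I will bound each increment of $D_{r,a}(c)$. Write $x=e_{q,a}(c)$, $y=e_{q,a^\star(c)}(c)$, $p=\bar\pi_q(c,a)$ and $p^\star=\bar\pi_q(c,a^\star(c))$. Cauchy--Schwarz with weights gives
\begin{equation*}
 (x-y)^2=\Bigl(\sqrt p\,x\cdot\tfrac1{\sqrt p}-\sqrt{p^\star}\,y\cdot\tfrac1{\sqrt{p^\star}}\Bigr)^2
 \le\bigl(px^2+p^\star y^2\bigr)\Bigl(\tfrac1p+\tfrac1{p^\star}\Bigr).
\end{equation*}
Since $a\neq a^\star(c)$, the quantity $px^2+p^\star y^2$ consists of two of the nonnegative summands of $s_q(c)$, so it is at most $s_q(c)$. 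This yields $(e_{q,a}-e_{q,a^\star})^2\le s_q(c)\kappa_{q,a}(c)$.

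Next I will sum over rounds. Apply Cauchy--Schwarz once more over $q=1,\dots,r$:
\begin{equation*}
 D_{r,a}(c)^2=\Bigl(\sum_{q=1}^r (e_{q,a}-e_{q,a^\star})\Bigr)^2
 \le\Bigl(\sum_{q=1}^r \frac{(e_{q,a}-e_{q,a^\star})^2}{\kappa_{q,a}}\Bigr)\Bigl(\sum_{q=1}^r\kappa_{q,a}\Bigr).
\end{equation*}
By the per-round bound, the first factor is at most $\sum_{q\le r}s_q(c)=S_r(c)$, and the second factor is exactly $B_{r,a}(c)$.

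I see no real obstacle in this argument. The only points that need care are the case $a=a^\star(c)$ and the positivity of the averaged probabilities, which the softmax structure guarantees.
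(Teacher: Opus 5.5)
Your proposal is correct and follows the paper's proof: a per-round weighted Cauchy--Schwarz bound $(e_{q,a}(c)-e_{q,a^\star(c)}(c))^2\le s_q(c)\kappa_{q,a}(c)$, with $a=a^\star(c)$ treated separately, then Cauchy--Schwarz across rounds, then monotonicity of $S_t(c)$. Your remark that the averaged probabilities are strictly positive, so every $\kappa_{q,a}(c)$ is finite, is a detail the paper leaves implicit.
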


\begin{lemma}[Historical coverage under small prediction error]
\label{lem:bootstrap}
Fix a context $c \in \mathcal C$, a suboptimal action $a \in \mathcal A$ with $\Delta(c,a)>0$, and a round $t\in\{1,\ldots,T\}$.
If $S_t(c)<\frac{\Delta(c,a)}{C^2A\eta}$, then for $1\le r\le t$,
\begin{equation}
    B_{r,a}(c) \le CA\bigl(r+\eta\Delta(c,a)r^2\bigr). \label{eq:B-bootstrap}
\end{equation}
\end{lemma}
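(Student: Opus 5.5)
The plan is an induction on $r$ that keeps both $a$ and $a^\star(c)$ at probability $\Omega(1/A)$ during an initial window of about $r_0:=\lfloor 1/(\eta\Delta(c,a))\rfloor$ rounds. After that window, \eqref{eq:B-bootstrap} should follow because those early probabilities remain in the averages $\bar\pi_r$. Write $\Delta=\Delta(c,a)$ and drop $c$. I would prove by induction on $q\le\min\{t,r_0+1\}$ the joint claim
\begin{equation*}
 \pi_q(a)\ge\frac{1}{2eA},\qquad \pi_q(a^\star)\ge\frac{1}{2eA},\qquad B_{q,a}\le 4eA\,q .
\end{equation*}
The inductive step uses the identity $\pi_q(a)/\pi_q(a^\star)=W_{q,a}=\exp\{-\eta((q-1)\Delta+D_{q-1,a})\}$. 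Since $(q-1)\eta\Delta\le1$, it suffices to bound $\eta|D_{q-1,a}|$ by a constant. For this I would apply \cref{lem:accumulated-prediction-error} with the hypothesis $S_t<\Delta/(C^2A\eta)$ and the inductive bound on $B_{q-1,a}$:
\begin{equation*}
 \eta|D_{q-1,a}|\le\eta\sqrt{S_t\,B_{q-1,a}}\le\eta\sqrt{\frac{\Delta}{C^2A\eta}\cdot 4eA(q-1)}\le\sqrt{\frac{4e\,\eta\Delta(q-1)}{C^2}}\le\frac{\sqrt{4e}}{C},
\end{equation*}
which is small for $C=16$. Hence $W_{q,a}\ge e^{-1-o(1)}$, and $\pi_q(a)$ is comparable to $\pi_q(a^\star)$. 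The bound $B_{q,a}\le 4eAq$ then follows because each $\kappa_{q'}\le 2\cdot 2eA$ when every average $\bar\pi_{q'}$ of the two actions is at least $1/(2eA)$.

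The step I expect to be the main obstacle is the lower bound on $\pi_q(a^\star)=1/\sum_b W_{q,b}$. The normalizer involves every action $b$, and $D_{q-1,b}$ is controlled through $B_{q-1,b}$, hence through the coverage of $b$ itself, not of $a$. This is why the argument must bound all action weights together. My first attempt is a self-bounding argument. For each $b$, $W_{q,b}\le\exp\{\eta|D_{q-1,b}|\}\le\exp\{\eta\sqrt{S_t B_{q-1,b}}\}$. If $W_{q',b}$ became large at some earlier round, then $\pi_{q'}(b)\ge \pi_{q'}(a^\star)W_{q',b}$ would also be large, which gives $b$ good coverage and makes $B_{q-1,b}$ small. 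Concretely, I would strengthen the inductive hypothesis to $W_{q,b}\le 2$ for all $b$, together with $\pi_q(b)\ge 1/(2eA)$ restricted to those $b$ for which it is needed. Failing that, I would use the unnormalized form $\pi_q(b)\propto\exp\{-\eta\sum_{r<q}\widehat f_r(b)\}$ with $\widehat f_r\in[0,1]$. This gives the crude bound $\pi_q(b)\ge e^{-\eta(q-1)}/A$, which holds for all $b$. It settles the regime $q-1\le 1/\eta$ outright, and I would use it to anchor the induction.

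For $r_0+1<r\le t$, I would not track current probabilities at all. Instead I use the averages directly:
\begin{equation*}
 \bar\pi_r(b)\ge\frac{1}{r}\sum_{q=1}^{r_0+1}\pi_q(b)\ge\frac{r_0+1}{2eAr}\ge\frac{1}{2eA\,\eta\Delta\, r}\qquad\text{for } b\in\{a,a^\star\}.
\end{equation*}
This gives $\kappa_r\le 4eA\,\eta\Delta\, r$. Summing $\kappa_q\le 4eA$ over the initial window and $\kappa_q\le 4eA\,\eta\Delta\,q$ afterwards gives $B_{r,a}\le 4eA(r+\eta\Delta r^2)$, and $4e\le C$ yields \eqref{eq:B-bootstrap}. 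If the window bound from the crude estimate is what survives, then $r_0$ has to be taken as $\min\{1/\eta,1/(\eta\Delta)\}=1/\eta$ (since $\Delta\le1$). In that case I would need the sharper $\Delta$-dependent control from the first paragraph to reach the stated $\eta\Delta r^2$ rather than $\eta r^2$.
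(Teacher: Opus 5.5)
Your outer architecture matches the paper's, and those parts are correct. You keep $a$ and $a^\star(c)$ at probability $\Omega(1/A)$ through the window $q-1\le 1/(\eta\Delta(c,a))$. You bound $\eta|D_{q-1,a}(c)|$ via \cref{lem:accumulated-prediction-error}. You then use the early mass retained in $\bar\pi_r$ to get $\kappa_{r,a}(c)\lesssim A\max\{1,\eta\Delta(c,a)r\}$ and sum.

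The gap is the step you flag yourself, and it is the heart of the lemma: nothing in the proposal lower-bounds $\pi_q(c,a^\star(c))=1/\sum_b W_{q,b}(c)$ over the whole window. Your fallback $\pi_q(c,b)\ge e^{-\eta(q-1)}/A$ only covers $q-1\le1/\eta$ and yields $B_{r,a}(c)\lesssim A(r+\eta r^2)$. That loss matters: in the proof of \cref{lem:activation} it gives only $S_{t-1}(c)B_{t-1,a}(c)=O((t-1)^2\Delta(c,a)/C)$ rather than $O((t-1)^2\Delta(c,a)^2/C)$, which no longer contradicts \eqref{eq:activation-drift} for small gaps. The ``self-bounding'' strengthening ($W_{q,b}\le2$ for all $b$) cannot be closed from hypotheses about $a$ and $a^\star(c)$ alone. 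Bounding $\eta|D_{q-1,b}(c)|$ requires a bound on $B_{q-1,b}(c)$. For an action with $\Delta(c,b)\gg\Delta(c,a)$, you cannot keep $\pi_q(c,b)=\Omega(1/A)$ across the window, since its probability legitimately collapses after roughly $1/(\eta\Delta(c,b))$ rounds.

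The missing idea is to make the induction hypothesis cover every action, at an action-dependent scale. With $\Gamma_b=\max\{\Delta(c,a),\Delta(c,b)\}$, the paper proves two claims jointly:
\begin{itemize}
\item $B_{r,b}(c)\le CA(r+\eta\Gamma_b r^2)$ for all $b\in\A$;
\item $\pi_r(c,b)\ge e^{-2}/A$ whenever $r-1\le 1/(\eta\Gamma_b)$.
\end{itemize}
Feed the all-$b$ bound into \cref{lem:accumulated-prediction-error}, with $x_b=\eta\Gamma_b(r-1)$ and $\Delta(c,a)\le\Gamma_b$. This gives $\eta|D_{r-1,b}(c)|\le\frac12\max\{1,x_b\}$. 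Inside the window, $x_b>1$ forces $\Gamma_b=\Delta(c,b)$. The true drift $\eta(r-1)\Delta(c,b)=x_b$ then dominates the error, so $W_{r,b}(c)<e^{-1/2}$; otherwise $W_{r,b}(c)\le e^{1/2}$.

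Hence $\sum_b W_{r,b}(c)\le Ae^{1/2}$, which is exactly the normalizer bound you lacked. It yields $\pi_r(c,b)\ge e^{-2}/A$ for every $b$ with $x_b\le1$. That in turn re-establishes the $B$-bounds for all $b$, using $\Gamma_{a^\star(c)}=\Delta(c,a)\le\Gamma_b$ for the $a^\star(c)$ term in $\kappa_{u,b}(c)$. So large-gap actions are not controlled because a large weight would give them coverage. They are controlled because the coverage they had during their own early window of length $1/(\eta\Gamma_b)$ keeps their accumulated error below a constant fraction of their true drift.
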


\begin{proof}[Proof of \cref{lem:activation}]
Suppose, for contradiction, that $\pi_t(c,a)\ge\frac{1}{AT}$ and $S_{t-1}(c)<\frac{\Delta(c,a)}{C^2A\eta}$.

We show that the bound on $\pi_t(c,a)$ forces the accumulated prediction error to be large and negative.

Since $W_{t,a^\star(c)}(c)=1$, \eqref{eq:weight-policy} gives
$\pi_t(c,a)\le W_{t,a}(c)$. Hence
\begin{equation*}
 \exp\{-\eta((t-1)\Delta(c,a)+D_{t-1,a}(c))\}
 =W_{t,a}(c)\ge \pi_t(c,a) \ge \frac1{AT}.
\end{equation*}
Taking logarithms and applying $\eta(t-1)\Delta(c,a)\ge4\Lambda$
and $\log(AT)\le\Lambda$ yields
\begin{equation}
 D_{t-1,a}(c)
 \le\frac{\log(AT)}{\eta}-(t-1)\Delta(c,a)
 \le-\frac34(t-1)\Delta(c,a).
 \label{eq:activation-drift}
\end{equation}

We now show that the bound on $S_{t-1}(c)$ makes such a large prediction error impossible.

By \cref{lem:bootstrap}, applied to action $a$ and round $t - 1 \ge \frac{4\Lambda}{\eta \Delta(c,a)} \ge \frac{4}{\eta \Delta(c,a)}$,
\begin{align*}
 B_{t-1,a}(c)
 \le CA\bigl((t-1)+\eta\Delta(c,a)(t-1)^2\bigr) \le 2CA\eta\Delta(c,a)(t-1)^2.
\end{align*}
Therefore, by \cref{lem:accumulated-prediction-error} and the assumed bound on $S_{t-1}(c)$,
\begin{equation*}
 |D_{t-1,a}(c)|
 \le\sqrt{S_{t-1}(c)B_{t-1,a}(c)}
 <\sqrt{\frac2C}\,(t-1)\Delta(c,a)
 \le\frac12(t-1)\Delta(c,a),
\end{equation*}
where the last inequality uses $C\ge8$.
This contradicts \eqref{eq:activation-drift} and proves the claim.
\end{proof}

\subsection{Proof of \cref{lem:accumulated-prediction-error}}

\begin{proof}[Proof of \cref{lem:accumulated-prediction-error}]
For a fixed $t$ and $a \ne a^{\star}(c)$, using the AM-GM inequality, we can show
\begin{align*}
    \bigl(e_{t,a}(c)-e_{t,a^\star(c)}(c)\bigr)^2
    &\le \left( \bar\pi_t(c,a)e_{t,a}(c)^2 +\bar\pi_t(c,a^\star(c))e_{t,a^\star(c)}(c)^2 \right) \left(\frac1{\bar\pi_t(c,a)} +\frac1{\bar\pi_t(c,a^\star(c))}\right)\\
    &\le s_t(c)\kappa_{t,a}(c).
\end{align*}
For $a = a^{\star}(c)$, it vacuously holds that $\bigl(e_{t,a}(c)-e_{t,a^\star(c)}(c)\bigr)^2 = 0 \le s_t(c)\kappa_{t,a}(c)$.

Applying Cauchy--Schwarz across rounds gives
\begin{align*}
 D_{t,a}(c)^2
 &=\left(
   \sum_{r=1}^t
   \frac{e_{r,a}(c)-e_{r,a^\star(c)}(c)}{\sqrt{\kappa_{r,a}(c)}}
   \sqrt{\kappa_{r,a}(c)}
 \right)^2\\
 &\le
 \left(\sum_{r=1}^t
       \frac{(e_{r,a}(c)-e_{r,a^\star(c)}(c))^2}{\kappa_{r,a}(c)}\right)
 \left(\sum_{r=1}^t\kappa_{r,a}(c)\right)
 \le S_t(c)B_{t,a}(c).
\end{align*}
Applying this bound at an earlier round $r$ and using $S_r(c)\le S_t(c)$ for $r\le t$ proves the claim.
\end{proof}

\subsection{Proof of \cref{lem:bootstrap}}

\begin{proof}[Proof of \cref{lem:bootstrap}]
With $c$ and $a$ fixed throughout the proof, for every action $b \in \mathcal A$, write
\begin{equation*}
\Gamma_b=\max\{\Delta(c,a),\Delta(c,b)\}.
\end{equation*}
To control the normalizing denominator, we prove by induction that, for every $1\le r\le t$ and $b \in \mathcal A$,
\begin{equation}
B_{r,b}(c)\le CA\bigl(r+\eta\Gamma_br^2\bigr).
\label{eq:B-bootstrap-all}
\end{equation}
Alongside this bound, we establish the early-round action probability bounds
\begin{align}
r-1\le\frac1{\eta\Gamma_b}
\quad\Longrightarrow\quad
\pi_r(c,b)\ge\frac{e^{-2}}A.\label{eq:init-exp-bound}
\end{align}

For the base case $r=1$, the policy is uniform and $B_{1,b}(c)=2A$, so both \eqref{eq:B-bootstrap-all} and \eqref{eq:init-exp-bound} hold.

The proof of the induction step consists of two parts. First, we show that the bounds on $B_{r-1,b}(c)$ control the accumulated prediction errors and therefore the action probabilities $\pi_r(c,b)$. Second, summing the early-round action probabilities allows us to bound $N_{r,b}(c)$ and consequently $B_{r,b}(c)$.

\paragraph{Part 1: Lower-bound $\pi_{r}(c,b)$.}
Fix $2 \le r \le t+1$ and suppose that \eqref{eq:B-bootstrap-all} holds at round $r-1$ for every $b\in \mathcal A$. To show \eqref{eq:init-exp-bound} at round $r$, suppose that $r-1\le1/(\eta\Delta(c,a))$.

Let $x_b=\eta\Gamma_b(r-1) \ge 0$. By \cref{lem:accumulated-prediction-error}, the induction hypothesis, $\Gamma_b \ge \Delta(c,a)$, and $C \ge 8$,
\begin{equation}
\eta^2D_{r-1,b}(c)^2
\le \eta^2S_t(c)B_{r-1,b}(c)
\le \frac{\Delta(c,a)\,x_b(1+x_b)}{C\Gamma_b} \le \frac{\max\{1, x_b^2\}}{4}.
\label{eq:D-bootstrap}
\end{equation}
As $\Gamma_b = \max\{\Delta(c, b), \Delta(c,a)\}$ and $\eta \Delta(c,a) (r-1) \le 1$, it holds that
\begin{align}
    \eta \Delta(c,b) (r-1) \le x_b \qquad \text{and} \qquad  \eta \Delta(c,b) (r-1) \ge x_b\, \mathbb{I}\{x_b > 1\}.\label{eq:both-bounds-combined}
\end{align}
Combining \eqref{eq:D-bootstrap} and \eqref{eq:both-bounds-combined} with the identity $W_{r,b}(c)=\exp\{-\eta((r-1)\Delta(c,b)+D_{r-1,b}(c))\}$,
\begin{align*}
    \exp\left\{-x_b - \tfrac{1}{2}\max\{1, x_b\}\right\} \le W_{r,b}(c) \le \exp\left\{-x_b\,\mathbb{I}\{x_b > 1\} + \tfrac{1}{2}\max\{1, x_b\}\right\}.
\end{align*}

In particular, when $x_b \le 1$, $W_{r,b}(c) \in [e^{-3/2},e^{1/2}]$, while when $x_b > 1$, $W_{r,b}(c) \in (0,e^{-1/2})$.

The normalizing denominator for $\pi_r(c, .)$, therefore, satisfies
\begin{equation*}
\sum_{k\in\A}W_{r,k}(c)\le Ae^{1/2}.
\end{equation*}
For any action $b$ satisfying $r-1\le1/(\eta\Gamma_b)$, $x_b \le 1$, and so
\begin{equation*}
\pi_r(c,b)
\ge\frac{e^{-3/2}}{Ae^{1/2}}
=\frac{e^{-2}}A.
\end{equation*}
This establishes \eqref{eq:init-exp-bound} for probabilities $\pi_{r}(c,b)$ at round $r$ using \eqref{eq:B-bootstrap-all} for $B_{r-1,b}(c)$ at round $r-1$.

\paragraph{Part 2: Lower-bound $N_{r,b}(c)$ and upper-bound $B_{r,b}(c)$.}
Now let $2\le r\le t$. By the induction hypothesis and Part 1, the probability bounds in \eqref{eq:init-exp-bound} hold on their respective intervals for all policies $\pi_1,\ldots,\pi_r$.

Consider arbitrary $b \in \mathcal A$ and $1\le u\le r$. It holds that
\begin{align*}
    N_{u,b}(c) &= \sum_{s=1}^u \pi_s(c,b) \ge \frac{e^{-2}}{A} \min\left\{u, \frac{1}{\eta \Gamma_b}\right\}.
\end{align*}
By \eqref{eq:inverse-average-probabilities}, as $\Gamma_{a^{\star}(c)} \le \Gamma_b$, it follows that
\begin{equation*}
\kappa_{u,b}(c)
=\frac{u}{N_{u,b}(c)}
+\frac{u}{N_{u,a^\star(c)}(c)}
\le 2e^2A\max\{1,\eta\Gamma_bu\}.
\end{equation*}
Summing over $u$ yields
\begin{align*}
B_{r,b}(c) \le 2e^2A\sum_{u=1}^r\max\{1,\eta\Gamma_bu\}
\le 2e^2A\bigl(r+\eta\Gamma_br^2\bigr)\le CA\bigl(r+\eta\Gamma_br^2\bigr).
\end{align*}
This proves \eqref{eq:B-bootstrap-all} at round $r$ and completes the induction.

Since $\Gamma_a=\Delta(c,a)$, taking $b=a$ gives \eqref{eq:B-bootstrap} for all $1 \le r \le t$, completing the proof.
\end{proof}

\section{Complete proof for batched vanilla PO}
\label{app:batched}

We prove \cref{thm:batched-vanilla} for the algorithm in
\cref{alg:batched-vanilla-po}, under the stochastic model of
\cref{sec:setup}. Throughout this section, $\pi_m$ denotes the policy
used throughout epoch $m$, and the predictors take values in $[0,1]$.
Fix $\delta\in(0,1/2]$ and use the known bound
$\Err_{\rm orc}^{\rm batch}(T,\delta)\ge1$ from the main body.
We reserve failure probability $\delta/2$ for regression accuracy
and $\delta/2$ for concentration over the observed contexts.
Accordingly, assume that, with probability at least $1-\delta/2$,
\begin{equation}
 \|\widetilde f_m-f^\star\|_{\pi_{m-1}}^2
 \le\frac{\Err_{\rm orc}^{\rm batch}(T,\delta)}{B_{m-1}}
 \quad\text{simultaneously for }m=2,\ldots,M.
 \label{eq:batch-epoch-accuracy}
\end{equation}
We use the following explicit learning rates, which realize the
order stated in the theorem:
\begin{equation}
 \eta_m=\min\left\{\frac1A,
                  \frac1{\sqrt{6AB_m\Err_{\rm orc}^{\rm batch}(T,\delta)}}\right\},
 \qquad m=2,\ldots,M.
 \label{eq:vanilla-batch-rate}
\end{equation}

We first prove a deterministic regret bound for every realized
predictor sequence satisfying \eqref{eq:batch-epoch-accuracy}.
Averaging exponential updates gives both a lower bound on action
probabilities and a bound on predicted regret. These probability
bounds let us transfer prediction accuracy from the preceding
epoch's policy to other policies. An induction then compares
predicted and true regret, after which we sum their population
regrets over epochs. The concentration result in
\cref{app:concentration} then converts this bound to $\Reg$.
Finally, \cref{app:batch-oracle-verification} shows how an i.i.d.
regression guarantee supplies the theorem's accuracy condition
and proves its finite-class ERM conclusion.

\subsection{Policy representation and the epoch schedule}
\label{app:algorithms}

Recall that $B_m=2^{m-1}$ and $M=\lceil\log_2(T+1)\rceil$.
Since $\sum_{j<m}B_j=B_m-1$, epoch $m$ starts at round $B_m$ and
contains
\begin{equation*}
 n_m=\min\{B_m,T-B_m+1\}
\end{equation*}
rounds. Every started epoch has $B_m\le T$. For $m<M$, the batch
$\mathcal S_m$ has exactly $B_m$ observations; only the final epoch
may be shortened. Thus every batch used to fit a predictor is complete,
and the algorithm makes exactly $M-1$ oracle calls. When $T=1$, it
uses the uniform policy once and makes no calls.

Datasets are ordered lists, so repeated observations occupy distinct
rows. The predictor $\widetilde f_m$ is fitted only to
$\mathcal S_{m-1}$, after which that batch is discarded. No fit uses
$\mathcal S_M$.

\begin{proposition}[Batched vanilla PO as an average of policy updates]
\label{prop:mixture-sampling}
Fix an epoch $m\ge2$ and a fitted predictor $\widetilde f_m$.
For every context $c$, define $\pi_{m,1}(c,a)=1/A$ and, for
$s=1,\ldots,B_m$,
\begin{equation*}
 \pi_{m,s+1}(c,a)
 =\frac{\pi_{m,s}(c,a)\exp\{-\eta_m\widetilde f_m(c,a)\}}
 {\sum_{b\in\A}\pi_{m,s}(c,b)\exp\{-\eta_m\widetilde f_m(c,b)\}}.
\end{equation*}
Then the policy in \eqref{eq:vanilla-batch-policy} satisfies
\begin{equation}
 \pi_m(c,a)=\frac1{B_m}\sum_{s=1}^{B_m}\pi_{m,s}(c,a).
 \label{eq:vanilla-average}
\end{equation}
The sampler in \eqref{eq:vanilla-batch-sampler} draws an action from
this distribution exactly.
\end{proposition}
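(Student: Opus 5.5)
The plan is to unroll the recursion that defines $\pi_{m,s}$ and show that each iterate equals the corresponding softmax component in \eqref{eq:vanilla-batch-policy}. The representation then follows by averaging, and the sampler claim follows from the law of total probability.

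\textbf{Step 1: closed form of the iterates.} We prove by induction on $s$ that, for every context $c$, action $a$, and $1\le s\le B_m+1$,
\begin{equation*}
 \pi_{m,s}(c,a)
 =\frac{\exp\{-\eta_m(s-1)\widetilde f_m(c,a)\}}
 {\sum_{b\in\A}\exp\{-\eta_m(s-1)\widetilde f_m(c,b)\}}.
\end{equation*}
For the base case $s=1$, both sides equal $1/A$. For the inductive step, we substitute the hypothesis into the update defining $\pi_{m,s+1}$. The common normalizer $\sum_{b}\exp\{-\eta_m(s-1)\widetilde f_m(c,b)\}$ then appears in both the numerator and the denominator and cancels. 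What remains is $\exp\{-\eta_m s\widetilde f_m(c,a)\}$ divided by $\sum_b\exp\{-\eta_m s\widetilde f_m(c,b)\}$, which is the claimed form at index $s+1$. This is the same unrolling as the identity $\pi_t(c,a)/\pi_t(c,a^\star(c))=W_{t,a}(c)$ used in \cref{app:regret-proof}, applied here with a constant predictor.

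\textbf{Step 2: the average.} Substituting $k=s-1$ and letting $s$ run over $1,\ldots,B_m$ makes $k$ run over $0,\ldots,B_m-1$. Averaging the closed form from Step 1 over these indices therefore reproduces \eqref{eq:vanilla-batch-policy} term by term, which proves \eqref{eq:vanilla-average}.

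\textbf{Step 3: the sampler.} Conditional on $c_t$ and $\widetilde f_m$, the index $K_t$ is uniform on $\{0,\ldots,B_m-1\}$ and independent of everything else. By the law of total probability, $\Pp(a_t=a\mid c_t,\widetilde f_m)$ equals $\frac1{B_m}\sum_{k}$ of the conditional probability in \eqref{eq:vanilla-batch-sampler}. By Step 1, the $k$-th of these conditional probabilities is $\pi_{m,k+1}(c_t,a)$, so the sum equals $\pi_m(c_t,a)$ by \eqref{eq:vanilla-average}.

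None of these steps is a real obstacle. The only care needed is in the index bookkeeping: we must align $s$ with $k=s-1$, and we must note that the recursion is defined up to $s=B_m+1$ while only the first $B_m$ iterates enter the average.
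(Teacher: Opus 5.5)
Your proof is correct and matches the paper's argument: unroll the fixed-predictor updates to the closed-form softmax $\exp\{-\eta_m(s-1)\widetilde f_m(c,a)\}/\sum_{b}\exp\{-\eta_m(s-1)\widetilde f_m(c,b)\}$, identify it with the $k=s-1$ summand of \eqref{eq:vanilla-batch-policy}, and average over the uniform index $K_t$ to obtain both the representation and the exactness of the sampler. Your explicit induction just spells out the unrolling step that the paper states in one line.
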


\begin{proof}
Unrolling the fixed-predictor updates gives
\begin{equation*}
 \pi_{m,s}(c,a)
 =\frac{\exp\{-\eta_m(s-1)\widetilde f_m(c,a)\}}
 {\sum_{b\in\A}\exp\{-\eta_m(s-1)\widetilde f_m(c,b)\}}.
\end{equation*}
This is the summand indexed by $k=s-1$ in
\eqref{eq:vanilla-batch-policy}. Conditional on $K_t=s-1$, the
sampler in \eqref{eq:vanilla-batch-sampler} has this same action
distribution. Averaging over the uniform choice of $K_t$ proves
both claims.
\end{proof}

The sequence $\pi_{m,s}$ is an auxiliary sequence used to analyze the
policy. The learner stores $(\widetilde f_m,\eta_m,B_m)$ and, for
each observed context, evaluates the $A$ predicted losses and samples
one softmax distribution. This requires no iteration over the context
space and no explicit evaluation of all $B_m$ mixture components.
In a shortened final epoch, the policy and sampler still use the
nominal value $B_M$.

\subsection{Action probabilities and predicted regret}
\label{app:kernels}

For any predictor $\widetilde f_m$, define
\begin{equation}
 d_m(c,a)
 =\widetilde f_m(c,a)-\min_{b\in\A}\widetilde f_m(c,b)
 \label{eq:batch-predicted-gap}
\end{equation}
and the corresponding predicted regret
\begin{equation*}
 \widehat{\mathcal R}_m(\pi)
 =\E_{c\sim\D}\sum_{a\in\A}\pi(c,a)d_m(c,a).
\end{equation*}
Thus $d_m(c,a)$ is the predicted loss gap, and
$\widehat{\mathcal R}_m(\pi)$ is the regret of $\pi$ if the predicted
losses were correct. These definitions also apply to the bonus
variant introduced in \cref{app:bonus-po}.

\begin{lemma}[Action probabilities and predicted regret of batched vanilla PO]
\label{lem:virtual-properties}
For every epoch $m\ge2$, context $c$, and action $a$,
\begin{equation}
 \frac1{\pi_m(c,a)}
 \le eA\bigl(1+B_m\eta_m d_m(c,a)\bigr).
 \label{eq:vanilla-batch-coverage}
\end{equation}
Consequently, for every policy $\pi$,
\begin{equation}
 \E_{c\sim\D}\sum_{a\in\A}\frac{\pi(c,a)}{\pi_m(c,a)}
 \le eA\bigl(1+B_m\eta_m\widehat{\mathcal R}_m(\pi)\bigr).
 \label{eq:vanilla-policy-coverage}
\end{equation}
The policy deployed in epoch $m$ also satisfies
\begin{equation}
 \widehat{\mathcal R}_m(\pi_m)
 \le\frac{2\log A}{B_m\eta_m}.
 \label{eq:vanilla-predicted-regret}
\end{equation}
\end{lemma}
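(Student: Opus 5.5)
We work with the representation of \cref{prop:mixture-sampling}. Fix $m\ge2$ and a context $c$, and drop $c$ from the notation. Write $\pi_{m,s}(a)\propto\exp\{-\eta_m(s-1)d_m(a)\}$; this is legitimate because subtracting the minimum predicted loss from every action leaves the softmax unchanged. The plan has three steps: prove the pointwise coverage bound \eqref{eq:vanilla-batch-coverage} directly from this formula, deduce \eqref{eq:vanilla-policy-coverage} by averaging, and obtain \eqref{eq:vanilla-predicted-regret} from the standard exponential-weights regret bound applied to the constant loss $d_m$.

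\textbf{Coverage.} Since $d_m\ge0$, the normalizer satisfies $\sum_b\exp\{-\eta_m(s-1)d_m(b)\}\le A$. Hence $\pi_{m,s}(a)\ge A^{-1}\exp\{-\eta_m(s-1)d_m(a)\}$.

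\begin{itemize}
\item If $B_m\eta_m d_m(a)\le1$, then every summand is at least $e^{-1}/A$, because $(s-1)\eta_m d_m(a)\le1$ for all $s\le B_m$. Averaging gives $\pi_m(a)\ge1/(eA)$.
\item Otherwise set $k=\lfloor 1/(\eta_m d_m(a))\rfloor+1\le B_m$. The first $k$ summands are each at least $e^{-1}/A$, so
\[
\pi_m(a)\ge\frac{k}{eAB_m}\ge\frac{1}{eAB_m\eta_m d_m(a)}.
\]
\end{itemize}

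In both cases $1/\pi_m(a)\le eA\max\{1,B_m\eta_m d_m(a)\}\le eA(1+B_m\eta_m d_m(a))$, which is \eqref{eq:vanilla-batch-coverage}.

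\textbf{Averaging.} For \eqref{eq:vanilla-policy-coverage}, multiply \eqref{eq:vanilla-batch-coverage} by $\pi(c,a)$, sum over $a$ using $\sum_a\pi(c,a)=1$, and take the expectation over $c\sim\D$. The linear term becomes $B_m\eta_m\widehat{\mathcal R}_m(\pi)$.

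\textbf{Predicted regret.} Here I view $\pi_{m,1},\dots,\pi_{m,B_m}$ as Hedge run for $B_m$ rounds on the fixed loss vector $d_m(\cdot)\in[0,1]^A$ with learning rate $\eta_m$. Applying the potential argument against the comparator $a_0=\arg\min d_m$, whose cumulative loss is $0$, gives
\[
\sum_{s=1}^{B_m}\langle\pi_{m,s},d_m\rangle\le\frac{\log A}{\eta_m}+\frac{\eta_m}{2}\sum_{s}\langle\pi_{m,s},d_m^2\rangle.
\]
A cleaner route exploits the constant loss. The potential $\Phi_s=\log\sum_b e^{-\eta_m(s-1)d_m(b)}$ is convex and decreasing in $s$, with $\Phi_1=\log A$ and $\Phi\ge0$. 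Its increments satisfy $\Phi_{s+1}-\Phi_s\le-\eta_m\langle\pi_{m,s+1},d_m\rangle$, using convexity of the log-sum-exp in the scalar $s$ (the derivative at the later point bounds the secant). Summing gives
\[
\sum_{s=2}^{B_m+1}\langle\pi_{m,s},d_m\rangle\le\frac{\log A}{\eta_m}.
\]
The one missing term is $\langle\pi_{m,1},d_m\rangle\le1$. To absorb it, I use that $\langle\pi_{m,s},d_m\rangle$ is nonincreasing in $s$ (the derivative is minus a variance), so $\langle\pi_{m,1},d_m\rangle$ is dominated by the average and this still needs care. The fallback is the bound $1\le\log A/\eta_m$, which holds because $\eta_m\le 1/A$ and $A\ge2$ give $\log A/\eta_m\ge A\log A\ge1$. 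Dividing by $B_m$ then yields at most $2\log A/(B_m\eta_m)$ pointwise in $c$, and averaging over $c$ gives \eqref{eq:vanilla-predicted-regret}.

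The main obstacle I expect is exactly this bookkeeping of the off-by-one term, which the constraint $\eta_m\le1/A$ resolves; the coverage bound is routine.
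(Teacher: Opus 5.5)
Your coverage bound and its averaged consequence match the paper's proof essentially verbatim: you lower-bound each mixture component by $A^{-1}e^{-\eta_m(s-1)d_m(c,a)}$ and count the components whose exponent is at most one.

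For \eqref{eq:vanilla-predicted-regret} you take a genuinely different route, and it is correct. Because $d_m(c,\cdot)$ is the same at every auxiliary step, the log-partition function $\Phi$ is convex in the step index. The tangent at the later point then gives $\sum_{s=2}^{B_m+1}\langle\pi_{m,s},d_m\rangle\le\log A/\eta_m$ with no second-order term. The leftover first term is paid for by $1\le A\log A\le\log A/\eta_m$.

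The paper instead finishes the second-order Hedge inequality you wrote down first. Since $0\le d_m\le1$ and $\eta_m\le1$, one has $\frac{\eta_m}{2}\sum_s\langle\pi_{m,s},d_m^2\rangle\le\frac12\sum_s\langle\pi_{m,s},d_m\rangle$. This is absorbed into the left-hand side and gives $2\log A/\eta_m$ in one line.

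Your route gives a sharper constant on the shifted sum, but it relies on the loss being constant and on a stronger learning-rate restriction. You use $\eta_m\le1/A$, which the explicit rates \eqref{eq:vanilla-batch-rate} do supply. Using $\langle\pi_{m,1},d_m\rangle\le(A-1)/A\le\log A$ would weaken this to $\eta_m\le1$. The paper's route needs only $\eta_m\le1$ and works for arbitrary per-step losses in $[0,1]$. That is why the same calculation is reused for the step-dependent adjusted losses $u_{m,s}$ in \cref{lem:bonus-properties}.

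One small slip: monotonicity of $\langle\pi_{m,s},d_m\rangle$ in $s$ points the wrong way for your purpose, since a nonincreasing sequence has its first term at least its average. Nothing depends on that remark, because the bound $\langle\pi_{m,1},d_m\rangle\le1$ already closes the argument.
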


\begin{proof}
Fix $m$ and $c$. Subtracting the minimum predicted loss from every
action leaves each exponential update unchanged. Hence
\begin{equation*}
 \pi_{m,s}(c,a)
 =\frac{\exp\{-\eta_m(s-1)d_m(c,a)\}}
 {\sum_{b\in\A}\exp\{-\eta_m(s-1)d_m(c,b)\}}
 \ge\frac1A\exp\{-\eta_m(s-1)d_m(c,a)\},
\end{equation*}
because all predicted gaps are nonnegative.

For every integer $B\ge1$ and $x\ge0$,
\begin{equation}
 \frac1B\sum_{k=0}^{B-1}e^{-kx}
 \ge e^{-1}\min\left\{1,\frac1{Bx}\right\},
 \label{eq:geometric-average}
\end{equation}
where $1/(Bx)=+\infty$ when $x=0$. If $Bx\le1$, every term is at
least $e^{-1}$. If $Bx>1$ and $x\ge1$, the term $k=0$ suffices.
Otherwise, the terms $k=0,\ldots,\lfloor1/x\rfloor$ are all present
and are each at least $e^{-1}$; their number is at least $1/x$.
Applying \eqref{eq:geometric-average} to \eqref{eq:vanilla-average}
gives
\begin{equation*}
 \pi_m(c,a)\ge\frac{e^{-1}}A
 \min\left\{1,\frac1{B_m\eta_m d_m(c,a)}\right\}.
\end{equation*}
Taking reciprocals and using $\max\{1,x\}\le1+x$ proves
\eqref{eq:vanilla-batch-coverage}. Multiplying by $\pi(c,a)$, summing over
actions, and averaging over contexts proves
\eqref{eq:vanilla-policy-coverage}. This implication is deterministic
and holds for every policy simultaneously.

To bound predicted regret, let
\begin{equation*}
 W_{m,s}(c)=\sum_{a\in\A}
 \exp\{-\eta_m(s-1)d_m(c,a)\}.
\end{equation*}
For $x\ge0$, $e^{-x}\le1-x+x^2/2$. Therefore
\begin{align*}
 \log\frac{W_{m,s+1}(c)}{W_{m,s}(c)}
 &\le-\eta_m\sum_{a\in\A}\pi_{m,s}(c,a)d_m(c,a)
   +\frac{\eta_m^2}{2}
     \sum_{a\in\A}\pi_{m,s}(c,a)d_m(c,a)^2.
\end{align*}
Initially $W_{m,1}(c)=A$, and $W_{m,B_m+1}(c)\ge1$ because an
action with zero predicted gap retains weight one. Summing the last
inequality gives
\begin{align*}
 \sum_{s=1}^{B_m}\sum_{a\in\A}\pi_{m,s}(c,a)d_m(c,a)
 &\le\frac{\log A}{\eta_m}
 +\frac{\eta_m}{2}
   \sum_{s=1}^{B_m}\sum_{a\in\A}\pi_{m,s}(c,a)d_m(c,a)^2.
\end{align*}
Since $0\le d_m(c,a)\le1$ and $\eta_m\le1$, the last sum can be
absorbed into the left-hand side, leaving an upper bound
$2\log A/\eta_m$. Divide by $B_m$ and average over $c$ to obtain
\eqref{eq:vanilla-predicted-regret}.
\end{proof}

The first bound quantifies the exploration retained by the average:
inverse action probabilities grow at most linearly with the predicted
gap. The second shows that this exploration does not prevent the
policy from achieving small predicted regret.

\subsection{Transferring regression accuracy between policies}

The predictor for epoch $m$ is trained under $\pi_{m-1}$.
To analyze the new policy $\pi_m$, we must control its prediction
error under a different action distribution. Define the error in
the predicted population loss of a policy by
\begin{equation*}
 E_m(\pi)=\left|\E_{c\sim\D}\sum_{a\in\A}\pi(c,a)
       \bigl(\widetilde f_m(c,a)-f^\star(c,a)\bigr)\right|.
\end{equation*}
All statements below are deterministic once the predictors and
policies have been fixed. In particular, they apply to policies chosen
using those predictors.

\begin{lemma}[Prediction error under another policy]
\label{lem:batch-error-transfer}
Fix $m\ge2$. Suppose $\pi_{m-1}(c,a)>0$ for every context and
action, and the accuracy bound \eqref{eq:batch-epoch-accuracy}
holds at epoch $m$.
Then, for every policy $\pi$,
\begin{equation}
 E_m(\pi)
 \le\sqrt{\frac{\Err_{\rm orc}^{\rm batch}(T,\delta)}{B_{m-1}}
       \E_{c\sim\D}\sum_{a\in\A}
           \frac{\pi(c,a)}{\pi_{m-1}(c,a)}}.
 \label{eq:batch-error-transfer-general}
\end{equation}
For batched vanilla PO and $m\ge3$, this gives
\begin{equation*}
 E_m(\pi)
 \le\sqrt{\frac{eA\Err_{\rm orc}^{\rm batch}(T,\delta)}{B_{m-1}}
       \bigl(1+B_{m-1}\eta_{m-1}
                    \widehat{\mathcal R}_{m-1}(\pi)\bigr)}.
\end{equation*}
For $m=2$, uniform sampling gives $E_2(\pi)\le\sqrt{A\Err_{\rm orc}^{\rm batch}(T,\delta)/B_1}$.
\end{lemma}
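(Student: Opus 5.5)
The plan is a weighted Cauchy--Schwarz (importance-ratio) argument under the context distribution, followed by substitution of the coverage bound from \cref{lem:virtual-properties}.

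For the general bound \eqref{eq:batch-error-transfer-general}, write $e(c,a)=\widetilde f_m(c,a)-f^\star(c,a)$. By the triangle inequality,
\begin{equation*}
 E_m(\pi)\le\E_{c\sim\D}\sum_{a\in\A}\pi(c,a)\,|e(c,a)|.
\end{equation*}
Factor each summand as
\begin{equation*}
 \pi(c,a)\,|e(c,a)|
 =\sqrt{\pi_{m-1}(c,a)}\,|e(c,a)|\cdot
  \frac{\pi(c,a)}{\sqrt{\pi_{m-1}(c,a)}},
\end{equation*}
which is legitimate because $\pi_{m-1}(c,a)>0$ by assumption. Apply Cauchy--Schwarz jointly over the product measure of $c\sim\D$ and the counting measure on $\A$. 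This gives
\begin{equation*}
 E_m(\pi)^2\le\|\widetilde f_m-f^\star\|_{\pi_{m-1}}^2\cdot
 \E_{c\sim\D}\sum_{a\in\A}\frac{\pi(c,a)^2}{\pi_{m-1}(c,a)}.
\end{equation*}
Since $\pi(c,a)\le1$, we have $\pi(c,a)^2\le\pi(c,a)$, so the second factor is at most $\E_{c\sim\D}\sum_{a\in\A}\pi(c,a)/\pi_{m-1}(c,a)$. The accuracy bound \eqref{eq:batch-epoch-accuracy} at epoch $m$ controls the first factor by $\Err_{\rm orc}^{\rm batch}(T,\delta)/B_{m-1}$. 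Taking square roots yields \eqref{eq:batch-error-transfer-general}.

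For the vanilla specialization with $m\ge3$, note that $m-1\ge2$, so $\pi_{m-1}$ is an averaged exponential-weights policy. Inequality \eqref{eq:vanilla-policy-coverage} of \cref{lem:virtual-properties}, applied at epoch $m-1$, bounds the importance ratio by $eA(1+B_{m-1}\eta_{m-1}\widehat{\mathcal R}_{m-1}(\pi))$. The positivity hypothesis also follows from that lemma, since \eqref{eq:vanilla-batch-coverage} gives a finite bound on $1/\pi_{m-1}(c,a)$. Substituting the ratio bound gives the stated inequality.

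For $m=2$, $\pi_1$ is uniform. Hence $\sum_a\pi(c,a)/\pi_1(c,a)=A$ for every $c$, and the general bound becomes $\sqrt{A\Err_{\rm orc}^{\rm batch}(T,\delta)/B_1}$.

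I expect no real obstacle. The only points needing care are that Cauchy--Schwarz is applied to the joint context--action expectation rather than per context, and that $\pi^2\le\pi$ is used to replace the $\chi^2$-type ratio by the linear ratio appearing in \eqref{eq:vanilla-policy-coverage}. The statement is deterministic given the realized predictors, so it applies equally to data-dependent policies $\pi$.
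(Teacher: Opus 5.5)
Your proposal is correct and follows the paper's proof essentially step for step: Cauchy--Schwarz with weights $\pi_{m-1}(c,a)$ over the joint context--action expectation, the bound $\pi(c,a)^2\le\pi(c,a)$, the accuracy bound at epoch $m$, then \eqref{eq:vanilla-policy-coverage} at epoch $m-1$ for $m\ge3$ and the uniform ratio $A$ for $m=2$. Your preliminary triangle-inequality step and the remark that \eqref{eq:vanilla-batch-coverage} supplies the positivity hypothesis are harmless additions that do not change the argument.
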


\begin{proof}
Write $e_m(c,a)=\widetilde f_m(c,a)-f^\star(c,a)$.
Cauchy--Schwarz, with weights $\pi_{m-1}(c,a)>0$, gives
\begin{align*}
 E_m(\pi)
 &\le\|e_m\|_{\pi_{m-1}}
   \left(\E_{c\sim\D}\sum_{a\in\A}
        \frac{\pi(c,a)^2}{\pi_{m-1}(c,a)}\right)^{1/2}\\
 &\le\|e_m\|_{\pi_{m-1}}
   \left(\E_{c\sim\D}\sum_{a\in\A}
        \frac{\pi(c,a)}{\pi_{m-1}(c,a)}\right)^{1/2},
\end{align*}
where the second inequality uses $\pi(c,a)^2\le\pi(c,a)$.
Apply \eqref{eq:batch-epoch-accuracy} to obtain
\eqref{eq:batch-error-transfer-general}. For $m\ge3$, use
\eqref{eq:vanilla-policy-coverage} for the preceding epoch. For
$m=2$, the expectation in \eqref{eq:batch-error-transfer-general}
equals $A$, since $\pi_1(c,a)=1/A$.
\end{proof}

\subsection{Comparing predicted regret with true regret}

We now show that policies with small predicted regret also have small
true regret. Following the regression-oracle analysis of
\citet{falcon}, the induction uses the preceding epoch's comparison
to control the next predictor's error. The argument depends only on
action probabilities and regression accuracy, so we state it in a
form that also applies to the bonus variant.

\begin{lemma}[Comparison of true and predicted regret]
\label{lem:batch-regret-comparison}
Suppose \eqref{eq:batch-epoch-accuracy} holds for every
$m=2,\ldots,M$, and $\pi_1$ is uniform. Let $C_0\ge1$ be a
constant and let $\gamma_2,\ldots,\gamma_M$ be positive and
nondecreasing. Assume
\begin{equation}
 \pi_m(c,a)^{-1}
 \le C_0\bigl(A+\gamma_m d_m(c,a)\bigr),
 \qquad
 \gamma_m^2\le\frac{AB_{m-1}}{C_0 \Err_{\rm orc}^{\rm batch}(T,\delta)}
 \quad (m\ge2).
 \label{eq:batch-comparison-conditions}
\end{equation}
Then, simultaneously for every epoch $m\ge2$ and every policy $\pi$,
\begin{align}
 \mathcal R(\pi)
 &\le2\widehat{\mathcal R}_m(\pi)+\frac{30A}{\gamma_m},
 \label{eq:batch-true-to-predicted}\\
 \widehat{\mathcal R}_m(\pi)
 &\le2\mathcal R(\pi)+\frac{30A}{\gamma_m}.
 \label{eq:batch-predicted-to-true}
\end{align}
\end{lemma}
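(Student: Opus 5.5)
We argue by induction on $m$, following the FALCON-style comparison of \citet{falcon}. The basic link between true and predicted regret is the decomposition
\begin{equation*}
 \mathcal R(\pi)-\widehat{\mathcal R}_m(\pi)
 =\E_{c\sim\D}\sum_{a}\pi(c,a)\bigl(f^\star(c,a)-\widetilde f_m(c,a)\bigr)
 -\E_{c\sim\D}\bigl(f^\star(c,a^\star(c))-\min_b\widetilde f_m(c,b)\bigr).
\end{equation*}
The first term is at most $E_m(\pi)$. For the second term, introduce the greedy policies $\pi^\star(c)=a^\star(c)$ and $\widehat\pi_m(c)\in\arg\min_b\widetilde f_m(c,b)$. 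We have $\min_b\widetilde f_m(c,b)\le\widetilde f_m(c,a^\star(c))$ and $f^\star(c,a^\star(c))\le f^\star(c,\widehat\pi_m(c))$. These inequalities bound the second term in absolute value by $E_m(\pi^\star)$ in one direction and by $E_m(\widehat\pi_m)$ in the other. Hence
\begin{equation*}
 |\mathcal R(\pi)-\widehat{\mathcal R}_m(\pi)|\le E_m(\pi)+\max\{E_m(\pi^\star),E_m(\widehat\pi_m)\}.
\end{equation*}
The task is therefore to bound $E_m(\cdot)$ through \cref{lem:batch-error-transfer}, in its general form \eqref{eq:batch-error-transfer-general}. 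We use the coverage assumption in \eqref{eq:batch-comparison-conditions} at epoch $m-1$ rather than the vanilla-specific version.

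\textbf{Error transfer.} For $m\ge3$, the coverage condition gives
\begin{equation*}
 \E_{c}\sum_a\frac{\pi(c,a)}{\pi_{m-1}(c,a)}\le C_0\bigl(A+\gamma_{m-1}\widehat{\mathcal R}_{m-1}(\pi)\bigr).
\end{equation*}
Substituting this into \eqref{eq:batch-error-transfer-general} and using $\gamma_{m-1}\le\gamma_m$ yields
\begin{equation*}
 E_m(\pi)\le\sqrt{\tfrac{C_0\Err}{B_{m-1}}}\sqrt{A+\gamma_{m}\widehat{\mathcal R}_{m-1}(\pi)}.
\end{equation*}
The condition $\gamma_m^2\le AB_{m-1}/(C_0\Err)$ turns the prefactor into at most $\sqrt{A}/\gamma_m$. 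This gives
\begin{equation*}
 E_m(\pi)\le\frac{A}{\gamma_m}+\sqrt{\frac{A}{\gamma_m}\widehat{\mathcal R}_{m-1}(\pi)}\le\frac{A}{\gamma_m}+\frac{A}{2\beta\gamma_m}+\frac{\beta}{2}\widehat{\mathcal R}_{m-1}(\pi)
\end{equation*}
for any $\beta>0$, by AM--GM. For $m=2$, the uniform $\pi_1$ gives $E_2(\pi)\le\sqrt{A\Err/B_1}$. This is at most $A/\gamma_2$, using the condition together with $C_0\ge1$ and $A\ge1$, so the base case holds with room to spare.

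\textbf{Induction.} Assume \eqref{eq:batch-predicted-to-true} holds at epoch $m-1$ for all $\pi$; the base case $m=2$ follows from the bound above. This gives $\widehat{\mathcal R}_{m-1}(\pi)\le2\mathcal R(\pi)+30A/\gamma_{m-1}$. The term $30A/\gamma_{m-1}$ could be problematic because $\gamma_{m-1}\le\gamma_m$. However, it enters only inside the square root, multiplied by $A/\gamma_m$, so it contributes at most $\sqrt{30}A\sqrt{1/(\gamma_m\gamma_{m-1})}$. This is the main obstacle. Monotonicity goes the wrong way, so I would instead use the conditions to check that $\gamma_m\le 2\gamma_{m-1}$ is not needed. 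The cleaner route is to keep the square-root form, $\sqrt{A\cdot 30A/(\gamma_m\gamma_{m-1})}$, and bound it via the inductive bound $\mathcal R\le 1$ combined with $\widehat{\mathcal R}_{m-1}\le1$. The crude bound $\widehat{\mathcal R}_{m-1}(\pi)\le1$ handles the residual whenever $\gamma_m$ is large relative to $A$. If the ratio $\gamma_m/\gamma_{m-1}$ must be controlled, I would note that in the vanilla and bonus instantiations it is at most $\sqrt2$.

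With $\widehat{\mathcal R}_{m-1}(\pi)\lesssim\mathcal R(\pi)+A/\gamma_m$, we obtain $E_m(\pi)\le\tfrac14\mathcal R(\pi)+c_1A/\gamma_m$ by choosing $\beta=1/8$. We apply this to $\pi$, $\pi^\star$ (for which $\mathcal R(\pi^\star)=0$), and $\widehat\pi_m$ (for which $\widehat{\mathcal R}_m(\widehat\pi_m)=0$). For $\widehat\pi_m$, the bound is circular: $E_m(\widehat\pi_m)$ involves $\mathcal R(\widehat\pi_m)$, which is itself bounded by $E_m(\widehat\pi_m)$ plus constants. The factor $\tfrac14<1$ lets us absorb it and obtain $E_m(\widehat\pi_m)\le c_2A/\gamma_m$. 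Substituting into the decomposition gives
\begin{equation*}
 \mathcal R(\pi)\le\widehat{\mathcal R}_m(\pi)+\tfrac14\mathcal R(\pi)+c_3A/\gamma_m,
\end{equation*}
which rearranges to \eqref{eq:batch-true-to-predicted}. The symmetric bound gives \eqref{eq:batch-predicted-to-true}. Finally, we verify that the tracked constants stay below $30$.
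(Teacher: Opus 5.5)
Your decomposition, base case, and general error-transfer bound match the paper's. The induction step, however, has a genuine gap, and you create it yourself. You replace $\gamma_{m-1}$ by $\gamma_m$ inside $A+\gamma_{m-1}\widehat{\mathcal R}_{m-1}(\pi)$ \emph{before} invoking the inductive hypothesis. As a result, the additive term $30A/\gamma_{m-1}$ from epoch $m-1$ reappears as $\sqrt{30}\,A/\sqrt{\gamma_m\gamma_{m-1}}$ (or, after your AM--GM, as a multiple of $A/\gamma_{m-1}$). Neither is $O(A/\gamma_m)$ when $\gamma$ is merely nondecreasing.

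None of your proposed escapes closes this. The crude bound $\widehat{\mathcal R}_{m-1}(\pi)\le1$ leaves a residual $\sqrt{A/\gamma_m}$. That is at most $A/\gamma_m$ only when $\gamma_m\le A$, which is exactly where the lemma is trivial, since then $30A/\gamma_m\ge30>\mathcal R(\pi)$. When $\gamma_m\gg A$ it is far larger than $A/\gamma_m$, the opposite of what you assert. Controlling $\gamma_m/\gamma_{m-1}$ adds a hypothesis the statement does not have. Your value $\sqrt2$ is also wrong: in both instantiations the ratio equals $2$ whenever the minimum defining $\gamma_m$ is attained by $B_m$. Finally, the constants are never actually checked against $30$.

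The missing observation is that the coverage factor $\gamma_{m-1}$ exactly cancels the $1/\gamma_{m-1}$ in the inductive bound, so apply the hypothesis first:
$\gamma_{m-1}\widehat{\mathcal R}_{m-1}(\pi)\le2\gamma_{m-1}\mathcal R(\pi)+30A\le2\gamma_m\mathcal R(\pi)+30A$.
Monotonicity is then used only on the $\mathcal R(\pi)$ term, where it points the right way, and
\[
 E_m(\pi)\le\frac{\sqrt A}{\gamma_m}\sqrt{31A+2\gamma_m\mathcal R(\pi)}
 \le\frac{\sqrt{31}\,A}{\gamma_m}+\sqrt{\frac{2A\mathcal R(\pi)}{\gamma_m}}
 \le\frac18\mathcal R(\pi)+\frac{10A}{\gamma_m}.
\]
From here the rest of your plan works with explicit constants. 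The circularity for $\widehat\pi_m$ disappears if you order the steps as follows. Using $\pi^\star$ with $\mathcal R(\pi^\star)=0$ gives $\tfrac78\mathcal R(\pi)\le\widehat{\mathcal R}_m(\pi)+20A/\gamma_m$, hence \eqref{eq:batch-true-to-predicted}, since $160/7\le30$. Specializing this to $\widehat\pi_m$ gives $\mathcal R(\widehat\pi_m)\le30A/\gamma_m$. Then $\widehat{\mathcal R}_m(\pi)\le\tfrac98\mathcal R(\pi)+20A/\gamma_m+\tfrac18\mathcal R(\widehat\pi_m)\le2\mathcal R(\pi)+30A/\gamma_m$, which is \eqref{eq:batch-predicted-to-true}.
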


\begin{proof}
Let $\pi^\star$ choose a true optimal action at each context, and let
$\widehat\pi_m$ choose an action minimizing $\widetilde f_m(c,\cdot)$,
with ties broken by a fixed ordering. Define
\begin{equation*}
 L(\pi)=\E_{c\sim\D}\sum_{a\in\A}\pi(c,a)f^\star(c,a),
 \qquad
 \widetilde L_m(\pi)
 =\E_{c\sim\D}\sum_{a\in\A}\pi(c,a)\widetilde f_m(c,a).
\end{equation*}
Then $\mathcal R(\pi)=L(\pi)-L(\pi^\star)$ and
$\widehat{\mathcal R}_m(\pi)
=\widetilde L_m(\pi)-\widetilde L_m(\widehat\pi_m)$.
The optimality of these two comparator policies implies
\begin{align}
 \mathcal R(\pi)
 &\le\widehat{\mathcal R}_m(\pi)+E_m(\pi)+E_m(\pi^\star),
 \label{eq:batch-true-comparison}\\
 \widehat{\mathcal R}_m(\pi)
 &\le\mathcal R(\pi)+E_m(\pi)+E_m(\widehat\pi_m).
 \label{eq:batch-predicted-comparison}
\end{align}
For example, the first inequality follows by inserting
$\widetilde L_m(\pi)$ and $\widetilde L_m(\pi^\star)$ into
$L(\pi)-L(\pi^\star)$ and using
$\widetilde L_m(\widehat\pi_m)\le\widetilde L_m(\pi^\star)$.
The second follows by the analogous decomposition and
$L(\pi^\star)\le L(\widehat\pi_m)$.

For $m=2$, \cref{lem:batch-error-transfer} and
\eqref{eq:batch-comparison-conditions} give
\begin{equation*}
 E_2(\pi)\le\sqrt{\frac{A\Err_{\rm orc}^{\rm batch}(T,\delta)}{B_1}}\le\frac A{\gamma_2}
\end{equation*}
for every policy. Substitution into
\eqref{eq:batch-true-comparison} and \eqref{eq:batch-predicted-comparison}
proves both claimed comparisons at epoch two, even with coefficient
one on the regret and additive term $2A/\gamma_2$.

Suppose the comparisons hold at epoch $m-1$, where $m\ge3$.
Integrating the coverage inequality in
\eqref{eq:batch-comparison-conditions} and applying
\eqref{eq:batch-error-transfer-general} gives
\begin{align*}
 E_m(\pi)
 &\le\sqrt{\frac{C_0 \Err_{\rm orc}^{\rm batch}(T,\delta)}{B_{m-1}}
            \bigl(A+\gamma_{m-1}
                    \widehat{\mathcal R}_{m-1}(\pi)\bigr)}\\
 &\le\sqrt{\frac{C_0 \Err_{\rm orc}^{\rm batch}(T,\delta)}{B_{m-1}}
            \bigl(31A+2\gamma_{m-1}\mathcal R(\pi)\bigr)}.
\end{align*}
The parameter condition gives
$C_0 \Err_{\rm orc}^{\rm batch}(T,\delta)/B_{m-1}\le A/\gamma_m^2$, and monotonicity gives
$\gamma_{m-1}\le\gamma_m$. Consequently,
\begin{align}
 E_m(\pi)
 &\le\sqrt{31}\frac A{\gamma_m}
       +\sqrt{\frac{2A\mathcal R(\pi)}{\gamma_m}}\notag\\
 &\le\frac18\mathcal R(\pi)+10\frac A{\gamma_m}.
 \label{eq:batch-prediction-error}
\end{align}
Here we used $\sqrt{2xy}\le x/8+4y$ and $\sqrt{31}+4<10$.

Applying \eqref{eq:batch-prediction-error} in
\eqref{eq:batch-true-comparison}, with $\mathcal R(\pi^\star)=0$,
yields
\begin{equation*}
 \frac78\mathcal R(\pi)
 \le\widehat{\mathcal R}_m(\pi)+20\frac A{\gamma_m}.
\end{equation*}
Since $8/7\le2$ and $160/7\le30$, this proves
\eqref{eq:batch-true-to-predicted}. In particular,
$\mathcal R(\widehat\pi_m)\le30A/\gamma_m$.
Using this bound in \eqref{eq:batch-predicted-comparison} gives
\begin{align*}
 \widehat{\mathcal R}_m(\pi)
 &\le\frac98\mathcal R(\pi)+20\frac A{\gamma_m}
       +\frac18\mathcal R(\widehat\pi_m)\\
 &\le2\mathcal R(\pi)+30\frac A{\gamma_m}.
\end{align*}
This proves \eqref{eq:batch-predicted-to-true} and closes the induction.
\end{proof}

For vanilla PO, the lemma applies with $C_0=e$ and the analysis
quantity
\begin{equation*}
 \gamma_m=AB_m\eta_m
 =\min\left\{B_m,\sqrt{\frac{AB_{m-1}}{3\Err_{\rm orc}^{\rm batch}(T,\delta)}}\right\}.
\end{equation*}
This introduces no additional algorithm parameter: it is determined
by the learning rate in \eqref{eq:vanilla-batch-rate} and
$B_m=2B_{m-1}$. Both entries in the minimum increase with $m$,
so $\gamma_m$ is nondecreasing. Since $3>e$, its square is at most
$AB_{m-1}/(e\Err_{\rm orc}^{\rm batch}(T,\delta))$.
Together with \eqref{eq:vanilla-batch-coverage}, this verifies the other
conditions of the comparison lemma. The constant $C_0$ is used
only in the analysis: here $C_0=e$, while the bonus variant
satisfies the same conditions with $C_0=1$.
Combining \eqref{eq:batch-true-to-predicted} with
\eqref{eq:vanilla-predicted-regret} therefore gives
\begin{equation}
 \mathcal R(\pi_m)
 \le\frac{30+4\log A}{B_m\eta_m}.
 \label{eq:vanilla-epoch-regret}
\end{equation}

\subsection{The cumulative regret bound}

\begin{proof}[Regret bound in \cref{thm:batched-vanilla}]
If $T=1$, the regret is at most one and there are no oracle calls,
so both conclusions hold. Assume $T\ge2$ and fix a realization
satisfying \eqref{eq:batch-epoch-accuracy}.

We first bound the regret averaged over the context distribution.
The first epoch contributes at most one, and
\eqref{eq:vanilla-epoch-regret} bounds every later epoch. Since
epoch $m$ uses the same policy for its $n_m$ rounds,
\begin{align*}
 \sum_{m=1}^M n_m\mathcal R(\pi_m)
 &\le1+(30+4\log A)\sum_{m=2}^M\frac1{\eta_m}\\
 &\le1+(30+4\log A)
 \left(A(M-1)+\sqrt{6A\Err_{\rm orc}^{\rm batch}(T,\delta)}
                    \sum_{m=2}^M\sqrt{B_m}\right).
\end{align*}
The first inequality uses $n_m\le B_m$; the second uses
\begin{equation*}
 \frac1{\eta_m}
 =\max\{A,\sqrt{6AB_m\Err_{\rm orc}^{\rm batch}(T,\delta)}\}
 \le A+\sqrt{6AB_m\Err_{\rm orc}^{\rm batch}(T,\delta)}.
\end{equation*}
Since the epoch lengths double and $B_M\le T$,
\begin{equation}
 \sum_{m=1}^M\sqrt{B_m}
 \le\frac{\sqrt{B_M}}{1-2^{-1/2}}<3.5\sqrt T.
 \label{eq:batch-length-sum}
\end{equation}
Consequently,
\begin{equation*}
 \sum_{m=1}^M n_m\mathcal R(\pi_m)
 \le1+(30+4\log A)
       \bigl(AM+9\sqrt{AT\,\Err_{\rm orc}^{\rm batch}(T,\delta)}\bigr).
\end{equation*}
This bound holds on the simultaneous regression event, whose
probability is at least $1-\delta/2$.

The target regret $\Reg$ evaluates the policies at the contexts
actually observed. Applying \cref{cor:mean-regret} to the deployed
sequence of round policies, with failure probability $\delta/2$,
gives
\begin{equation*}
 \Reg\le\sum_{m=1}^M n_m\mathcal R(\pi_m)
             +\sqrt{\frac T2\log\frac2\delta}.
\end{equation*}
This concentration argument is applied unconditionally, rather
than after conditioning on the regression event. A union bound
therefore yields, with probability at least $1-\delta$,
\begin{equation}
 \begin{aligned}
 \Reg
 &\le1+(30+4\log A)
       \bigl(AM+9\sqrt{AT\,\Err_{\rm orc}^{\rm batch}(T,\delta)}\bigr)\\
 &\qquad+\sqrt{\frac T2\log\frac2\delta}.
 \end{aligned}
 \label{eq:batched-vanilla-bound}
\end{equation}

To simplify the bound, first suppose
$A\le\Err_{\rm orc}^{\rm batch}(T,\delta)T$.
Then $A\le\sqrt{AT\,\Err_{\rm orc}^{\rm batch}(T,\delta)}$,
so the $AM$ term is absorbed up to logarithmic factors.
If instead $A>\Err_{\rm orc}^{\rm batch}(T,\delta)T$, then
$\sqrt{AT\,\Err_{\rm orc}^{\rm batch}(T,\delta)}>T$,
and the deterministic bound $\Reg\le T$ suffices.
Because $\Err_{\rm orc}^{\rm batch}(T,\delta)\ge1$, the additional
concentration term is also absorbed by logarithmic factors in
$1/\delta$. Thus
$\Reg=\widetilde O(\sqrt{AT\,\Err_{\rm orc}^{\rm batch}(T,\delta)})$.
The schedule makes exactly $M-1=O(\log T)$ regression calls.
\end{proof}

\subsection{Verifying the prediction-accuracy condition}
\label{app:batch-oracle-verification}

The preceding argument uses only the simultaneous accuracy condition
\eqref{eq:batch-epoch-accuracy}. The following lemma explains how
ordinary regression guarantees for one fixed-policy batch imply that
condition. This is the offline-oracle setting used by FALCON+
\citep{falcon}.

\begin{lemma}[Simultaneous accuracy of the batch predictors]
\label{lem:batch-accuracy-event}
Fix $\delta\in(0,1/2]$ and a bound
$\Err_{\rm orc}^{\rm batch}(T,\delta)\ge1$ before the interaction.
Suppose that, for every fixed policy $\pi$ and every completed batch
size $n\in\{B_1,\ldots,B_{M-1}\}$, fitting the oracle to $n$ i.i.d.
observations under $\pi$ gives
$n\|\widetilde f-f^\star\|_\pi^2
\le\Err_{\rm orc}^{\rm batch}(T,\delta)$
with probability at least $1-\delta/(2M)$.
Then \eqref{eq:batch-epoch-accuracy} holds simultaneously for all
$m=2,\ldots,M$ with probability at least $1-\delta/2$.
This applies to any sequence of policies fixed before their epochs,
using fresh randomness for each action and oracle call.
\end{lemma}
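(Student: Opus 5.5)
The plan is to apply the fixed-policy guarantee once per epoch, conditionally on the history at the start of that epoch, and then take a union bound over the $M-1$ fitted predictors.

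For each $m\in\{2,\ldots,M\}$, let $\mathcal G_{m-1}$ be the $\sigma$-field generated by everything determined before epoch $m-1$ begins. This includes the contexts, losses, actions and oracle randomness of earlier epochs, and hence the predictor $\widetilde f_{m-1}$ and the policy $\pi_{m-1}$. Conditional on $\mathcal G_{m-1}$, the policy $\pi_{m-1}$ is a fixed map. The epoch-$(m-1)$ context--loss pairs are fresh i.i.d. draws, independent of $\mathcal G_{m-1}$ by the stochastic model of \cref{sec:setup}. The actions use fresh randomness and depend only on the current context and the fixed $\pi_{m-1}$. Since $m-1<M$, the batch $\mathcal S_{m-1}$ is complete, so it consists of exactly $n=B_{m-1}$ i.i.d. observations under $\pi_{m-1}$, as recorded in \cref{app:algorithms}. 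The oracle call for $\widetilde f_m$ uses fresh randomness independent of all of this. The hypothesis, applied with $\pi=\pi_{m-1}$ and $n=B_{m-1}$, therefore gives
\[
\Pp\!\left(B_{m-1}\|\widetilde f_m-f^\star\|_{\pi_{m-1}}^2>\Err_{\rm orc}^{\rm batch}(T,\delta)\,\middle|\,\mathcal G_{m-1}\right)\le\frac{\delta}{2M}.
\]
This is exactly the conditioning remark made after \cref{lem:private-fast-rate}.

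Taking expectations removes the conditioning, so each failure event has unconditional probability at most $\delta/(2M)$. A union bound over the $M-1\le M$ epochs $m=2,\ldots,M$ bounds the total failure probability by $\delta/2$. Dividing by $B_{m-1}$ gives \eqref{eq:batch-epoch-accuracy} simultaneously for all $m$. The last sentence of the lemma follows from the same argument, since it only used that each policy is fixed before its epoch and that action and oracle randomness is fresh.

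The main obstacle is the measurability step. The hypothesis is stated for a deterministic policy $\pi$, but $\pi_{m-1}$ is random. I would handle this by writing the conditional law of $(\mathcal S_{m-1},\widetilde f_m)$ given $\mathcal G_{m-1}$ as the law produced by the fixed-policy experiment evaluated at the realized $\pi_{m-1}$. This is a standard regular-conditional-probability, or freezing, argument. It is justified because the fresh samples and the oracle's internal randomness are independent of $\mathcal G_{m-1}$. I would also check that the norm $\|\cdot\|_{\pi_{m-1}}$ uses a fresh context $c\sim\D$ that is independent of the data, so that it is a $\mathcal G_{m-1}$-measurable function of $\widetilde f_m$ and matches the quantity in the hypothesis.
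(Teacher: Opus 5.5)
Your proposal is correct and follows essentially the same route as the paper. Like the paper, you condition on the history before epoch $m-1$, which fixes $\pi_{m-1}$; you use that the $B_{m-1}$ fresh observations are then conditionally i.i.d.\ under that policy; you apply the single-batch guarantee conditionally, average out the conditioning, and take a union bound over the $M-1$ fits without needing independence between them. Your explicit freezing and measurability remark is a somewhat more careful version of the paper's ``this policy is now fixed'' step, and the $T=1$ case, which the paper notes is vacuous, is covered automatically by your empty union bound.
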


\begin{proof}
The claim is vacuous when $T=1$. Otherwise, fix $m\ge2$ and
condition on the history before epoch $m-1$, including the randomness
used to construct $\pi_{m-1}$. This policy is now fixed. The new
context--loss pairs are i.i.d. and independent of that history, and
fresh action randomness makes the observed triples conditionally
i.i.d. under $\pi_{m-1}$. This batch contains exactly $B_{m-1}$
observations because $m-1<M$.

The assumed single-batch guarantee, including the oracle's fresh
randomness, bounds the conditional failure probability by
$\delta/(2M)$. Averaging over the history gives the same unconditional
bound. A union bound over the $M-1$ fitted predictors yields total
failure probability at most $(M-1)\delta/(2M)\le\delta/2$.
No independence between fits is required.
\end{proof}

\begin{proof}[Finite-class conclusion of \cref{thm:batched-vanilla}]
For a finite realizable class $\F$, ordinary least-squares ERM
satisfies \cref{lem:private-fast-rate} with $\alpha=0$.
Applying that result at failure probability $\delta/(2M)$ shows
that the choice
\begin{equation}
 \Err_{\rm orc}^{\rm batch}(T,\delta)
 =10\log\frac{4|\F|M}{\delta}
 \label{eq:batch-erm-error}
\end{equation}
satisfies \cref{lem:batch-accuracy-event}. This bound is at least one.
Thus all batch predictors meet \eqref{eq:batch-epoch-accuracy}
with probability at least $1-\delta/2$. Using this bound in
\eqref{eq:vanilla-batch-rate} and applying
\eqref{eq:batched-vanilla-bound}, including its concentration term, gives
$\Reg=\widetilde O(\sqrt{AT\log(|\F|/\delta)})$.
\end{proof}

\section{Complete proof for private batched vanilla PO}
\label{app:privacy}

The private algorithm uses the same schedule and policy construction
as \cref{alg:batched-vanilla-po}, with a private regression oracle.
Before epoch $m\ge2$, it releases $(\widetilde f_m,\pi_m)$.
The policy map is specified by the private predictor and the public
parameters; contexts, selected actions, observed losses, and evaluations
at protected contexts remain internal, as in \cref{def:dp}.

The regret proof above applies with the private oracle's error bound.
It remains to prove privacy of all releases together. Disjoint batches
alone do not justify treating future datasets as fixed: changing an
early private fit can change later policies and observed rows.
The proof instead shows that this later dependence is postprocessing
of the single private fit that uses the changed record.

\subsection{Privacy of the complete transcript}

\begin{lemma}[Privacy when each batch is used in one fit]
\label{lem:private-dp}
Consider the fixed epoch schedule of \cref{alg:batched-vanilla-po}.
Suppose the first policy is uniform and each later epoch policy is a
deterministic function of the preceding private predictor and public
parameters, remaining fixed while its batch is collected.
Each batch except the final one is used in one
$(\varepsilon_{\rm priv},\delta_{\rm priv})$-DP oracle call under
replacement of one row, then discarded. The final batch is never
fitted, and all oracle calls and action draws use fresh randomness.
Then the public output
\begin{equation*}
 \mathcal M(z)=((\widetilde f_m,\pi_m))_{m=2}^M
\end{equation*}
is $(\varepsilon_{\rm priv},\delta_{\rm priv})$-DP under replacement
of one record $z_t=(c_t,\ell_t(c_t,\cdot))$.
This statement holds for arbitrary fixed record streams.
\end{lemma}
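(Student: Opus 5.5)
Fix neighboring streams $z,z'$ differing only in record $t$, and let $m_0$ be the epoch containing round $t$, i.e.\ $B_{m_0}\le t\le\min\{2B_{m_0}-1,T\}$. The plan is to write the mechanism as an adaptive sequence of steps. The steps before $m_0$ do not touch record $t$. One private call touches it. Everything after that is postprocessing of that call's output, together with fresh randomness and the records outside batch $m_0$, which coincide in $z$ and $z'$.

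Concretely, I would build $\mathcal M(z)$ in three stages.
\begin{itemize}
\item \emph{Stage (a).} Run epochs $1,\dots,m_0-1$ and the fits producing $\widetilde f_2,\dots,\widetilde f_{m_0}$, releasing $((\widetilde f_m,\pi_m))_{m=2}^{m_0}$. These depend only on records of rounds $<B_{m_0}$ and on fresh randomness, so their joint law $P_a$ is identical under $z$ and $z'$.
\item \emph{Stage (b).} Given the stage-(a) output, $\pi_{m_0}$ is fixed. Draw the actions of epoch $m_0$ with fresh randomness, form $\mathcal S_{m_0}$, and call the oracle to obtain $\widetilde f_{m_0+1}$.
\item \emph{Stage (c).} Given $\widetilde f_{m_0+1}$, compute $\pi_{m_0+1}$ deterministically and run the remaining epochs. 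These use only records of rounds $\ge 2B_{m_0}$, which agree under $z$ and $z'$, plus fresh action and oracle randomness.
\end{itemize}
Stage (c) therefore defines a randomized map $G$ from $(\text{stage-(a) output},\widetilde f_{m_0+1})$ to the remaining releases, and this map is the same for $z$ and $z'$. If $m_0=M$, record $t$ is never fitted, the outputs have identical laws, and the claim is trivial. Likewise $m_0=1$ needs no separate treatment, since stage (a) is then empty.

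The main step is stage (b). Condition on the stage-(a) output $h$, and also on the action randomness $\omega$ of epoch $m_0$, for instance a fixed vector of uniforms used to sample $a_s\sim\pi_{m_0}(c_s,\cdot)$ by inversion. Once $h$ and $\omega$ are fixed, the batch $\mathcal S_{m_0}(z)$ is a deterministic function of the records, and $\mathcal S_{m_0}(z)$, $\mathcal S_{m_0}(z')$ are datasets of the same size differing in at most the row for round $t$. (The action at round $t$ may change through $c_t$, but it is still the same row.) By \cref{ass:private-oracle}, for every event $F$,
\begin{equation*}
\Pp(\widetilde f_{m_0+1}\in F\mid h,\omega,z)\le e^{\varepsilon_{\rm priv}}\Pp(\widetilde f_{m_0+1}\in F\mid h,\omega,z')+\delta_{\rm priv}.
\end{equation*}
Averaging over $\omega$, whose law given $h$ is the same for both streams, gives the same inequality conditional on $h$ alone. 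This is the step I expect to need the most care. The action draws depend on the record through $c_t$, so I must couple them through shared uniform randomness instead of treating actions as part of the data. With that coupling, replacing one record is a single-row replacement for the oracle.

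Finally, for any event $E$ in the output space, I would write
\begin{equation*}
\Pp(\mathcal M(z)\in E)=\int P_a(dh)\int \Pp(\widetilde f_{m_0+1}\in df\mid h,z)\,\Pp\bigl((h,G(h,f))\in E\bigr).
\end{equation*}
The inner function of $f$ takes values in $[0,1]$ and is the same for $z$ and $z'$. Applying the conditional DP inequality to $[0,1]$-valued test functions, which follows from the event version by the layer-cake representation, bounds the inner integral by $e^{\varepsilon_{\rm priv}}$ times the $z'$ version plus $\delta_{\rm priv}$. Integrating against $P_a$, which is common to both streams, then yields \eqref{eq:dp}.
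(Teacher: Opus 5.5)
Your proposal is correct and follows essentially the same route as the paper. The prefix before the affected epoch has a common law, and fixing shared action seeds makes the affected batch a one-row replacement. The single private fit then supplies the DP inequality, and all later releases are a common randomized postprocessing of $\widetilde f_{m_0+1}$.

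The differences are only in presentation. You condition on the released prefix $h$ and average out the action seeds before postprocessing, and you make the postprocessing step explicit via the layer-cake argument. The paper instead conditions jointly on all earlier learner randomness and the epoch's action seeds, applies postprocessing conditionally, and integrates at the end.
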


\begin{proof}
Fix neighboring deterministic streams $z,z'$ that differ only at
round $i$, in epoch $j$. The schedule is public and does not depend
on the records.

\paragraph{Only one row changes in the affected batch.}
Condition on a common realization of all learner randomness before
epoch $j$. The earlier records agree, so the states and transcripts
before that epoch agree as well. In particular, $\pi_j$ is the
same fixed map in both executions.

Also fix common values of the independent action-selection seeds
within epoch $j$. For the efficient vanilla sampler, these determine
both $K_t$ and the sample from its softmax distribution.
At every round in this epoch other than $i$, the record, policy,
and action seed are the same, so the observed triple is the same.
At round $i$, changing the record may change the context, selected
action, and observed loss, but affects only that row. Thus
$\mathcal S_j$ and $\mathcal S'_j$ differ in at most one row.
The distribution of the conditioning variables is identical in the
two executions and independent of the changed record.

\paragraph{The remaining transcript is postprocessing of one fit.}
If $j=M$, no oracle is fitted to the affected batch. Its records
therefore have no effect on the released transcript.

Otherwise, the oracle call producing $\widetilde f_{j+1}$ is
$(\varepsilon_{\rm priv},\delta_{\rm priv})$-DP on the two
neighboring batches. The map $\pi_{j+1}$ is a deterministic
function of that output and public parameters. After this call,
the affected batch is discarded.

For the fixed earlier state and fixed future record stream, there
is a single randomized continuation procedure that maps
$\widetilde f_{j+1}$ to all subsequent releases: it constructs
the policies, interacts with the future records, and runs the later
oracles with fresh randomness. This procedure is the same in both
executions because all records after epoch $j$ agree. Later actions
and collected rows need not agree; they are generated by this common
procedure from its potentially different private input.

By postprocessing, for any event $E$ involving the complete released
transcript and any fixed value $v$ of the common conditioning
variables,
\begin{equation*}
 \Pp(\mathcal M(z)\in E\mid v)
 \le e^{\varepsilon_{\rm priv}}
       \Pp(\mathcal M(z')\in E\mid v)
       +\delta_{\rm priv}.
\end{equation*}
The earlier outputs are already fixed as part of $v$ and can be
included in $E$. Integrating over the common distribution of $v$
gives \eqref{eq:dp}, with the same additive
$\delta_{\rm priv}$. No oracle-accuracy event or stochastic
assumption on the record stream was used.
\end{proof}

\begin{proof}[Proof of \cref{thm:private-batched}]
Each epoch uses a fixed policy, and its batch enters at most one
private fit. Thus \cref{lem:private-dp} gives privacy of the transcript.
For regret, \cref{ass:private-oracle} supplies
\eqref{eq:batch-epoch-accuracy} with probability at least
$1-\delta/2$. The proof of \cref{thm:batched-vanilla}, using the
private oracle's error bound in \eqref{eq:vanilla-batch-rate},
first bounds $\sum_{t=1}^T\mathcal R(\pi_t)$ on this event.
The concentration step, with failure probability $\delta/2$,
then gives the stated bound on $\Reg$ with probability at least
$1-\delta$.
The schedule makes exactly $M-1$ private oracle calls, including
zero when $T=1$.
\end{proof}

The absence of a composition factor comes from using each batch
in only one private fit and discarding it afterward. The doubling
schedule controls the oracle count and regret; privacy itself does
not require geometrically increasing batch lengths. In contrast,
refitting after every round on all observations would use the same
record repeatedly and would require a different privacy analysis.

\subsection{Finite-class private regression}
\label{app:batch-oracle-parameters}

Let $\F$ be finite and realizable, and fix
$\delta\in(0,1/2]$ and $\varepsilon_{\rm priv}>0$.
Use the exponential mechanism in \eqref{eq:em} on each completed
training batch. By \cref{lem:private-exp-oracle}, every fit is pure
$\varepsilon_{\rm priv}$-DP under row replacement.
Applying its regression guarantee at failure probability
$\delta/(2M)$ gives the choice
\begin{equation}
 \Err_{\rm orc}^{\rm batch}(T,\delta)
 =\left(10+\frac4{\varepsilon_{\rm priv}}\right)
   \log\frac{4|\F|M}{\delta}.
 \label{eq:private-batch-error}
\end{equation}
This is a known bound of at least one; its dependence on
$\varepsilon_{\rm priv}$ is implicit in the oracle notation.
By \cref{lem:batch-accuracy-event}, all fitted predictors satisfy
\eqref{eq:batch-epoch-accuracy} with probability at least
$1-\delta/2$.

For batched vanilla PO, use this bound in the explicit learning rates
\eqref{eq:vanilla-batch-rate}. The regret analysis in
\cref{app:batched} then gives, with probability at least $1-\delta$,
\begin{equation*}
 \Reg=\widetilde O\!\left(
   \sqrt{AT\log(|\F|/\delta)}
   \bigl(1+\varepsilon_{\rm priv}^{-1/2}\bigr)
 \right).
\end{equation*}
Independently of this accuracy event, \cref{lem:private-dp} gives
pure $\varepsilon_{\rm priv}$-DP for the complete transcript.
This proves \cref{cor:private-exp}.
The same oracle bound also applies to batched bonus PO, with its
parameters specified in \cref{app:bonus-po}.
The nonprivate ERM choice is given in \eqref{eq:batch-erm-error}.

\section{Policy optimization with an exploration bonus}
\label{app:bonus-po}

\subsection{From the original bonus to a batched algorithm}
\label{sec:private-bonus}

The optimistic policy-optimization algorithm of \citet{levy2026}
subtracts an exploration bonus from the predicted loss before
clipping at zero and applying an exponential update. Its bonus
decreases with the sum of the probabilities that earlier policies
assign to an action at the same context. An action with little
past probability receives a larger bonus, which makes it more
attractive to the policy. The original algorithm fits predictors
throughout the interaction.

Our batched adaptation fits one predictor per epoch from the
preceding batch and holds it fixed. It constructs an auxiliary
sequence of bonus-adjusted exponential updates, starting from
uniform, and deploys their average throughout the epoch.
The probabilities entering the bonus are those of this auxiliary
sequence. These updates require no additional observations or
oracle calls.

We also subtract the minimum predicted loss before applying the
bonus and clipping. This ensures that a predicted best action
always has adjusted loss zero, which is used in the proof below.
Centering before clipping can change the update, so it is an explicit
part of the algorithm specification. The theorem below analyzes
this centered, batched construction.

\subsection{The batched bonus policy}

Fix $\delta\in(0,1/2]$ and use the doubling schedule of
\cref{alg:batched-vanilla-po},
with a uniform first-epoch policy and one fit to the preceding
batch. Let the known bound $\Err_{\rm orc}^{\rm batch}(T,\delta)\ge1$
satisfy \eqref{eq:batch-epoch-accuracy} for these policies
with probability at least $1-\delta/2$.
As in \cref{app:batched}, the remaining failure probability
$\delta/2$ is reserved for concentration over contexts.
For each $m\ge2$, set
\begin{equation}
 \begin{aligned}
 \gamma_m&=\min\left\{B_m,
   \sqrt{\frac{AB_{m-1}}{\Err_{\rm orc}^{\rm batch}(T,\delta)}}\right\},
 &\Lambda_m&=\frac{B_m}{\gamma_m},\\
 \eta_m&=\min\left\{1,\sqrt{\frac{2\log A}{B_m}}\right\}.
 \end{aligned}
 \label{eq:bonus-parameters}
\end{equation}
Here $\gamma_m$ controls exploration, $\Lambda_m$ sets the bonus
scale, and $\eta_m$ is the learning rate for the auxiliary updates.
The learning rate here is specific to the bonus algorithm.

For every context $c$, define $d_m(c,a)$ by
\eqref{eq:batch-predicted-gap}, initialize $\pi_{m,1}(c,a)=1/A$, and let
\begin{equation}
 N_{m,s}(c,a)=\sum_{r=1}^s \pi_{m,r}(c,a),
 \qquad N_{m,0}(c,a)=0.
 \label{eq:bonus-count}
\end{equation}
The index $s$ runs over auxiliary policies within an epoch's
construction: $\pi_{m,s}$ is obtained after $s-1$ updates.
The quantity $N_{m,s}(c,a)$ sums the probabilities assigned to
action $a$ by the first $s$ auxiliary policies at the same context
$c$; it does not count observed actions in a batch.
For $s=1,\ldots,B_m$, define
\begin{equation}
 \begin{aligned}
 b_{m,s}(c,a)
 &=\min\left\{1,\frac{2\Lambda_m}{1+N_{m,s-1}(c,a)}\right\},\\
 u_{m,s}(c,a)&=[d_m(c,a)-b_{m,s}(c,a)]_+,\\
 \pi_{m,s+1}(c,a)
 &=\frac{\pi_{m,s}(c,a)\exp\{-\eta_m u_{m,s}(c,a)\}}
 {\sum_{b\in\A}\pi_{m,s}(c,b)\exp\{-\eta_m u_{m,s}(c,b)\}},
 \end{aligned}
 \label{eq:bonus-update}
\end{equation}
where $[x]_+=\max\{x,0\}$. The deployed policy is
\begin{equation}
 \pi_m(c,a)=\frac1{B_m}\sum_{s=1}^{B_m}\pi_{m,s}(c,a)
           =\frac{N_{m,B_m}(c,a)}{B_m}.
 \label{eq:virtual-average}
\end{equation}
When an action has accumulated little probability, its bonus is
large and its adjusted loss $u_{m,s}(c,a)$ can be zero. Its weight
then remains unchanged while weights of other actions can decrease.
As it accumulates probability, the bonus decreases and its predicted
gap has more influence on the update.

\begin{algorithm}[ht]
\caption{Batched Policy Optimization with an Exploration Bonus}
\label{alg:private-BonusPO}
\begin{algorithmic}
\State \textbf{Input:} horizon $T$, action set $\A$, confidence $\delta$,
a regression oracle, and a known bound $\Err_{\rm orc}^{\rm batch}(T,\delta)$
for \eqref{eq:batch-epoch-accuracy}.
\State Set $M\gets\lceil\log_2(T+1)\rceil$ and
$B_m\gets2^{m-1}$ for $m=1,\ldots,M$.
\State Initialize $\pi_1(c,a)\gets1/A$ for every $c\in\C$, $a\in\A$.
\State \textbf{for} $m=1,\ldots,M$ \textbf{do}
\State \AlgIndent \textbf{if} $m\ge2$ \textbf{then}
\State \AlgIndent\AlgIndent Fit
$\widetilde f_m\gets\operatorname{Oracle}(\mathcal S_{m-1})$
and discard $\mathcal S_{m-1}$.
\State \AlgIndent\AlgIndent Set $\gamma_m,\Lambda_m,\eta_m$ by
\eqref{eq:bonus-parameters}.
\State \AlgIndent\AlgIndent Represent $\pi_m$ by $\widetilde f_m$
and the public construction
\eqref{eq:bonus-count}--\eqref{eq:virtual-average}.
\State \AlgIndent\AlgIndent In the private version, release
$(\widetilde f_m,\pi_m)$.
\State \AlgIndent \textbf{end if}
\State \AlgIndent Initialize $\mathcal S_m\gets()$.
\State \AlgIndent \textbf{for}
$t=B_m,\ldots,\min\{2B_m-1,T\}$ \textbf{do}
\State \AlgIndent\AlgIndent Observe $c_t$ and evaluate
$\pi_m(c_t,\cdot)$ by the procedure below.
\State \AlgIndent\AlgIndent Sample $a_t\sim\pi_m(c_t,\cdot)$
with fresh randomness and observe $\ell_t(c_t,a_t)$.
\State \AlgIndent\AlgIndent Append
$(c_t,a_t,\ell_t(c_t,a_t))$ to $\mathcal S_m$.
\State \AlgIndent \textbf{end for}
\State \textbf{end for}
\State \textbf{Policy evaluation for epoch $m$ at context $c$:}
\State \AlgIndent If $m=1$, return the uniform distribution.
\State \AlgIndent Otherwise, evaluate $\widetilde f_m(c,a)$ for all
$a\in\A$ and form $d_m(c,a)$.
\State \AlgIndent Initialize $\pi_{m,1}(c,a)\gets1/A$ and
$N_{m,0}(c,a)\gets0$ for every action.
\State \AlgIndent \textbf{for} $s=1,\ldots,B_m$ \textbf{do}
\State \AlgIndent\AlgIndent Compute $b_{m,s}(c,a)$, $u_{m,s}(c,a)$,
and $\pi_{m,s+1}(c,a)$ by \eqref{eq:bonus-update}.
\State \AlgIndent\AlgIndent Set
$N_{m,s}(c,a)\gets N_{m,s-1}(c,a)+\pi_{m,s}(c,a)$ for each action.
\State \AlgIndent \textbf{end for}
\State \AlgIndent Return $\pi_m(c,a)=N_{m,B_m}(c,a)/B_m$.
\end{algorithmic}
\end{algorithm}

\paragraph{Implementation.}
The predictor and policy map remain fixed throughout an epoch.
The evaluation procedure computes their values only at the queried
context, using $A$ predictor evaluations and $O(AB_m)$ arithmetic.
The sums $N_{m,s}(c,a)$ are initialized anew for each evaluation;
they are not carried over between bandit rounds. Repeating an
evaluation at the same context therefore gives the same distribution.
The update producing $\pi_{m,B_m+1}$ can be omitted, because that
iterate is not included in the average.
A shortened final epoch still uses all $B_M$ terms in
\eqref{eq:virtual-average}, and its batch is never fitted.

\subsection{Regret and privacy guarantees}

\begin{theorem}[Batched bonus PO]
\label{thm:batched-bonus}
Fix $\delta\in(0,1/2]$ and suppose
\eqref{eq:batch-epoch-accuracy} holds with probability at least
$1-\delta/2$, for a known bound
$\Err_{\rm orc}^{\rm batch}(T,\delta)\ge1$ fixed before the interaction.
Then \cref{alg:private-BonusPO}, with parameters
\eqref{eq:bonus-parameters}, makes exactly $M-1$ oracle calls and,
with probability at least $1-\delta$, satisfies
\begin{equation}
 \begin{aligned}
 \Reg\le{}&1+\bigl(32+8\log(T+1)\bigr)
              \bigl(AM+5\sqrt{AT\,\Err_{\rm orc}^{\rm batch}(T,\delta)}\bigr)\\
          &+\sqrt{\frac T2\log\frac2\delta}.
 \end{aligned}
 \label{eq:bonus-total-regret}
\end{equation}
Together with $\Reg\le T$, this gives
$\Reg=\widetilde O(\sqrt{AT\,\Err_{\rm orc}^{\rm batch}(T,\delta)})$.
If the oracle also satisfies \cref{ass:private-oracle}, then the
released transcript is
$(\varepsilon_{\rm priv},\delta_{\rm priv})$-DP for arbitrary fixed
record streams, with no change to the regret guarantee.
\end{theorem}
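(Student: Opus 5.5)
The plan mirrors the proof of \cref{thm:batched-vanilla}: verify that the bonus policy satisfies the hypotheses of \cref{lem:batch-regret-comparison}, bound its predicted regret, and then sum over epochs. Privacy and the oracle count follow from \cref{lem:private-dp}, since each $\pi_m$ is a deterministic function of $\widetilde f_m$ and public parameters \eqref{eq:bonus-parameters}.

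\textbf{Step 1: coverage.} I would show, deterministically at each context, that
\[
\pi_m(c,a)^{-1}\le A+\gamma_m d_m(c,a),
\]
which is condition \eqref{eq:batch-comparison-conditions} with $C_0=1$. The monotonicity of $\gamma_m$ is immediate, and so is the bound $\gamma_m^2\le AB_{m-1}/\Err_{\rm orc}^{\rm batch}(T,\delta)$. The coverage claim is equivalent to $N_{m,B_m}(c,a)\ge B_m/(A+\gamma_m d_m(c,a))$.

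The bonus argument runs as follows.
\begin{itemize}
\item While $N_{m,s-1}(c,a)$ is small, specifically $1+N_{m,s-1}(c,a)\le 2\Lambda_m/d_m(c,a)$, the bonus exceeds $d_m(c,a)$. Then $u_{m,s}(c,a)=0$, so the weight of $a$ is never decreased.
\item Meanwhile a predicted-best action $\hat a$ has $u=0$ always, because of the centering. So the ratio $\pi_{m,s}(c,a)/\pi_{m,s}(c,\hat a)$ is nondecreasing in this phase.
\item Since $\pi_{m,s}(c,\hat a)\le1$ and the ratio starts at one, we need the normalization. The cleanest route: the weights $w_{s}(c,a)$, unnormalized with $w_1=1$, are nonincreasing for every action and equal to one for $a$ in this phase. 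Hence $\pi_{m,s}(c,a)\ge 1/A$ in this phase.
\item Therefore, if $a$ never leaves the phase, $N_{m,B_m}\ge B_m/A$. Otherwise $a$ leaves it only once $N\ge 2\Lambda_m/d_m-1$, and $N$ is monotone afterwards. Using $\Lambda_m=B_m/\gamma_m$, this gives
\[
N\gtrsim \min\{B_m/A,\;B_m/(\gamma_m d_m)\},
\]
up to the $-1$. The $-1$ is handled by $\pi\ge1/A$ and a constant adjustment, possibly at the price of $C_0=2$, which is harmless.
\end{itemize}

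\textbf{Step 2: predicted regret.} This is the main obstacle. I would run the standard exponential-weights analysis on the adjusted losses $u_{m,s}\in[0,1]$ with comparator $\hat a$, whose adjusted loss is zero. This gives
\[
\sum_s\langle\pi_{m,s},u_{m,s}\rangle\le \frac{\log A}{\eta_m}+\frac{\eta_m B_m}{2}\lesssim\sqrt{B_m\log A}.
\]
Since $d\le u+b$, it remains to bound the bonus term $\sum_s\sum_a\pi_{m,s}(c,a)b_{m,s}(c,a)$. For each $a$, the sum $\sum_s \pi_{m,s}/(1+N_{m,s-1})$ is at most $1+\log(1+B_m)$, by the standard $\sum x_s/(1+\sum_{r<s}x_r)$ bound with $x_s\le1$. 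So the bonus term is at most $2\Lambda_m A(1+\log(T+1))$. Dividing by $B_m$ gives
\[
\widehat{\mathcal R}_m(\pi_m)\lesssim \sqrt{\log A/B_m}+\frac{A\log(T+1)}{\gamma_m}.
\]

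\textbf{Step 3: summation.} Applying \eqref{eq:batch-true-to-predicted} gives
\[
\mathcal R(\pi_m)\lesssim \frac{A\log T}{\gamma_m}+\sqrt{\log A/B_m}.
\]
Multiplying by $n_m\le B_m$, I would use
\[
\frac{B_m}{\gamma_m}\le 1+\sqrt{2B_m\Err_{\rm orc}^{\rm batch}(T,\delta)/A},
\]
so that $A B_m/\gamma_m\le A+\sqrt{2AB_m\Err_{\rm orc}^{\rm batch}(T,\delta)}$. Summing over epochs with \eqref{eq:batch-length-sum} yields $O(\log T)\,(AM+\sqrt{AT\,\Err_{\rm orc}^{\rm batch}(T,\delta)})$, which gives \eqref{eq:bonus-total-regret} after tracking constants. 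Finally, \cref{cor:mean-regret} at level $\delta/2$ and a union bound transfer this to $\Reg$. The simplification using $\Reg\le T$ follows exactly as for the vanilla theorem.
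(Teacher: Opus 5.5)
Your proposal is correct and follows the paper's proof essentially step for step. The shared steps are: the frozen-weight phase giving $\pi_m(c,a)^{-1}\le A+\gamma_m d_m(c,a)$; exponential weights on the adjusted losses against a zero-loss predicted-best action, plus a telescoping bound on the probability-weighted bonuses; then \cref{lem:batch-regret-comparison}, summation via \eqref{eq:batch-length-sum}, \cref{cor:mean-regret}, and \cref{lem:private-dp} for privacy.

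One small correction concerns the $-1$ in your exit threshold. It costs nothing, since $\Lambda_m=B_m/\gamma_m\ge1\ge d_m(c,a)$ gives $2\Lambda_m/d_m(c,a)-1\ge\Lambda_m/d_m(c,a)$. You should therefore keep $C_0=1$ rather than concede $C_0=2$. The prescribed $\gamma_m$ saturates the condition $\gamma_m^2\le AB_{m-1}/(C_0\Err_{\rm orc}^{\rm batch}(T,\delta))$, so with $C_0=2$ the comparison lemma would not apply as stated, and the explicit constants in \eqref{eq:bonus-total-regret} would not follow directly.
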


\paragraph{Proof overview.}
The bonus keeps an action's unnormalized weight at its initial value
until that action has accumulated enough probability. Either it
receives this much probability or its probability remains at least
$1/A$ throughout the auxiliary sequence. This gives an explicit
lower bound on the averaged policy.
To bound predicted regret, we add the exponential-weights bound
for the adjusted losses to the total bonuses. Since each bonus
decreases as accumulated probability grows, its probability-weighted
sum is bounded by the bonus scale times a logarithmic factor.
The probability bound lets us apply
\cref{lem:batch-regret-comparison}; the predicted-regret bound then
controls the population regret in each epoch. Summing over epochs
and applying concentration gives the bound on $\Reg$.
Privacy follows from the same argument as for batched vanilla PO:
each batch enters one private fit, and subsequent releases depend
on that batch only through the private predictor.

\begin{lemma}[Action probabilities and predicted regret of batched bonus PO]
\label{lem:bonus-properties}
For every epoch $m\ge2$, context $c$, and action $a$,
\begin{equation}
 \pi_m(c,a)
 \ge\min\left\{\frac1A,\frac1{\gamma_m d_m(c,a)}\right\},
 \qquad
 \pi_m(c,a)^{-1}\le A+\gamma_m d_m(c,a),
 \label{eq:bonus-coverage}
\end{equation}
where the second term in the minimum is $+\infty$ when $d_m(c,a)=0$.
Moreover,
\begin{equation}
 \widehat{\mathcal R}_m(\pi_m)
 \le\bigl(1+4\log(B_m+1)\bigr)\frac A{\gamma_m}.
 \label{eq:bonus-regret}
\end{equation}
\end{lemma}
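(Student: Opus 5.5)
The plan is to fix an epoch $m\ge2$ and a context $c$, and to work only with the auxiliary sequence $\pi_{m,s}(c,\cdot)$. Since all adjusted losses are nonnegative, unrolling \eqref{eq:bonus-update} gives
\[
\pi_{m,s}(c,a)=\frac{w_s(a)}{\sum_b w_s(b)},
\qquad
w_s(a)=\exp\Bigl\{-\eta_m\textstyle\sum_{r<s}u_{m,r}(c,a)\Bigr\}\in(0,1].
\]
The two claims are then handled separately: the coverage bound \eqref{eq:bonus-coverage} comes from a case split on whether the bonus ever stops covering $d_m(c,a)$, and the regret bound \eqref{eq:bonus-regret} comes from splitting $d_m\le u_{m,s}+b_{m,s}$ into an exponential-weights term and a bonus term.

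\textbf{Coverage.} Fix an action $a$.
\begin{itemize}
\item Since $d_m(c,a)\le1$, we have $u_{m,s}(c,a)=0$ exactly when $d_m(c,a)\le b_{m,s}(c,a)$. While this holds, $w_s(a)=1\ge w_s(b)$ for every $b$, so $\pi_{m,s}(c,a)\ge1/A$.
\item \emph{Case 1: $u_{m,s}(c,a)=0$ for all $s<B_m$.} Then every term in the average \eqref{eq:virtual-average} is at least $1/A$, so $\pi_m(c,a)\ge1/A$.
\item \emph{Case 2: otherwise.} Let $s$ be the first index with $u_{m,s}(c,a)>0$. Then $b_{m,s}(c,a)<1$ and $d_m(c,a)>2\Lambda_m/(1+N_{m,s-1}(c,a))$, which gives $N_{m,B_m}(c,a)\ge N_{m,s-1}(c,a)>2\Lambda_m/d_m(c,a)-1$.
\item Because $\gamma_m\le B_m$, we have $\Lambda_m\ge1$, and $d_m\le1$, so $\Lambda_m/d_m\ge1$. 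Hence $N_{m,B_m}(c,a)\ge\Lambda_m/d_m(c,a)$, and dividing by $B_m$ gives $\pi_m(c,a)\ge1/(\gamma_m d_m(c,a))$.
\item The reciprocal form follows from $1/\min\{x,y\}=\max\{1/x,1/y\}\le1/x+1/y$.
\end{itemize}

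\textbf{Predicted regret.} Use $d_m\le u_{m,s}+b_{m,s}$ and sum over $s\le B_m$. There are two pieces.
\begin{itemize}
\item \emph{Exponential-weights piece.} Use the potential argument from \cref{lem:virtual-properties}. The log-ratio of consecutive normalizers is at most $-\eta_m\langle\pi_{m,s},u_{m,s}\rangle+\tfrac{\eta_m^2}{2}\langle\pi_{m,s},u_{m,s}^2\rangle$. The final normalizer is at least $1$ because a predicted-best action has $d_m=0$, hence $u=0$, hence weight $1$; this is exactly where centering before clipping is used. With $u\in[0,1]$ and $\eta_m\le1$, this gives
\[
\sum_{s}\langle\pi_{m,s},u_{m,s}\rangle\le\frac{2\log A}{\eta_m}=\max\{2\log A,\sqrt{2B_m\log A}\}.
\]
After dividing by $B_m$, it remains to check this is at most $A/\gamma_m$. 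This follows from $\gamma_m\le B_m$ together with $2\log A\le A$, and from $\gamma_m\le\sqrt{AB_m/2}$ (since $\mathrm{Err}\ge1$ and $B_{m-1}=B_m/2$) together with $\log A\le A$.
\item \emph{Bonus piece.} Bound $\sum_s\pi_{m,s}(c,a)\cdot2\Lambda_m/(1+N_{m,s-1}(c,a))$. Set $y_s=\pi_{m,s}/(1+N_{m,s-1})\in[0,1]$. Then $y_s\le2\log(1+y_s)=2\log\bigl((1+N_{m,s})/(1+N_{m,s-1})\bigr)$, and telescoping gives at most $4\Lambda_m\log(1+N_{m,B_m}(c,a))\le4\Lambda_m\log(1+B_m)$.
\item Summing the bonus piece over the $A$ actions and dividing by $B_m$ gives $4A\log(B_m+1)/\gamma_m$.
\end{itemize}
Adding the two pieces and averaging over $c\sim\D$ yields \eqref{eq:bonus-regret}.

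I expect the main obstacle to be the first-violation argument in the coverage bound. It has to check that the unnormalized weight of $a$ stays exactly $1$ up to the first positive adjusted loss, and that $\Lambda_m\ge1$ is what absorbs the $-1$ in $2\Lambda_m/d_m-1\ge\Lambda_m/d_m$.
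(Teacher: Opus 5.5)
Your proposal is correct and follows essentially the same route as the paper. The coverage bound uses the fact that the bonus keeps the action's unnormalized weight at one until its accumulated probability exceeds $\Lambda_m/d_m(c,a)$; your first-violation argument is just the contrapositive of the paper's threshold argument. The predicted-regret bound comes from $d_m\le u_{m,s}+b_{m,s}$, the exponential-weights potential bound with a predicted-best action keeping weight one, and the telescoping logarithm for the bonuses. The only minor difference is that you absorb the second-order term via $u_{m,s}^2\le u_{m,s}$ (as in \cref{lem:virtual-properties}) to get $2\log A/\eta_m$ and bound it by $A/\gamma_m$ directly. The paper instead bounds it by $\log A/\eta_m+\eta_mB_m/2$ and then splits into cases on whether $B_m<2\log A$.
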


\begin{proof}
Fix $m$ and $c$. Define unnormalized weights by $w_{m,1}(c,a)=1$ and
\begin{equation*}
 w_{m,s+1}(c,a)
 =w_{m,s}(c,a)\exp\{-\eta_m u_{m,s}(c,a)\}.
\end{equation*}
These weights never increase, since $u_{m,s}(c,a)\ge0$, and
$\pi_{m,s}(c,a)=w_{m,s}(c,a)/\sum_b w_{m,s}(c,b)$.

\paragraph{Lower bounds on action probabilities.}
If $d_m(c,a)=0$, then $u_{m,s}(c,a)=0$ at every step. Hence
$w_{m,s}(c,a)=1$ and $\pi_{m,s}(c,a)\ge1/A$ throughout.

Now suppose $d_m(c,a)>0$. Since $\Lambda_m\ge1\ge d_m(c,a)$,
whenever $N_{m,s-1}(c,a)<\Lambda_m/d_m(c,a)$ we have
\begin{equation*}
 \frac{2\Lambda_m}{1+N_{m,s-1}(c,a)}
 >\frac{2\Lambda_m d_m(c,a)}{\Lambda_m+d_m(c,a)}
 \ge d_m(c,a).
\end{equation*}
Capping this quantity at one still leaves it at least $d_m(c,a)$,
so $u_{m,s}(c,a)=0$. The counts are nondecreasing, so all preceding
adjusted losses of this action were also zero. Its weight is
therefore still one, and $\pi_{m,s}(c,a)\ge1/A$.

Either $N_{m,B_m}(c,a)\ge\Lambda_m/d_m(c,a)$, or this threshold is
never reached and $\pi_{m,s}(c,a)\ge1/A$ at every step. In both cases,
\begin{equation*}
 N_{m,B_m}(c,a)
 \ge\min\left\{\frac{B_m}{A},\frac{\Lambda_m}{d_m(c,a)}\right\}.
\end{equation*}
Divide by $B_m$ and use $\Lambda_m/B_m=1/\gamma_m$ to obtain the
first inequality in \eqref{eq:bonus-coverage}. Taking reciprocals
and using $\max\{A,\gamma_m d_m(c,a)\}\le A+\gamma_m d_m(c,a)$
gives the second.

\paragraph{Regret for the adjusted losses.}
Choose an action $a_0$ with $d_m(c,a_0)=0$. It has adjusted loss zero
and weight one at every step. Thus the total weight is initially
$A$ and is at least one after all $B_m$ updates.
Applying the same log-weight calculation as in the proof of
\cref{lem:virtual-properties}, now to the varying adjusted losses,
gives
\begin{align}
 \sum_{s=1}^{B_m}\sum_{a\in\A}\pi_{m,s}(c,a)u_{m,s}(c,a)
 &\le\frac{\log A}{\eta_m}
   +\frac{\eta_m}{2}\sum_{s=1}^{B_m}\sum_{a\in\A}
                    \pi_{m,s}(c,a)u_{m,s}(c,a)^2\notag\\
 &\le\frac{\log A}{\eta_m}+\frac{\eta_m B_m}{2},
 \label{eq:bonus-hedge}
\end{align}
because every adjusted loss lies in $[0,1]$.

\paragraph{Adding back the bonuses.}
The definition of the adjusted loss gives
$d_m(c,a)\le u_{m,s}(c,a)+b_{m,s}(c,a)$.
For each action, the identity
$N_{m,s}(c,a)=N_{m,s-1}(c,a)+\pi_{m,s}(c,a)$ and the inequality
$x\le2\log(1+x)$ for $0\le x\le1$ yield
\begin{align*}
 \sum_{s=1}^{B_m}\pi_{m,s}(c,a)b_{m,s}(c,a)
 &\le2\Lambda_m\sum_{s=1}^{B_m}
        \frac{\pi_{m,s}(c,a)}{1+N_{m,s-1}(c,a)}\\
 &\le4\Lambda_m\sum_{s=1}^{B_m}
        \log\frac{1+N_{m,s}(c,a)}{1+N_{m,s-1}(c,a)}\\
 &=4\Lambda_m\log\bigl(1+N_{m,B_m}(c,a)\bigr)\\
 &\le4\Lambda_m\log(B_m+1).
\end{align*}
The logarithms telescope because the probability added to the count
at each step is the same probability multiplying the bonus.
Sum over actions, combine with \eqref{eq:bonus-hedge}, divide by
$B_m$, and average over contexts to get
\begin{equation}
 \widehat{\mathcal R}_m(\pi_m)
 \le\frac{\log A}{\eta_m B_m}+\frac{\eta_m}{2}
       +\frac{4A\log(B_m+1)}{\gamma_m}.
 \label{eq:bonus-intermediate}
\end{equation}

If $B_m<2\log A$, then $\gamma_m\le B_m<2\log A\le A$,
so the simpler bound
$\widehat{\mathcal R}_m(\pi_m)\le1\le A/\gamma_m$ already proves
\eqref{eq:bonus-regret}.
Otherwise $\eta_m=\sqrt{2\log A/B_m}$, and the first two terms
in \eqref{eq:bonus-intermediate} sum to
$\sqrt{2\log A/B_m}$. By \eqref{eq:bonus-parameters},
$B_m=2B_{m-1}$, and $\Err_{\rm orc}^{\rm batch}(T,\delta)\ge1$,
\begin{equation*}
 \gamma_m\sqrt{\frac{2\log A}{B_m}}
 \le\sqrt{\frac{A\log A}{\Err_{\rm orc}^{\rm batch}(T,\delta)}}
 \le\sqrt{A\log A}\le A.
\end{equation*}
Thus these two terms are at most $A/\gamma_m$. Substitution into
\eqref{eq:bonus-intermediate} proves \eqref{eq:bonus-regret}.
\end{proof}

\begin{proof}[Proof of \cref{thm:batched-bonus}]
As in \cref{app:algorithms}, the schedule makes exactly $M-1$ fits,
and every batch used in a fit is complete. If $T=1$, the only
policy is uniform and the regret is at most one.

For $T\ge2$, work on the simultaneous accuracy event
\eqref{eq:batch-epoch-accuracy}, which has probability at least
$1-\delta/2$ by hypothesis.
The sequence $(\gamma_m)_{m=2}^M$ in \eqref{eq:bonus-parameters}
is nondecreasing and satisfies
\begin{equation*}
 \gamma_m^2\le\frac{AB_{m-1}}{\Err_{\rm orc}^{\rm batch}(T,\delta)}.
\end{equation*}
Together with \eqref{eq:bonus-coverage}, this verifies
\eqref{eq:batch-comparison-conditions} with $C_0=1$.
Thus \cref{lem:batch-regret-comparison} applies to every comparison
policy, in particular the deployed policy $\pi_m$. Combining it with
\eqref{eq:bonus-regret} gives
\begin{equation*}
 \mathcal R(\pi_m)
 \le\bigl(32+8\log(B_m+1)\bigr)\frac A{\gamma_m}
 \le\bigl(32+8\log(T+1)\bigr)\frac A{\gamma_m}.
\end{equation*}
Let $n_m=\min\{B_m,T-B_m+1\}$ be the number of rounds in
epoch $m$. Using $n_m\le B_m$, the first epoch's population regret
bound of one, and
\begin{equation*}
 \frac{AB_m}{\gamma_m}
 =\max\{A,\sqrt{2A\Err_{\rm orc}^{\rm batch}(T,\delta)B_m}\}
 \le A+\sqrt{2A\Err_{\rm orc}^{\rm batch}(T,\delta)B_m},
\end{equation*}
we obtain
\begin{align*}
 \sum_{m=1}^M n_m\mathcal R(\pi_m)
 &\le1+\bigl(32+8\log(T+1)\bigr)
          \sum_{m=2}^M\frac{AB_m}{\gamma_m}\\
 &\le1+\bigl(32+8\log(T+1)\bigr)
          \bigl(AM+5\sqrt{AT\,\Err_{\rm orc}^{\rm batch}(T,\delta)}\bigr).
\end{align*}
The last step uses \eqref{eq:batch-length-sum} and $3.5\sqrt2<5$.
This bounds the regret averaged over contexts on the accuracy event.

Each deployed policy is determined before its epoch begins, so
\cref{cor:mean-regret}, applied with failure probability $\delta/2$,
gives
\begin{equation*}
 \Reg\le\sum_{m=1}^M n_m\mathcal R(\pi_m)
       +\sqrt{\frac T2\log\frac2\delta}
\end{equation*}
with probability at least $1-\delta/2$.
This concentration statement holds without conditioning on the
oracle's accuracy event. A union bound with that event proves
\eqref{eq:bonus-total-regret} with probability at least $1-\delta$.
Combining this with $\Reg\le T$ gives the stated $\widetilde O$
bound, as in the proof of \cref{thm:batched-vanilla}; the additional
concentration term is absorbed up to logarithmic factors in
$1/\delta$, because
$A\ge2$ and $\Err_{\rm orc}^{\rm batch}(T,\delta)\ge1$.

For privacy, the full policy construction
\eqref{eq:bonus-count}--\eqref{eq:virtual-average} is a deterministic
function of $\widetilde f_m$ and public parameters. In particular,
its auxiliary counts use no raw batch observations. Each batch is
collected under a fixed map and used in only one private fit.
The conditions of \cref{lem:private-dp} therefore hold, proving
privacy of the full transcript for arbitrary fixed streams,
independently of the accuracy event.
\end{proof}

\begin{corollary}[Finite-class batched bonus PO]
\label{cor:bonus-finite}
Let $\F$ be finite and realizable, and fix $\delta\in(0,1/2]$.
With least-squares ERM and the
error bound \eqref{eq:batch-erm-error},
batched bonus PO satisfies
\begin{equation*}
 \Reg=\widetilde O\!\left(\sqrt{AT\log(|\F|/\delta)}\right)
\end{equation*}
with probability at least $1-\delta$.
For any $\varepsilon_{\rm priv}>0$, with the exponential mechanism
in \eqref{eq:em} and $\Err_{\rm orc}^{\rm batch}(T,\delta)$ from
\eqref{eq:private-batch-error}, its transcript is pure
$\varepsilon_{\rm priv}$-DP and, with probability at least $1-\delta$,
\begin{equation*}
 \Reg=\widetilde O\!\left(
   \sqrt{AT\log(|\F|/\delta)}
   \bigl(1+\varepsilon_{\rm priv}^{-1/2}\bigr)
 \right).
\end{equation*}
\end{corollary}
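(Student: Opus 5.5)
This corollary reduces to \cref{thm:batched-bonus} once we supply a valid error bound $\Err_{\rm orc}^{\rm batch}(T,\delta)$ for each oracle. The privacy claim then follows from \cref{lem:private-dp}, exactly as in the vanilla case. The plan runs in parallel to the proofs of the finite-class conclusion of \cref{thm:batched-vanilla} and of \cref{cor:private-exp}. The only change is that the regret bound is now read off from \eqref{eq:bonus-total-regret} instead of \eqref{eq:batched-vanilla-bound}.

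\textbf{Non-private ERM.} For least-squares ERM, I would apply \cref{lem:private-fast-rate} with $\alpha=0$ at failure probability $\delta/(2M)$. This shows that every fixed-policy batch of size $n\in\{B_1,\dots,B_{M-1}\}$ satisfies $n\|\widetilde f-f^\star\|_\pi^2\le 10\log(4|\F|M/\delta)$. This is exactly the hypothesis of \cref{lem:batch-accuracy-event}. That lemma applies to the bonus algorithm as well, since each bonus policy $\pi_{m-1}$ is fixed before its epoch and actions use fresh randomness. Hence \eqref{eq:batch-epoch-accuracy} holds simultaneously with probability at least $1-\delta/2$ under the choice \eqref{eq:batch-erm-error}, which is at least one. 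Substituting into \eqref{eq:bonus-total-regret} gives a leading term $O(\log T\sqrt{AT\log(|\F|M/\delta)})$. Since $M=O(\log T)$, we have $\log(|\F|M/\delta)=O(\log(|\F|/\delta)+\log\log T)$. The $AM$ term and the concentration term $\sqrt{T\log(2/\delta)}$ are absorbed into $\widetilde O(\sqrt{AT\log(|\F|/\delta)})$. For the $AM$ term, I would use the same case split as in the proof of \cref{thm:batched-vanilla}: either $A\le T\,\Err_{\rm orc}^{\rm batch}$, or else the trivial bound $\Reg\le T$ already suffices.

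\textbf{Private version.} Here I would invoke \cref{lem:private-exp-oracle} at failure probability $\delta/(2M)$, which gives the bound \eqref{eq:private-batch-error}. Applying \cref{lem:batch-accuracy-event} again yields the accuracy event with probability at least $1-\delta/2$. The same lemma also shows each fit is pure $\varepsilon_{\rm priv}$-DP, so \cref{ass:private-oracle} holds with $\delta_{\rm priv}=0$. The privacy part of \cref{thm:batched-bonus} then gives pure $\varepsilon_{\rm priv}$-DP of the transcript for arbitrary streams. For regret, I would use $\sqrt{(10+4/\varepsilon_{\rm priv})\log(4|\F|M/\delta)}\le \sqrt{10\log(4|\F|M/\delta)}\,(1+\varepsilon_{\rm priv}^{-1/2})$. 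Inserting this into \eqref{eq:bonus-total-regret} gives the stated rate.

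The only point needing care is the absorption of lower-order terms. Each of them must be dominated by the leading term up to logarithmic factors, uniformly in $\varepsilon_{\rm priv}$. This holds because the error bound only grows as $\varepsilon_{\rm priv}$ decreases and stays at least one. So the argument of \cref{thm:batched-vanilla} transfers verbatim.
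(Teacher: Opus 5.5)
Your proposal is correct and follows the paper's own proof. You verify the accuracy event by applying \cref{lem:private-fast-rate} (with $\alpha=0$) or \cref{lem:private-exp-oracle} at failure probability $\delta/(2M)$ together with \cref{lem:batch-accuracy-event}, substitute the resulting $\Err_{\rm orc}^{\rm batch}(T,\delta)$ into \cref{thm:batched-bonus}, and obtain pure DP from $\delta_{\rm priv}=0$ via \cref{lem:private-dp}. The only step you leave implicit, which the paper states explicitly, is that the bonus parameters $\gamma_m,\Lambda_m$ in \eqref{eq:bonus-parameters} must be set using this same public bound.
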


\begin{proof}
The oracle bounds in \cref{app:regression}, applied across epochs
by \cref{lem:batch-accuracy-event}, verify the accuracy condition
of \cref{thm:batched-bonus} with the stated error bounds.
Use each bound in the bonus parameters \eqref{eq:bonus-parameters}.
Substituting the corresponding
$\Err_{\rm orc}^{\rm batch}(T,\delta)$ proves the regret bounds.
For the exponential mechanism,
$\delta_{\rm priv}=0$, so the transcript guarantee is pure DP.
\end{proof}

\section{Experiments}
\label{app:experiments}

Following the contextual-bandit evaluation protocol of \citet{bietti2021bakeoff}, we evaluate the algorithms on randomly permuted multiclass classification datasets from \textsc{OpenML}. The following sections describe the regression oracles and their private counterparts, the algorithm implementations and hyperparameter tuning, the experimental protocol, and the resulting empirical comparisons. Code for reproducing the experiments is publicly available at \url{https://github.com/alex-rbch/private-contextual-bandits}.

\subsection{Function classes and regression objectives}
\label{app:exp-oracles}

We first describe the feature representation and regression oracles used throughout the experiments. In the quadratic-loss setting, we use linear predictors fitted by least squares, while in the logistic-loss setting, we use logistic predictors fitted by logistic regression. All methods use the same feature construction and regression objectives; they differ only in the observations supplied to the oracle and, for the private variants, how the fit is privatized.

\paragraph{Action-dependent features.}
We represent each context--action pair using a block-structured feature map. Given a context $c\in\mathbb R^{d_0}$ and an action $a\in \A=[A]$, we first project the context onto the unit Euclidean ball, augment it with a bias coordinate, and then place the resulting vector in the block corresponding to action $a$:
\begin{equation*}
    \phi(c,a)
    = e_a\otimes b(\bar c)\in\mathbb R^d,
    \qquad
    \bar c=\frac{c}{\max\{1,\|c\|_2\}},
    \qquad
    b(\bar c)=\frac{(1,\bar c)}{\sqrt{2}},
    \qquad
    d=A(d_0+1),
\end{equation*}
where $e_a\in\mathbb R^A$ is the $a$th standard basis vector and $\otimes$ denotes the Kronecker product.\footnote{The map $b$ incorporates the constant bias coordinate used in the implementation.}
Thus, $\phi(c,a)$ has $b(\bar c)$ in its $a$th block and zeros elsewhere, and in particular $\|\phi(c,a)\|_2\le1$.

\paragraph{Function classes.}
We consider linear and logistic predictors over these features:
\begin{equation*}
    \mathcal F_{\rm lin}
    =\left\{(c,a)\mapsto\phi(c,a)^\top\theta:\theta\in\mathbb R^d\right\},
    \qquad
    \mathcal F_{\rm log}
    =\left\{(c,a)\mapsto\sigma\!\left(\phi(c,a)^\top\theta\right):
    \theta\in\mathbb R^d\right\},
\end{equation*}
where $\sigma(s)=(1+e^{-s})^{-1}$ is the sigmoid function.

\paragraph{Batching and regression objectives.}
All algorithms follow the same doubling schedule, with batch $j$ having size $B_j=2^{j-1}$, up to truncation of the final batch. Let $n_j=B_j$ for the bandit algorithms and $n_j=AB_j$ for the supervised reference. For each completed batch, let
\[
    X_j\in\mathbb R^{n_j\times d},
    \qquad
    y_j\in\mathbb R^{n_j},
\]
denote the corresponding feature matrix and vector of observed $0$--$1$ classification losses, where the loss is $0$ for a correct prediction and $1$ otherwise. For a bandit algorithm, each round contributes the single observation $(\phi(c_t,a_t),\ell_t(a_t))$, whereas the supervised reference contributes $A$ observations.

For the linear class, define the regularized quadratic objective
\begin{equation}
    L_j^{\rm lin}(\theta)
    =
    \frac{1}{2n_j}\|X_j\theta-y_j\|_2^2
    +\frac{\lambda}{2}\|\theta\|_2^2.
    \label{eq:exp-square-objective}
\end{equation}
For the logistic class, define the regularized logistic objective
\begin{equation}
    L_j^{\rm log}(\theta)
    =
    \frac1{n_j}\sum_{i=1}^{n_j}
    \left[
        \log\!\left(1+e^{x_{ji}^\top\theta}\right)
        -y_{ji}x_{ji}^\top\theta
    \right]
    +\frac{\lambda}{2}\|\theta\|_2^2,
    \label{eq:exp-log-objective}
\end{equation}
where $x_{ji}^\top$ denotes row $i$ of $X_j$.
Both objectives are averaged over the $n_j$ observations available to the oracle, so the same definitions apply directly to the supervised reference.

\subsection{Private regression mechanisms}
\label{app:private-oracle}

We use two standard mechanisms to privatize the regression oracles. For the quadratic objective, we perturb the sufficient statistics to obtain an $(\varepsilon,\delta)$-DP predictor, while for the logistic objective, we use objective perturbation to obtain a pure $\varepsilon$-DP predictor. The corresponding non-private oracles use the same regression procedures with all perturbations set to zero.

Privacy is defined at the level of one original context--label record. A bandit record contributes one fitting row, whereas a supervised record contributes $A$ rows. Since the batches are disjoint, releases across batches compose in parallel, and all subsequent policy computations are post-processing of the private oracle outputs.

\paragraph{Private quadratic oracle.}
For batch $j$, let
\begin{equation*}
    G_j=X_j^\top X_j,
    \qquad
    q_j=X_j^\top y_j.
\end{equation*}
Their record-level sensitivities in Frobenius and Euclidean norm are
\begin{equation*}
    (\Delta_G,\Delta_q)
    =
    \begin{cases}
        (\sqrt{2},\,\sqrt{2}), & \text{bandit feedback},\\
        (\sqrt{2A},\,\sqrt{2(A-1)}), & \text{supervised feedback},
    \end{cases}
\end{equation*}
where the supervised response bound uses the single-label $0$--$1$ loss structure.

To calibrate the Gaussian noise, define
\begin{equation*}
    \rho(\varepsilon,\delta)
    =
    \left(
        \sqrt{\log(1/\delta)+\varepsilon}
        -\sqrt{\log(1/\delta)}
    \right)^2.
\end{equation*}
We split this zCDP budget equally between the two sufficient statistics and release
\begin{equation}
    \widetilde G_j=G_j+E_j,
    \qquad
    \widetilde q_j=q_j+Z_j,
    \qquad
    \widehat\theta_{j+1}^{\rm lin}
    =
    (\widetilde G_j+n_j\lambda I)^{-1}\widetilde q_j,
    \label{eq:exp-private-ridge}
\end{equation}
where
\begin{equation*}
    Z_j\sim
    \mathcal N\!\left(0,\tfrac{\Delta_q^2}{\rho(\varepsilon,\delta)}I_d\right),
    \qquad
    E_j=\tfrac{W_j+W_j^\top}{2},
\end{equation*}
and the entries of $W_j$ are independent
$\mathcal N(0,\Delta_G^2/\rho(\varepsilon,\delta))$ variables. In the Frobenius geometry of symmetric matrices, this assigns half of the total zCDP budget to each statistic. Their joint release is therefore $(\varepsilon,\delta)$-DP \citep{bun2016concentrated}, and the resulting predictor is private by post-processing.

\paragraph{Private logistic oracle.}
Let $\varepsilon_P$ denote the privacy budget allocated to the predictor and set
\begin{equation*}
    h=
    \begin{cases}
        1, & \text{bandit feedback},\\
        A, & \text{supervised feedback},
    \end{cases}
    \qquad
    \varepsilon_{\rm fit}=\frac{\varepsilon_P}{h}.
\end{equation*}
The logistic loss has gradient norm at most one and scalar second derivative at most $c_0=1/4$. Following \citet{chaudhuri2011private}, define
\begin{equation*}
    u_j
    =
    \varepsilon_{\rm fit}
    -2\log\!\left(1+\tfrac{c_0}{n_j\lambda}\right),
\end{equation*}
and set
\begin{equation*}
    (\kappa_j,\varepsilon'_j)
    =
    \begin{cases}
        (0,u_j), & u_j>0,\\[1mm]
        \left(
        \displaystyle
        \frac{c_0}{n_j(e^{\varepsilon_{\rm fit}/4}-1)}-\lambda,
        \displaystyle\frac{\varepsilon_{\rm fit}}{2}
        \right), & u_j\le0.
    \end{cases}
\end{equation*}
We draw $b_j\in\mathbb R^d$ with density
\begin{equation*}
    p(b_j)\propto
    \exp\!\left(-\frac{\varepsilon'_j}{2}\|b_j\|_2\right)
\end{equation*}
and return the minimizer
\begin{equation}
    \widehat\theta_{j+1}^{\rm log}
    =
    \arg\min_{\theta\in\mathbb R^d}
    \left\{
        L_j^{\rm log}(\theta)
        +\frac{\kappa_j}{2}\|\theta\|_2^2
        +\frac{b_j^\top\theta}{n_j}
    \right\}.
    \label{eq:exp-objective-perturbation}
\end{equation}
The resulting estimator is $\varepsilon_{\rm fit}$-DP under replacement of one fitting row; group privacy therefore gives pure $\varepsilon_P$-DP at the original-record level. We solve the perturbed objective numerically using L-BFGS, a limited-memory quasi-Newton method \citep{liu1989limited}.\footnote{The stated privacy guarantee corresponds to the exact minimizer; L-BFGS approximates this solution.}

\paragraph{Private confidence radii.}
Our implementations of several algorithms rely on confidence radii constructed from empirical Gram matrices. Given privatized Gram matrices $(\widetilde G_i)_{i=1}^j$, we use the radius
\begin{equation*}
    r_j(c,a)
    =
    \sqrt{
        \max\left\{
            \phi(c,a)^\top
            (\textstyle\sum_{i=1}^j\widetilde G_i+I)^{-1}
            \phi(c,a),
            0
        \right\}
    }.
\end{equation*}
For the quadratic oracle, privatized $\widetilde G_j$ are already released as part of \eqref{eq:exp-private-ridge} and can therefore be reused without any additional privacy cost. For the logistic oracle, algorithms requiring confidence radii allocate $\varepsilon_P=\varepsilon/2$ to the predictor and $(\varepsilon/2,\delta)$ to a separate Gaussian release of $G_j$. Specifically, we release $\widetilde G_j=G_j+E_j$ with symmetric Gaussian noise of Frobenius-coordinate standard deviation
$\sigma_G=\Delta_G/\sqrt{2\rho(\varepsilon/2,\delta)}$.
Here $\Delta_G=\sqrt2$ for the bandit algorithms considered. The predictor and Gram matrix are therefore jointly $(\varepsilon,\delta)$-DP. Logistic methods that do not require confidence radii instead allocate the full budget $\varepsilon_P=\varepsilon$ to the predictor, yielding pure $\varepsilon$-DP.

\subsection{Algorithm implementations and hyperparameter tuning}
\label{app:algorithm-implementations}

Here we describe the implementation choices and algorithm-specific hyperparameters used in the experiments. All algorithms share the regression regularization parameter $\lambda$; the remaining hyperparameters are specific to the policy. At each policy update, the algorithms use the released loss predictions $\widehat f_a=\widehat f(c,a)$ and, when required, the confidence radius $r_a=r_j(c,a)$ defined in the previous section. All subsequent policy computations are therefore post-processing.

\paragraph{New algorithms.}
\algname{VanillaPO} (\cref{alg:batched-vanilla-po}) uses only the predicted losses $\widehat f_a$. We parameterize its learning rate as $\eta_m = \eta_0 \min\{1/A, 1/\sqrt{AB_m}\}$, where $\eta_0$ is the tunable hyperparameter.

A direct implementation of \algname{BonusPO} (\cref{alg:private-BonusPO}) would require $O(T^2)$ computation. We therefore use a historical-replay modification with $O(T\log T)$ computation, defined as follows. For a context $c$ in batch $m$, let $f_{j,a}=\widehat f_{j+1}(c,a)$ denote the prediction released after batch $j$. Initialize $q_0(a)=1/A$ and $D_0(a)=0$. At replay step $j$, let $T_j=\sum_{i=1}^j B_i$ and define
\begin{equation*}
    b_{j,a}
    =
    \min\left\{
        1,\,
        \tfrac{\gamma\sqrt{T_j/A}}{1+D_{j-1}(a)}
    \right\},
    \qquad
    \widetilde f_{j,a}
    =
    \max\left\{
        0,\,
        f_{j,a}-b_{j,a}
    \right\}.
\end{equation*}
The policy and cumulative action mass are then updated as
\begin{equation*}
    q_j(a)
    \propto
    q_{j-1}(a)
    \exp\!\left(-\eta B_j\widetilde f_{j,a}\right),
    \qquad
    D_j(a)
    =
    D_{j-1}(a)+B_jq_{j-1}(a).
\end{equation*}
The policy used in batch $m$ is $q_{m-1}$. The hyperparameters are the learning rate $\eta$ and bonus scale $\gamma$.

\paragraph{Prediction-only baselines.}
\algname{SquareCB} and \algname{FastCB} use only the predicted losses $\widehat f_a$. For both algorithms, we use a batch-wise schedule
$\gamma_m=\gamma_0(\sum_{j<m} B_j)^\rho$, where $\gamma_0$ controls the overall exploration scale and $\rho$ controls how quickly the policy becomes more concentrated as additional data are collected. Larger values of $\gamma_m$ place less probability on actions with larger predicted loss gaps. We evaluate \algname{SquareCB} with both quadratic and logistic oracles. \algname{FastCB}, whose allocation rule is designed for log-loss regression, is evaluated only with the logistic oracle.

\paragraph{Exploration-radius baselines.}
Confidence-based contextual bandit algorithms use action-dependent uncertainty estimates to guide exploration. In our implementations of \algname{LinUCB}, \algname{RegCB}, and \algname{AdaCB}, we use the common elliptical radius $r_a$ as a proxy for this uncertainty. \algname{LinUCB} selects actions by minimizing the optimistic loss score $\widehat f_a-\beta r_a$, where $\beta$ scales the exploration bonus; we use this rule with both quadratic and logistic oracles. \algname{RegCB} forms raw scores $g_a=\phi(c,a)^\top\widehat\theta$ and retains the survivor set $\mathcal S=\{a:g_a-\beta r_a\le\min_b(g_b+\beta r_b)\}$, sampling uniformly over $\mathcal S$. Here $\beta$ scales the confidence widths, so, for fixed predictions and radii, larger values can retain more actions. For the logistic oracle, these comparisons are still performed in raw-score space; monotonicity of the sigmoid preserves the corresponding ordering after applying the link function. \algname{AdaCB} uses the same survivor set, then applies the \algname{SquareCB} allocation to the predicted losses $\widehat f_a$ restricted to $\mathcal S$, with zero probability outside the set. Its parameters $\gamma_0$ and $\rho$ have the same roles as for \algname{SquareCB}, while $\beta$ controls the elimination step.

\paragraph{Supervised reference.}
\algname{Supervised} serves as a full-feedback reference, observing the losses of all $A$ actions at each context and fitting the same regression oracles as above. It selects uniformly among actions minimizing the predicted loss and has no additional policy hyperparameters.

\subsection{Experimental protocol}
\label{app:exp-design}

We describe the datasets, privacy settings, evaluation metric, and model-selection procedure used throughout the experiments.

\paragraph{Datasets.}
We evaluate the algorithms on 100 unique \textsc{OpenML} classification datasets,\footnote{\textsc{OpenML} (\url{https://www.openml.org/}) contains multiple versions or copies of some underlying datasets. We screened the collection to avoid repeated datasets and ensure diversity across the benchmark.} spanning a broad range of problem sizes and dimensions: the datasets contain between 1{,}000 and 89{,}640 examples and between 2 and 10 classes. We interpret each example as one contextual-bandit round and each class as an action. For each run, we use the full dataset, shuffled without replacement.

\paragraph{Evaluation settings.}
We evaluate each method in the non-private setting and in the $(\varepsilon,\delta)$-DP setting with $\varepsilon\in\{8,4,2,1\}$ and $\delta=10^{-5}$. All methods follow the same doubling batch schedule and regression-objective normalization described above. We run 10 random seeds for every configuration. All algorithms except \algname{FastCB} are evaluated with both quadratic and logistic oracles, while \algname{FastCB} uses only the logistic oracle. Across 100 datasets, five privacy settings, and 10 seeds, the full hyperparameter sweep comprises $100\times5\times10\times800=4{,}000{,}000$ runs.

\paragraph{Evaluation metric.}
We report progressive-validation loss
$L_{\rm PV}(t)=t^{-1}\sum_{s=1}^t\ell_s(a_s)$. For algorithms evaluated on the same sequence of examples, differences in final progressive-validation loss equal differences in realized regret relative to a common comparator, divided by $T$.

\paragraph{Model selection.}
We tune the common regression parameter $\lambda$ and the algorithm-specific hyperparameters over the grids in \cref{tab:exp-hyperparameters}. Here, $\ell_{\rm sq}$ and $\ell_{\rm log}$ denote the quadratic and logistic regression oracle losses, respectively. For algorithms evaluated with both oracles, the oracle type is included in model selection. For each algorithm, dataset, and privacy level, we report the configuration with the lowest mean final progressive-validation loss across seeds.

\begin{table*}[t]
\centering
\setlength{\tabcolsep}{6pt}
\renewcommand{\arraystretch}{1.15}
\resizebox{\textwidth}{!}{\begin{tabular}{@{}llll@{}}
\toprule
\textbf{Algorithm}
& \textbf{Policy hyperparameters}
& $\boldsymbol{\lambda}$
& \textbf{Oracle loss} \\
\midrule

\algname{VanillaPO}
& $\eta_0\in 2^{\{1,3,5,7,9\}}$
& $10^{\{-6,-4,-2,-1,0\}}$
& $\ell_{\rm sq},\ell_{\rm log}$ \\

\algname{BonusPO}
& $\eta\in10^{\{-3,-2,-1,0\}},\;
   \gamma\in10^{\{-3,-2,-1\}}$
& $10^{\{-6,-4,-2,-1,0\}}$
& $\ell_{\rm sq},\ell_{\rm log}$ \\

\algname{SquareCB}
& $\gamma_0\in2^{\{0,2,4,6,8\}},\;
   \rho\in\{0.25,0.5\}$
& $10^{\{-6,-4,-2,-1,0\}}$
& $\ell_{\rm sq},\ell_{\rm log}$ \\

\algname{FastCB}
& $\gamma_0\in10^{\{-1,0,1,2,3\}},\;
   \rho\in\{0.25,0.5\}$
& $10^{\{-6,-4,-2,-1,0\}}$
& $\ell_{\rm log}$ \\

\algname{LinUCB}
& $\beta\in2^{\{-12,-8,-7,-5,-2,0,1,2,3\}}$
& $10^{\{-6,-4,-2,-1,0\}}$
& $\ell_{\rm sq},\ell_{\rm log}$ \\

\algname{RegCB}
& $\beta\in10^{\{-5,-3,-2,-1,0,1\}}$
& $10^{\{-6,-4,-2,-1,0\}}$
& $\ell_{\rm sq},\ell_{\rm log}$ \\

\algname{AdaCB}
& $\gamma_0\in10^{\{-1,0,1,2\}},\;
   \rho\in\{0.25,0.5\},\;
   \beta\in10^{\{-2,-1,0,1\}}$
& $10^{\{-6,-4,-2,-1,0\}}$
& $\ell_{\rm sq},\ell_{\rm log}$ \\

\algname{Supervised}
& --
& $10^{\{-6,-4,-2,-1,0\}}$
& $\ell_{\rm sq},\ell_{\rm log}$ \\

\bottomrule
\end{tabular}}
\caption{Hyperparameter grids used in the experiments.}
\label{tab:exp-hyperparameters}
\end{table*}

\subsection{Results and discussion}
\label{app:exp-results}

We report results at two levels. We first aggregate the final progressive-validation losses over all 100 \textsc{OpenML} datasets, after selecting the best hyperparameter configuration separately for each algorithm, dataset, and privacy regime. We then examine three representative datasets in more detail to show how the learning dynamics change with privacy.

\subsubsection{Aggregate results across datasets}
\label{app:agg-results}

\paragraph{Pairwise win rates.}
Figure~\ref{fig:heatmaps} summarizes pairwise win rates across the 100 datasets. For each ordered pair of algorithms, the corresponding entry gives the fraction of datasets on which the first algorithm attains lower final progressive-validation loss than the second. The same pattern appears consistently across private regimes and agrees with the trends seen in the main text. In particular, \algname{BonusPO} outperforms every competing bandit method on roughly 70\%--95\% of datasets, making it the clear empirical winner in this comparison. The next tier consists of \algname{VanillaPO}, \algname{FastCB}, and \algname{SquareCB}, whose performance is broadly comparable. The final tier consists of \algname{LinUCB}, \algname{RegCB}, and \algname{AdaCB}, with \algname{AdaCB} typically performing slightly better than the other two. In the non-private setting, \algname{BonusPO} remains dominant, while the other bandit algorithms perform comparably, with \algname{AdaCB} holding a slight edge.

This separation under privacy is consistent with the structure of the algorithms. The stronger performance of \algname{BonusPO} indicates that the bonus remains useful under private batched regression, as it incorporates uncertainty information that is absent from the raw predicted losses alone. By contrast, \algname{LinUCB}, \algname{RegCB}, and \algname{AdaCB} rely directly on a privatized exploration radius, which appears to make them more sensitive to the noise introduced by the privacy mechanism.

\begin{figure}[t]
    \centering
    \includegraphics[width=0.77\linewidth]{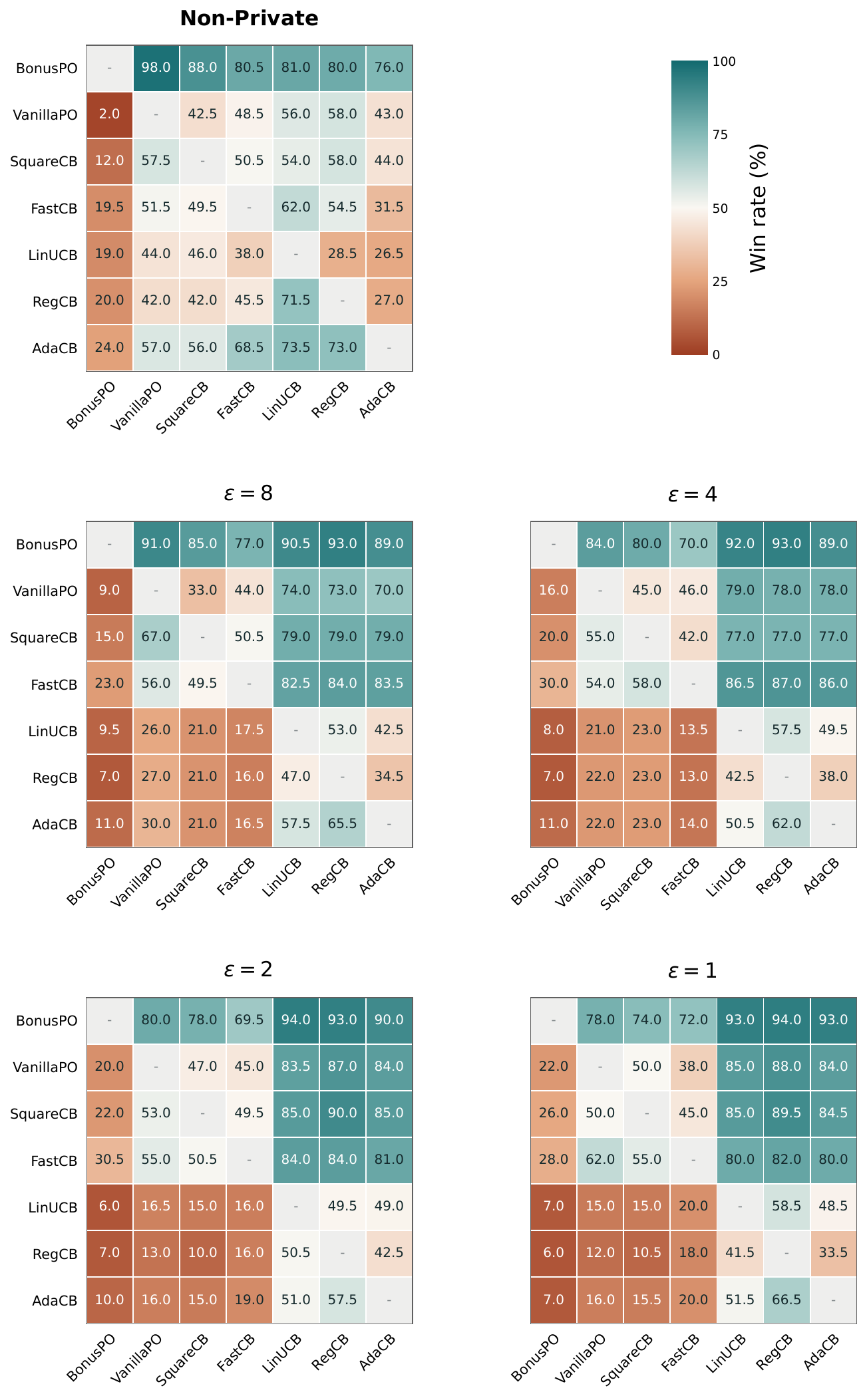}
    \caption{Pairwise win rates based on final progressive-validation loss over 100 \textsc{OpenML} datasets. Each entry reports the percentage of datasets on which the row algorithm achieves lower mean final progressive-validation loss than the column algorithm, with ties receiving half credit. We show the non-private setting and each private regime.}
    \label{fig:heatmaps}
\end{figure}

\paragraph{Utility--privacy trade-off.}
Figure~\ref{fig:loss-increase} shows the increase in final progressive-validation loss relative to $(8, 10^{-5})$-DP baseline as privacy becomes stronger. This gives a direct empirical view of the privacy--utility trade-off. As expected, stronger privacy leads to larger loss for all methods. The supervised baseline degrades the least. Among the bandit algorithms, the effect of privatization is much more similar.

\begin{figure}[h]
    \centering
    \includegraphics[width=0.9\linewidth]{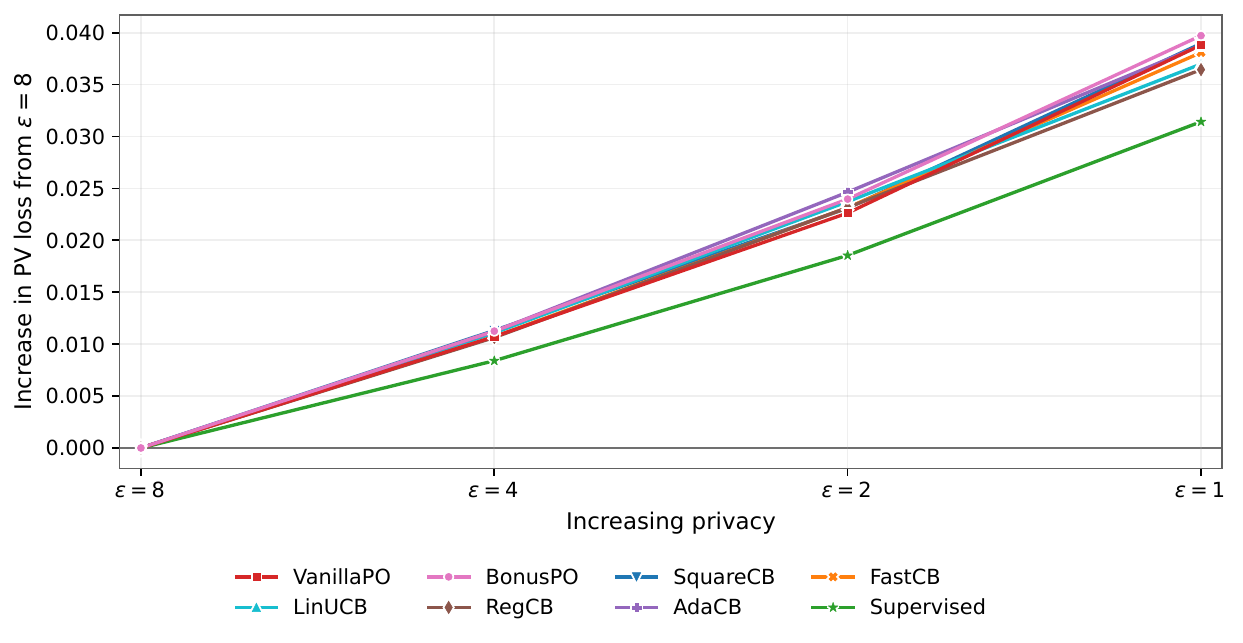}
    \caption{Increase in final progressive-validation loss relative to the $(8, 10^{-5})$-DP baseline, aggregated over 100 \textsc{OpenML} datasets.}
    \label{fig:loss-increase}
\end{figure}

Overall, the aggregate results support the main empirical conclusion of the appendix: \algname{BonusPO} is the strongest bandit method across privacy regimes, followed by a middle group consisting of \algname{VanillaPO}, \algname{FastCB}, and \algname{SquareCB}, while \algname{LinUCB}, \algname{RegCB}, and \algname{AdaCB} are consistently less competitive. Importantly, strengthening the privacy requirement does not substantially alter this relative ordering, consistent with the patterns in Figures~\ref{fig:heatmaps} and \ref{fig:loss-increase}.

\subsubsection{Results on selected datasets}
\label{app:selected-results}

Finally, we examine in greater detail the three \textsc{OpenML} datasets introduced in Section~\ref{sec:experiments}, 901, 1489, and 41027, together with three additional datasets, 1462, 1560, and 44743. Figures~\ref{fig:selected-datasets}, \ref{fig:more-datasets}, and \ref{fig:additional-datasets} show progressive-validation classification loss over rounds, averaged over 10 seeds. For each algorithm and privacy regime, we report the best configuration from the hyperparameter grid.

Several consistent trends emerge. First, performance degrades gracefully as privacy becomes stronger: on all six datasets, increasing privacy raises the loss but does not qualitatively change the learning behavior. Second, the supervised baseline performs best overall, while \algname{BonusPO} is the strongest bandit method and often comes closest to supervised performance.

The separation between methods is clearest on datasets 901, 1462, 41027, 44743. In the non-private regime, most algorithms are relatively close, but under stronger privacy they split into three groups: the supervised baseline, with \algname{BonusPO} closest to it; a middle group consisting of \algname{VanillaPO}, \algname{SquareCB}, and \algname{FastCB}; and a lower-performing group consisting of \algname{LinUCB}, \algname{RegCB}, and \algname{AdaCB}. Datasets 1489 and 1560 show the same pattern more weakly: the algorithms remain closer together across privacy levels, but the same ordering begins to emerge as privacy becomes more stringent.

\clearpage
\begin{figure}[p]
    \centering

    \begin{subfigure}{\linewidth}
        \centering
        \includegraphics[width=\linewidth]{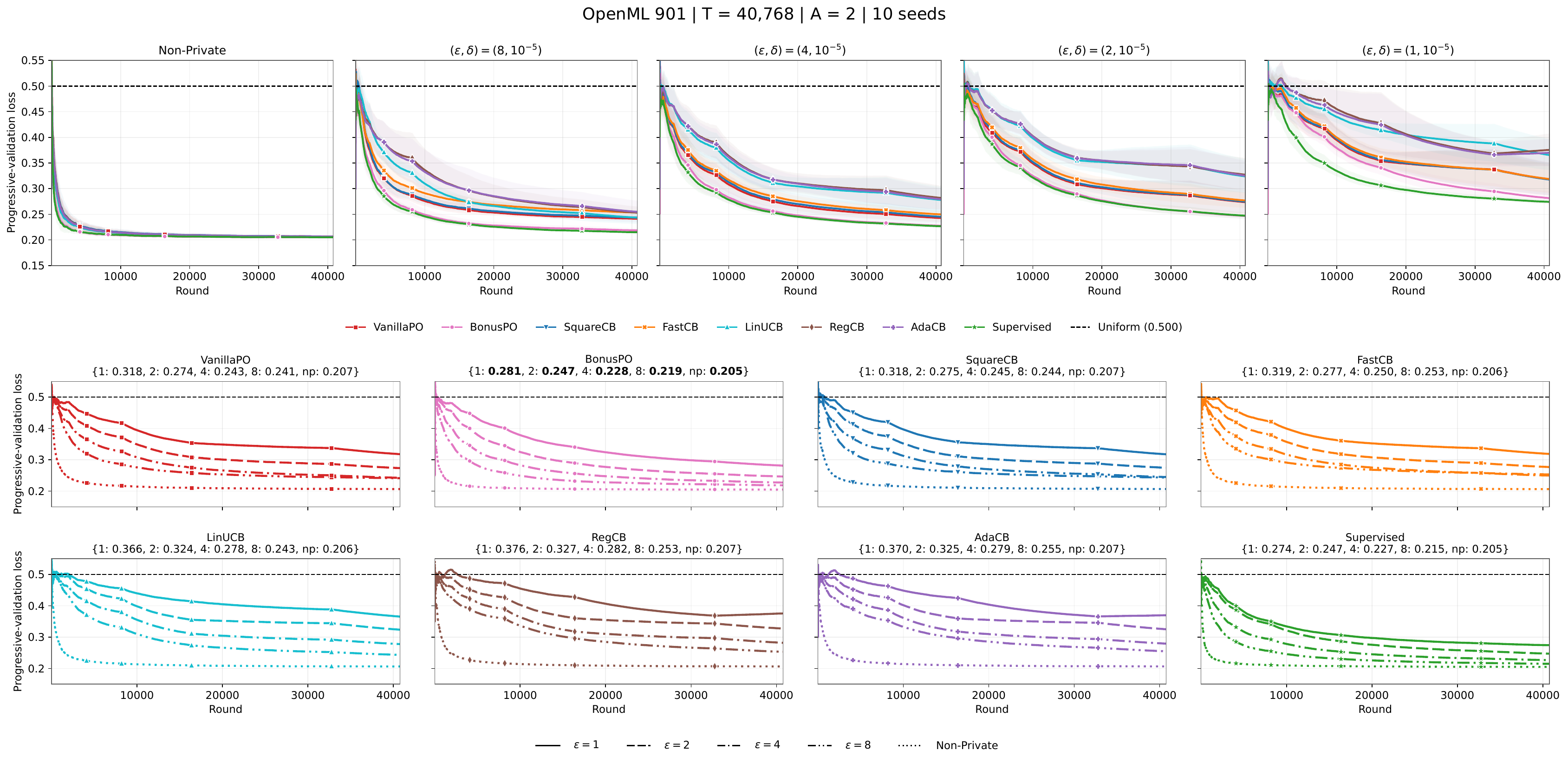}
        \caption{\textsc{OpenML} 901.}
        \label{fig:selected-901}
    \end{subfigure}

    \vspace{0.75em}

    \begin{subfigure}{\linewidth}
        \centering
        \includegraphics[width=\linewidth]{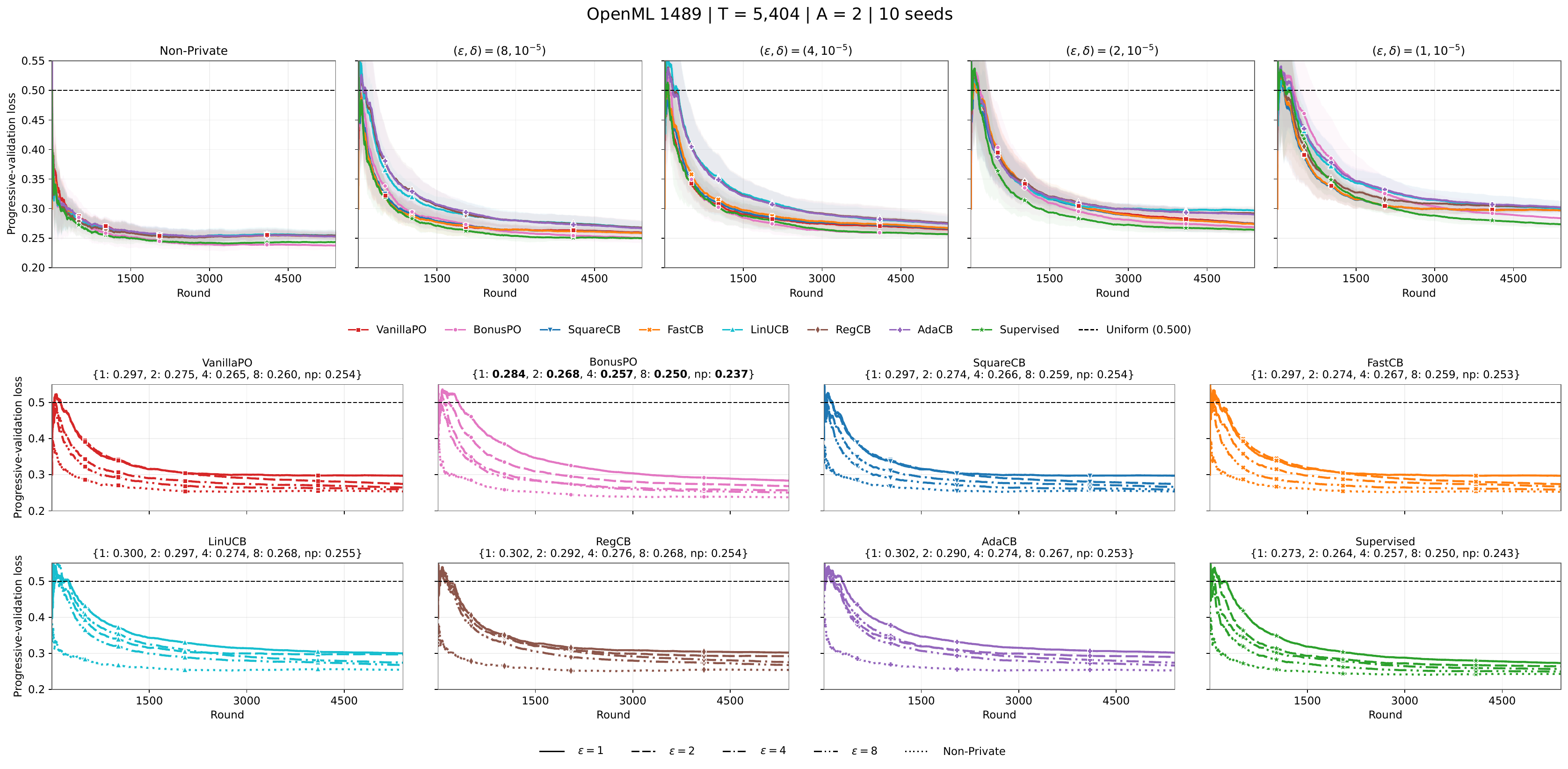}
        \caption{\textsc{OpenML} 1489.}
        \label{fig:selected-1489}
    \end{subfigure}

    \caption{Detailed comparison on \textsc{OpenML} datasets 901 and 1489 from Section~\ref{sec:experiments}. Curves show progressive-validation loss over rounds, averaged over 10 seeds, using the best hyperparameter configuration for each algorithm and privacy regime. Within each subfigure, the top row compares algorithms at fixed privacy regimes, while the remaining rows show the effect of privacy on each algorithm.}
    \label{fig:selected-datasets}
\end{figure}

\clearpage
\begin{figure}[p]
    \centering

    \begin{subfigure}{\linewidth}
        \centering
        \includegraphics[width=\linewidth]{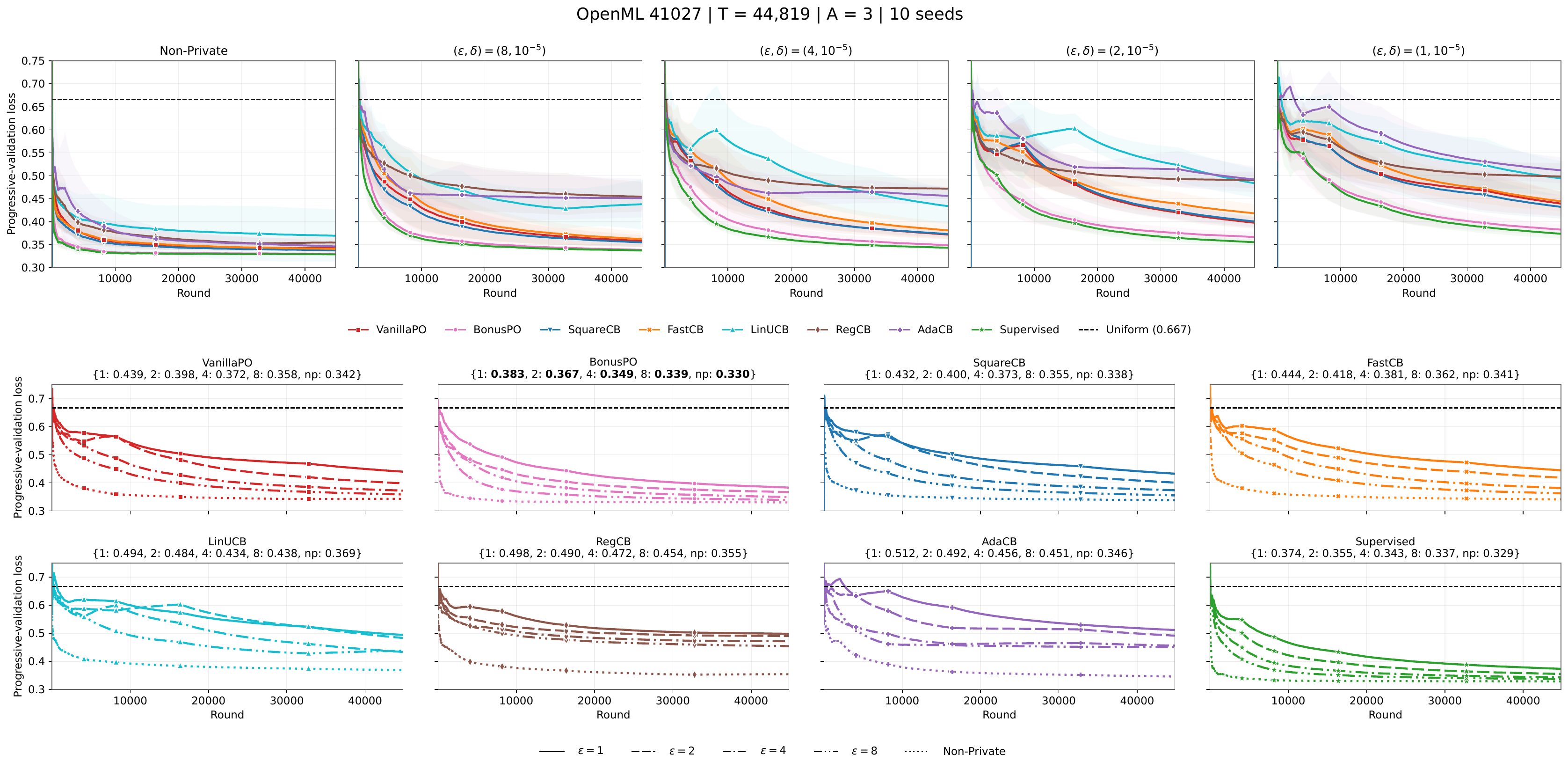}
        \caption{\textsc{OpenML} 41027.}
        \label{fig:selected-41027}
    \end{subfigure}

    \vspace{0.75em}

    \begin{subfigure}{\linewidth}
        \centering
        \includegraphics[width=\linewidth]{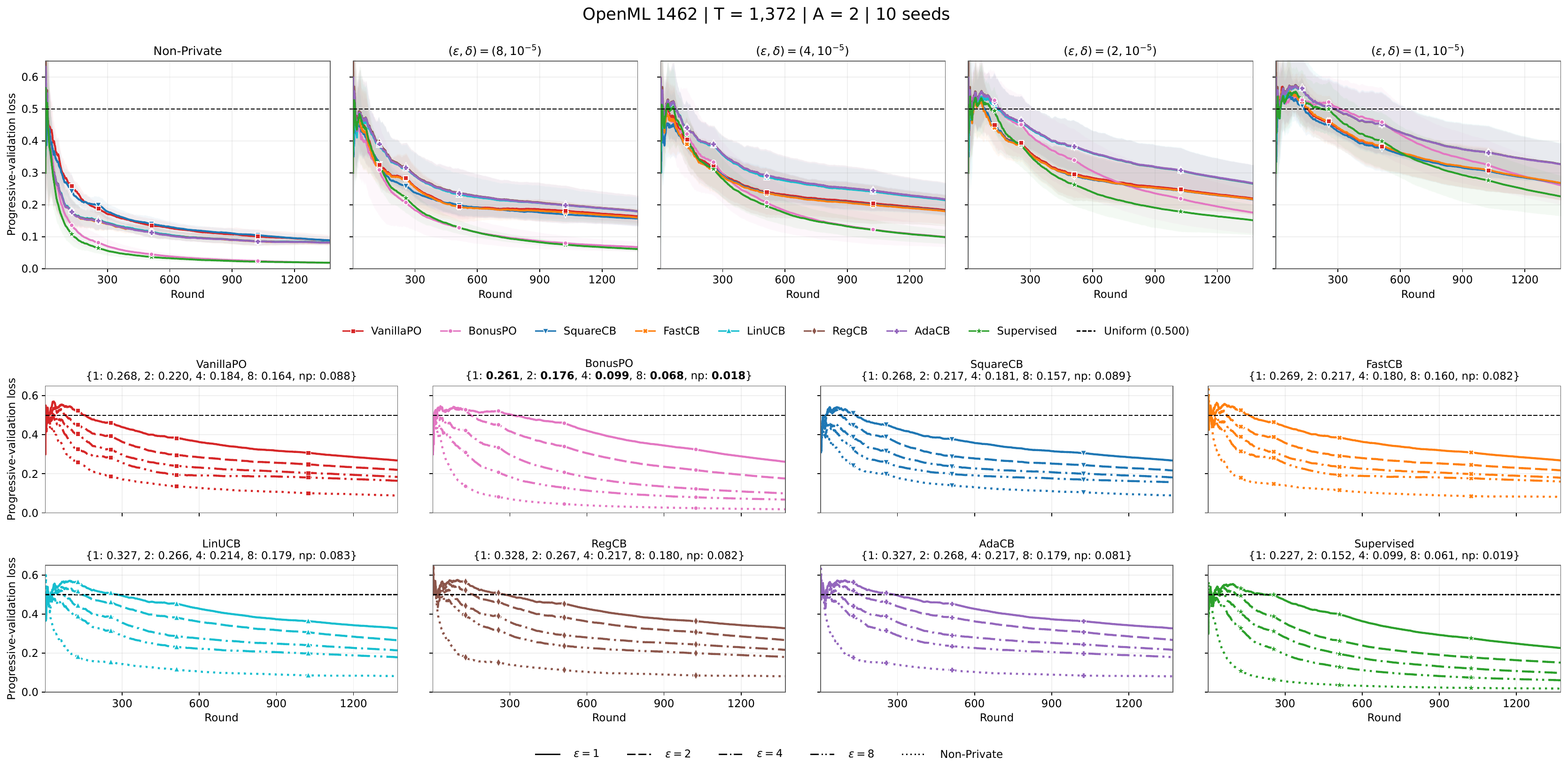}
        \caption{\textsc{OpenML} 1462.}
        \label{fig:selected-1462}
    \end{subfigure}

    \caption{Detailed comparison on \textsc{OpenML} dataset 41027 from Section~\ref{sec:experiments} and the additional \textsc{OpenML} dataset 1462. Curves show progressive-validation loss over rounds, averaged over 10 seeds, using the best hyperparameter configuration for each algorithm and privacy regime. The layout mirrors Figure~\ref{fig:selected-datasets}.}
    \label{fig:more-datasets}
\end{figure}

\clearpage
\begin{figure}[p]
    \centering

    \begin{subfigure}{\linewidth}
        \centering
        \includegraphics[width=\linewidth]{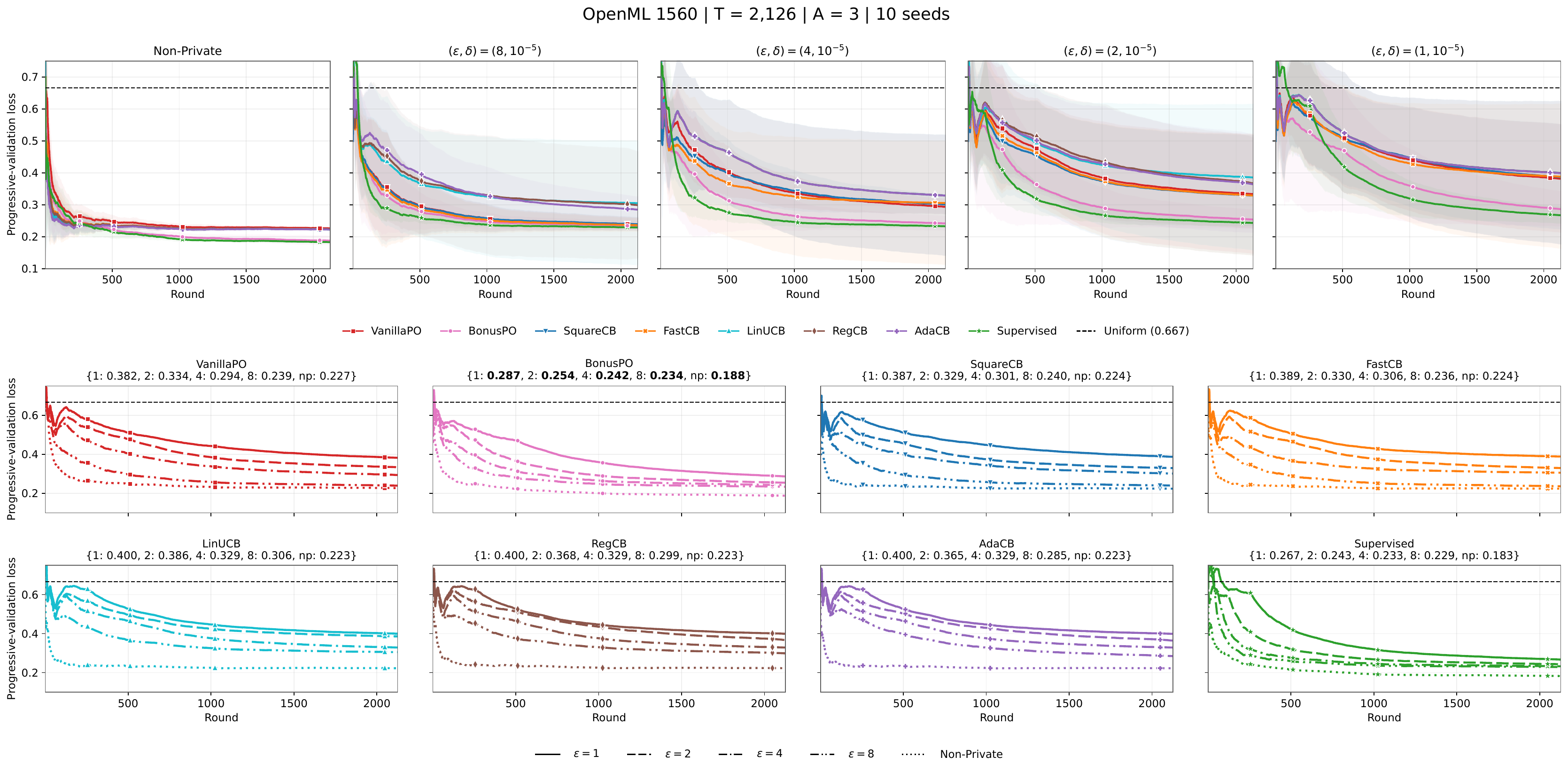}
        \caption{\textsc{OpenML} 1560.}
        \label{fig:selected-1560}
    \end{subfigure}

    \vspace{0.75em}

    \begin{subfigure}{\linewidth}
        \centering
        \includegraphics[width=\linewidth]{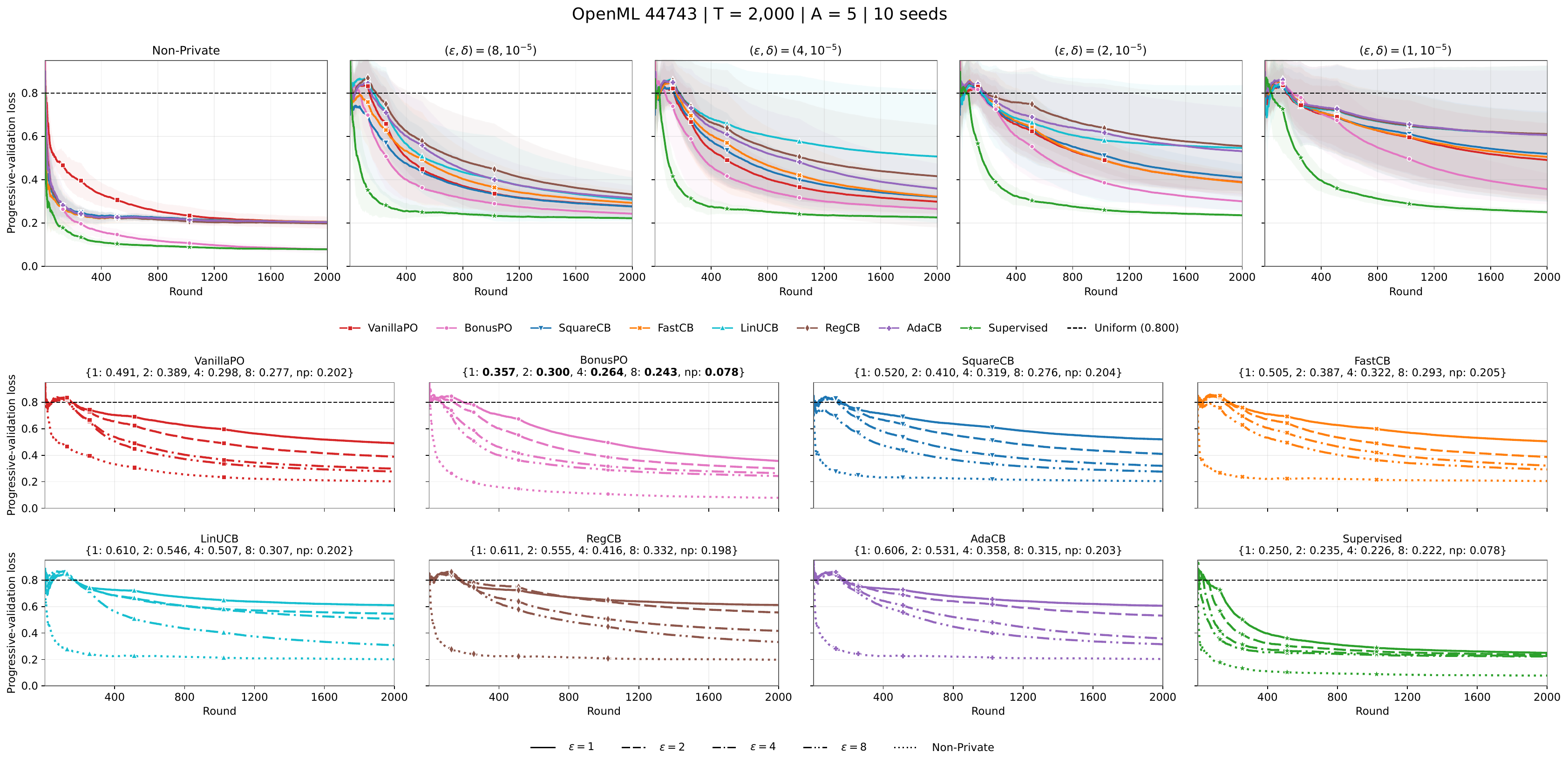}
        \caption{\textsc{OpenML} 44743.}
        \label{fig:selected-44743}
    \end{subfigure}

    \caption{Detailed comparison on two additional \textsc{OpenML} datasets, 1560 and 44743. Curves show progressive-validation loss over rounds, averaged over 10 seeds, using the best hyperparameter configuration for each algorithm and privacy regime. The layout mirrors Figure~\ref{fig:selected-datasets}.}
    \label{fig:additional-datasets}
\end{figure}
\clearpage

\end{document}